\newcites{appendix}{References}
\def \rmd {\mathrm{d}}
\title{Trans-Dimensional Generative Modeling \\ via Jump Diffusion Models}
\newcommand{\Affiliation}{%
\end{tabular}\\\begin{tabular}[t]{c}\ignorespaces%
}
\author{%
  Andrew Campbell$^1$
  \And 
  William Harvey$^2$
  \And
  Christian Weilbach$^2$
  \AND
  Valentin De Bortoli$^3$
  \And
  Tom Rainforth$^1$
  \And
  Arnaud Doucet$^1$
  \Affiliation\\
  $^1$Department of Statistics, University of Oxford, UK \\
  $^2$ Department of Computer Science, University of British Columbia, Vancouver, Canada\\
  $^3$CNRS ENS Ulm, Paris, France\\
  \texttt{ \{campbell, rainforth, doucet\}@stats.ox.ac.uk}\\
  \texttt{ \{wsgh, weilbach\}@cs.ubc.ca}\\
  \texttt{valentin.debortoli@gmail.com}
}
\begin{document}

\maketitle

\newtheorem{proposition}{Proposition}
\newtheorem{lemma}{Lemma}
\newtheorem{assumption}{Assumption}

\newcommand{\x}{\mathbf{x}}
\newcommand{\X}{\mathbf{X}}
\newcommand{\y}{\mathbf{y}}
\newcommand{\Y}{\mathbf{Y}}
\newcommand{\Z}{\mathbf{Z}}
\newcommand{\z}{\mathbf{z}}
\newcommand{\w}{\mathbf{w}}
\newcommand{\J}{\mathbf{J}}
\newcommand{\B}{\mathbf{B}}
\newcommand{\brown}{\mathbf{w}}
\newcommand{\drift}{\mathbf{b}}
\newcommand{\diffcoeff}{g}

\newcommand{\forwardrate}{%
  \vbox{\offinterlineskip\ialign{%
    \hfil##\hfil\cr
    $\rightarrow$\cr
    \noalign{\kern -.3ex}
    $\lambda$\cr
}}}

\newcommand{\forwarddrift}{\overrightarrow{\drift}}
\newcommand{\forwarddiffcoeff}{\overrightarrow{\diffcoeff}}
\newcommand{\backwardrate}{\overleftarrow{\lambda}}
\newcommand{\backwarddrift}{\overleftarrow{\drift}}
\newcommand{\backwarddiffcoeff}{\overleftarrow{g}}
\newcommand{\ftk}{\overrightarrow{K}} 
\newcommand{\btk}{\overleftarrow{K}} 

\newcommand{\ybase}{\y^{\textup{base}}}
\newcommand{\Ybase}{\Y^{\textup{base}}}
\newcommand{\yadd}{\y^{\textup{add}}}
\newcommand{\xbase}{\x^{\textup{base}}}
\newcommand{\xadd}{\x^{\textup{add}}}
\newcommand{\autonet}{A}
\newcommand{\pref}{p_{\textup{ref}}}
\newcommand{\pdata}{p_{\textup{data}}}
\newcommand{\dimension}[1]{\mathbb{D}_{#1}}
\DeclarePairedDelimiter{\norm}{\lVert}{\rVert}
\newcommand{\trs}{\mathbf{z}} 
\newcommand{\delidxdist}{K^{\textup{del}}}

\newcommand{\tsonts}[1]{%
  \smash{#1}%
}

\begin{abstract}
We propose a new class of generative models that naturally handle data of varying dimensionality by jointly modeling the state and dimension of each datapoint. The generative process is formulated as a jump diffusion process that makes jumps between different dimensional spaces. We first define a dimension destroying forward noising process, before deriving the dimension creating time-reversed generative process along with a novel evidence lower bound training objective for learning to approximate it.
Simulating our learned approximation to the time-reversed generative process then provides an effective way of sampling data of varying dimensionality by jointly generating state values and dimensions. 
We demonstrate our approach on molecular and video datasets of varying dimensionality, reporting better compatibility with test-time diffusion guidance imputation tasks and improved interpolation capabilities versus fixed dimensional models that generate state values and dimensions separately.
\end{abstract}

\section{Introduction}

Generative models based on diffusion processes \cite{sohl2015deep, ho2020denoising, song2020score} have become widely used in solving a range of problems including text-to-image generation \cite{saharia2022photorealistic, ramesh2022hierarchical}, audio synthesis \cite{kong2020diffwave} and protein design \cite{watson2022broadly}. These models define a forward noising diffusion process that corrupts data to noise and then learn the 
corresponding time-reversed backward generative process that generates novel datapoints from noise.

In many applications, for example generating novel molecules \cite{hoogeboom2022equivariant} or videos \cite{ho2022video, ho2022imagen}, the dimension of the data can also vary. For example, a molecule can contain a varying number of atoms and a video can contain a varying number of frames. When defining a generative model over these data-types, it is therefore necessary to model the number of dimensions along with the raw values of each of its dimensions (the state). Previous approaches to modeling such data have relied on first sampling the number of dimensions from the empirical distribution obtained from the training data, and then sampling data using a fixed dimension diffusion model (FDDM) conditioned on this number of dimensions \cite{hoogeboom2022equivariant}. For conditional modeling, where the number of dimensions may depend on the observations, this approach does not apply and we are forced to first train an auxiliary model that predicts the number of dimensions given the observations \cite{igashov2022equivariant}.

This approach to trans-dimensional generative modeling is fundamentally limited due to the complete separation of dimension generation and state value generation. This is exemplified in the common use case of conditional diffusion guidance. Here, an unconditional diffusion model is trained that end-users can then easily and cheaply condition on their task of interest through guiding the generative diffusion process \cite{song2020score, dhariwal2021diffusion, clip_guided_diffusion, zhang2023towards} without needing to perform any further training or fine-tuning of the model on their task of interest. Since the diffusion occurs in a fixed dimensional space, there is no way for the guidance to appropriately guide the dimension of the generated datapoint. This can lead to incorrect generations for datasets where the dimension greatly affects the nature of the datapoint created, e.g. small molecules have completely different properties to large molecules.

To generate data of varying dimensionality, we propose a jump diffusion based generative model that jointly generates both the dimension and the state. Our model can be seen as a unification of diffusion models which generate all dimensions in parallel with autoregressive type models which generate dimensions sequentially. We derive the model through constructing a forward noising process that adds noise and removes dimensions and a backward generative process that denoises and adds dimensions, see Figure \ref{fig:fig1}. We derive the optimum backward generative process as the time-reversal of the forward noising process and derive a novel learning objective to learn this backward process from data. We demonstrate the advantages of our method on molecular and video datasets finding our method achieves superior guided generation performance and produces more representative data interpolations across dimensions.

\begin{figure}
    \centering
    \includegraphics[width=\textwidth]{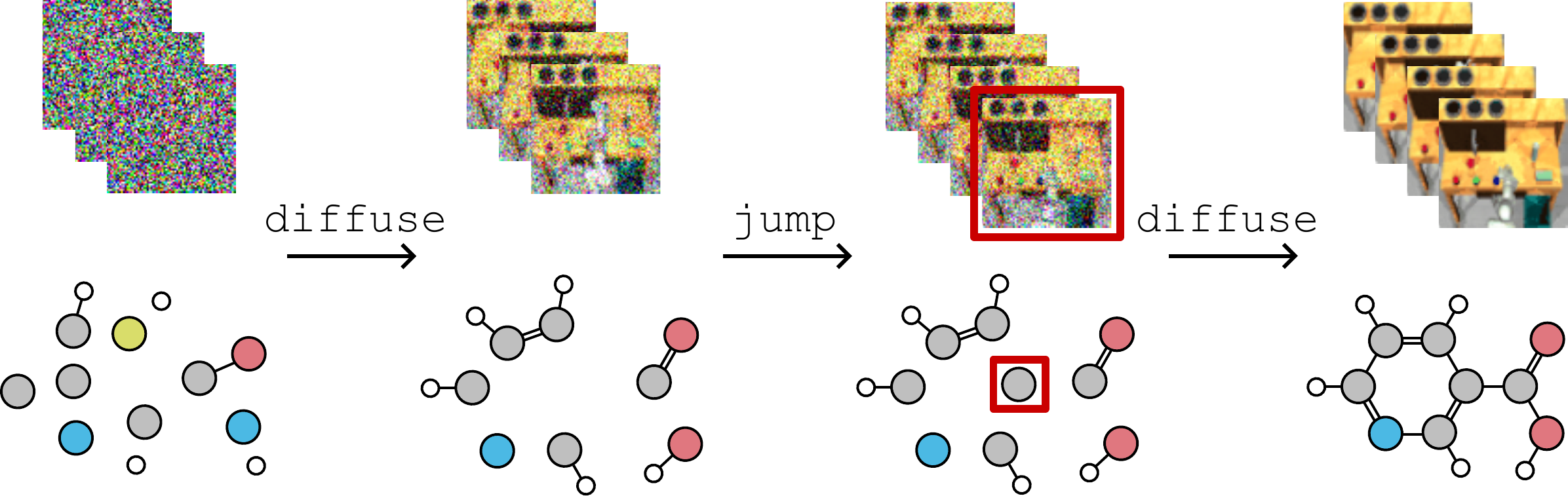}
    \caption{Illustration of the jump diffusion generative process on videos and molecules. The generative process consists of two parts: a diffusion part which denoises the current set of frames/atoms and a jump part which adds on a suitable number of new frames/atoms such that the final generation is a clean synthetic datapoint of an appropriate size.
    }
    \label{fig:fig1}
\end{figure}

\section{Background}
Standard continuous-time diffusion models \cite{song2020score,huang2021variational,karraselucidating2022,benton2022denoising}  define a forward diffusion process through a stochastic differential equation (SDE) where $\x_0 \sim \pdata$ and, for $t>0$,
\begin{equation}\label{eq:forwardnoising}
     \rmd \x_t = \forwarddrift_t(\x_t) \rmd t + \diffcoeff_t \rmd \brown_t,
\end{equation}
where $\x_t \in \mathbb{R}^d$ is the current state, $\forwarddrift_t: \mathbb{R}^d \rightarrow \mathbb{R}^d$ is the drift and $\diffcoeff_t \in \mathbb{R}$ is the diffusion coefficient. $ \rmd \brown_t$ is a Brownian motion increment on $\mathbb{R}^d$. This SDE can be understood intuitively by noting that in each infinitesimal timestep, we move slightly in the direction of the drift $\forwarddrift_t$ and inject a small amount of Gaussian noised governed by $\diffcoeff_t$.
Let $p_t(\x_t)$ denote the distribution of $\x_t$ for the forward diffusion process \eqref{eq:forwardnoising} so that $p_0(\x_0) = \pdata(\x_0)$. $\forwarddrift_t$ and $\diffcoeff_t$ are set such that at time $t=T$, $p_T(\x_T)$ is close to $\pref(\x_T)= \mathcal{N}(\x_T;0, I_d)$; e.g. $\forwarddrift_t(\x_t)=-\frac{1}{2}\beta_t \x_t,~g_t=\sqrt{\beta_t}$ for $\beta_t>0$ \cite{ho2020denoising, song2020score}.

The time-reversal of the forward diffusion \eqref{eq:forwardnoising} is also a diffusion \cite{anderson1982reverse, haussmann1986time} which runs backwards in time from $p_T(\x_T)$ to $p_0(\x_0)$ and satisfies the following reverse time SDE
\begin{equation}
    \rmd \x_t = \backwarddrift_t(\x_t) \rmd t + \diffcoeff_t \rmd \hat{\brown}_t,
\end{equation}
where $\backwarddrift_t(\x_t) = \forwarddrift_t(\x_t) - \diffcoeff_t^2 \nabla_{\x_t} \log p_t(\x_t)$, $\rmd t$ is a negative infinitesimal time step and $\rmd \hat{\brown}_t$ is a Brownian motion increment when time flows backwards. 
Unfortunately, both the terminal distribution, $p_T(\x_T)$, and the score, $\nabla_{\x_t} \log p_{t}(\x_t)$, are unknown in practice.
A generative model is obtained by approximating $p_T$ with $\pref$ and learning an approximation $s^\theta_{t}(\x_t)$ to $\nabla_{\x_t} \log p_{t}(\x_t)$ typically using denoising score matching \cite{vincent2011connection}, i.e. 
\begin{equation}
    \underset{\theta}{\text{min}} \quad \mathbb{E}_{\mathcal{U}(t; 0, T) p_{0,t}(\x_0, \x_t)} [ \norm{s^\theta_t(\x_t) - \nabla_{\x_t} \log p_{t|0}(\x_t | \x_0)}^2 ].
    \label{eq:dsm}
\end{equation}
For a flexible model class, $s^\theta$, we get $s^\theta_t(\x_t) \approx \nabla_{\x_t} \log p_t(\x_t)$ at the minimizing parameter.

\section{Trans-Dimensional Generative Model}
Instead of working with fixed dimension datapoints, we will instead assume our datapoints consist of a variable number of components. A datapoint $\X$ consists of $n$ components each of dimension $d$. For ease of notation, each datapoint will explicitly store both the number of components, $n$, and the state values, $\x$, giving $\X = (n, \x)$. Since each datapoint can have a variable number of components from $n=1$ to $n=N$, our overall space that our datapoints live in is the union of all these possibilities, $\X \in \mathcal{X} = \bigcup_{n=1}^N \{n \} \times \mathbb{R}^{nd}$. For example, for a varying size point cloud dataset, components would refer to points in the cloud, each containing $(x,y,z)$ coordinates giving $d=3$ and the maximum possible number of points in the cloud is $N$.

Broadly speaking, our approach will follow the same framework as previous diffusion generative models. We will first define a forward noising process that both corrupts state values with Gaussian noise and progressively deletes dimensions.
We then learn an approximation to the time-reversal giving a backward generative process that simultaneously denoises whilst also progressively adding dimensions back until a synthetic datapoint of appropriate dimensionality has been constructed.

\subsection{Forward Process}
\label{sec:jump-diff-proc}

Our forward and backward processes will be defined through jump diffusions. A jump diffusion process has two components, the diffusion part and the jump part. Between jumps, the process evolves according to a standard SDE. When a jump occurs, the process transitions to a different dimensional space with the new value for the process being drawn from a transition kernel $K_t(\Y | \X): \mathcal{X} \times \mathcal{X} \rightarrow  \mathbb{R}_{\geq 0}$. Letting $\Y = (m, \y)$, the transition kernel satisfies $\sum_m \int_\y K_t(m, \y | \X) \rmd \y = 1$ and $\int_\y K_t(m=n, \y | \X) \rmd \y = 0$. The rate at which jumps occur (jumps per unit time) is given by a rate function $\lambda_t(\X): \mathcal{X} \rightarrow \mathbb{R}_{\geq 0}$. For an infinitesimal timestep $\rmd t$, the jump diffusion can be written as
\begin{align}
    \textbf{Jump} \hspace{1cm} & \X_t' = \begin{cases}
        \X_t & \text{with probability } 1 - \lambda_t(\X_t) \rmd t \\
        \Y \sim K_t( \Y |\X_t) & \text{with probability } \lambda_t(\X_t) \rmd t
    \end{cases}  \\
    \textbf{Diffusion} \hspace{1cm} &\x_{t+ \rmd t} = \x'_t + \drift_t(\X_t') \rmd t + \diffcoeff_t \rmd \brown_t \hspace{1cm} n_{t+\rmd t} = n_t'
\end{align}
with $\X_t \triangleq (n_t, \x_t)$ and $\X_{t+\rmd t} \triangleq (n_{t+\rmd t}, \x_{t + \rmd t})$ and $\rmd \brown_t$ being a Brownian motion increment on $\mathbb{R}^{n_t'd}$. We provide a more formal definition in Appendix \ref{sec:Proofs}.

With the jump diffusion formalism in hand, we can now construct our forward noising process. We will use the diffusion part to corrupt existing state values with Gaussian noise and the jump part to destroy dimensions. For the diffusion part, we use the VP-SDE introduced in \cite{ho2020denoising, song2020score} with $\forwarddrift_t(\X) = -\frac{1}{2}\beta_t \x$ and $\forwarddiffcoeff_t = \sqrt{\beta_t}$ with $\beta_t \geq 0$.

When a jump occurs in the forward process, one component of the current state will be deleted.
For example, one point in a point cloud or a single frame in a video is deleted. The rate at which these deletions occur is set by a user-defined forward rate $\smash{\forwardrate_t}(\X)$. To formalize the deletion, we need to introduce some more notation. We let $\delidxdist(i | n)$ be a user-defined distribution over which component of the current state to delete.
We also define $\text{del}: \mathcal{X} \times \mathbb{N} \rightarrow \mathcal{X}$ to be the deletion operator that deletes a specified component. Specifically, $(n-1, \y) = \text{del}((n, \x), i)$ where $\y \in \mathbb{R}^{(n-1)d}$ has the same values as $\x \in \mathbb{R}^{nd}$ except for the $d$ values corresponding to the $i$th component which have been removed. 
We can now define the forward jump transition kernel as $\ftk_t(\Y | \X) = \sum_{i=1}^n \delidxdist (i | n) \updelta_{\text{del}(\X, i)}(\Y)$. We note that only one component is ever deleted at a time meaning $\ftk_t(m, \y | \X) = 0$ for $m \neq n - 1$. Further, the choice of $\delidxdist(i | n)$ will dictate the behaviour of the reverse generative process. If we set $\delidxdist(i | n) = \mathbb{I}\{i=n\}$ then we only ever delete the final component and so in the reverse generative direction, datapoints are created additively, appending components onto the end of the current state. Alternatively, if we set $\delidxdist(i | n) = 1/n$ then components are deleted uniformly at random during forward corruption and in the reverse generative process, the model will need to pick the most suitable location for a new component from all possible positions.

The forward noising process is simulated from $t=0$ to $t=T$ and should be such that at time $t=T$, the marginal probability $p_t(\X)$ should be close to a reference measure $\pref(\X)$ that can be sampled from. 
We set $\pref(\X) = \mathbb{I} \{ n = 1\} \mathcal{N}(\x; 0, I_{d})$ where $\mathbb{I} \{ n = 1\}$ is $1$ when $n=1$ and $0$ otherwise. To be close to $\pref$, for the jump part, we set $\forwardrate_t$ high enough such that at time $t=T$ there is a high probability that all but one of the components in the original datapoint have been deleted. For simplicity, we also set $\forwardrate_t$ to depend only on the current dimension $\forwardrate_t(\X) = \forwardrate_t(n)$ with $\forwardrate_t(n=1) = 0$ so that the forward process stops deleting components when there is only $1$ left. In our experiments, we demonstrate the trade-offs between different rate schedules in time. For the diffusion part, we use the standard diffusion $\beta_t$ schedule \cite{ho2020denoising, song2020score} so that we are close to $\mathcal{N}(\x; 0, I_{d})$.

\subsection{Backward Process}

The backward generative process will simultaneously denoise and add dimensions back in order to construct the final datapoint. It will consist of a backward drift $\backwarddrift_t(\X)$, diffusion coefficient $\backwarddiffcoeff_t$, rate $\backwardrate_t(\X)$ and transition kernel $\btk_t(\Y | \X)$. We would like these quantities to be such that the backward process is the time-reversal of the forward process. In order to find the time-reversal of the forward process, we must first introduce some notation to describe $\btk_t(\Y | \X)$.
$\btk_t(\Y | \X)$ should undo the forward deletion operation. Since $\ftk_t(\Y | \X)$ chooses a component and then deletes it, $\btk_t(\Y | \X)$ will need to generate the state values for a new component, decide where the component should be placed and then insert it at this location. 
Our new component will be denoted $\yadd \in \mathbb{R}^{d}$. 
The insertion operator is defined as $\text{ins}: \mathcal{X} \times \mathbb{R}^{d} \times \mathbb{N} \rightarrow \mathcal{X}$. It takes in the current value $\X$, the new component $\yadd$ and an index $i \in \{1, \dots, n+1\}$ and inserts $\yadd$ into $\X$ at location $i$ such that the resulting value $\Y = \text{ins}(\X, \yadd, i)$ has $\text{del}(\Y, i) = \X$.
We denote the joint conditional distribution over the newly added component and the index at which it is inserted as $\autonet_t(\yadd, i | \X)$.
We therefore have $\btk_t(\Y | \X) = \int_{\yadd} \sum_{i=1}^{n+1}  \autonet_t(\yadd, i | \X) \updelta_{\text{ins}(\X, \yadd, i)}(\Y) \rmd \yadd$. Noting that only one component is ever added at a time, we have $\btk_t(m, \y | \X) = 0$ for $m \neq n+1$. 

This backward process formalism can be seen as a unification of diffusion models with autoregressive models. The diffusion part $\backwarddrift_t$ denoises the current set of components in parallel, whilst the autoregressive part $\autonet_t(\yadd, i | \X)$ predicts a new component and its location. $\backwardrate_t(\X)$ is the glue between these parts controlling when and how many new components are added during generation.

\begin{table}
\caption{Summary of forward and parameterized backward processes}
\label{tab:forward_and_backward}
\centering
\small
\begin{tabular}{@{}ccccc@{}}
\toprule
Direction & $\drift_t$ & $g_t$ & $\lambda_t(\X)$ & $K_t(\Y | \X)$ \\ \midrule
\textbf{Forward} & $-\frac{1}{2} \beta_t \x$ & $\sqrt{\beta_t}$ & $\forwardrate_t(n)$ & $\sum_{i=1}^n \delidxdist(i | n) \updelta_{\text{del}(\X, i)}(\Y)$ \\
\textbf{Backward} & $ -\frac{1}{2} \beta_{t} \x - s_{t}^\theta(\X)$ & $\sqrt{\beta_{t}}$ & $\backwardrate_t^\theta(\X)$ & $\int_{\yadd} \sum_{i=1}^{n+1}  \autonet^\theta_t(\yadd, i | \X) \updelta_{\text{ins}(\X, \yadd, i)}(\Y) \rmd \yadd$ \\ \bottomrule
\end{tabular}
\end{table}

We now give the optimum values for $\backwarddrift_t(\X)$,  $\backwarddiffcoeff_t$, $\backwardrate_t(\X)$ and $A_t(\yadd, i | \X)$ such that the backward process is the time-reversal of the forward process.
\begin{proposition}
\label{prop:time_reversal}
The time reversal of a forward jump diffusion process given by drift $\forwarddrift_t$, diffusion coefficient $\forwarddiffcoeff_t$, rate $\forwardrate_t(n)$ and transition kernel $\sum_{i=1}^n \delidxdist(i | n) \updelta_{\textup{del}(\X, i)}(\Y)$ is given by a jump diffusion process with drift $\backwarddrift_t^*(\X)$, diffusion coefficient $\backwarddiffcoeff_t^*$, rate $\backwardrate_t^*(\X)$ and transition kernel $\int_{\yadd} \sum_{i=1}^{n+1}  \autonet^*_t(\yadd, i | \X) \updelta_{\textup{ins}(\X, \yadd, i)}(\Y) \rmd \yadd$ as defined below
\begin{align}
    &\backwarddrift_{t}^*(\X) = \forwarddrift_{t}(\X) - \nabla_{\x} \log p_{t}(\X),~~\backwarddiffcoeff_{t}^* = \forwarddiffcoeff_{t},\\
    & \backwardrate_{t}^*(\X) = \forwardrate_{t}(n+1) \frac{ \sum_{i=1}^{n+1} \delidxdist(i | n+1) \int_{\yadd} p_{t}( \textup{ins}(\X, \yadd, i)) \rmd \yadd }{p_{t}(\X)},\\
    & \autonet_{t}^*( \yadd, i | \X) \propto p_{t}(\textup{ins}(\X, \yadd, i) ) \delidxdist(i| n+1).
\end{align}
\end{proposition}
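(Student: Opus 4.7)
My strategy is to exploit the fact that the generator of a jump diffusion decomposes additively into a diffusion part and a jump part, so I can treat the two parts independently and then combine. The guiding principle is that the forward process and its candidate time-reversal should induce the same marginal $p_t(\X)$, equivalently that their Fokker--Planck (Kolmogorov forward) equations coincide. Concretely, I would write $\partial_t p_t = \mathcal{L}_t^* p_t$ for the forward generator and check that the same $p_t$ satisfies the adjoint equation $-\partial_t p_t = \bar{\mathcal{L}}_t^* p_t$ for the candidate reverse generator $\bar{\mathcal{L}}_t$ built from $\backwarddrift_t^*, \backwarddiffcoeff_t^*, \backwardrate_t^*, \btk_t^*$.

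\textbf{Diffusion part.} Since the dimension $n_t$ is constant between jumps, on each slice $\{n\}\times\mathbb{R}^{nd}$ the diffusion part is a standard It\^o SDE. Anderson's time-reversal theorem (as recalled in the background section) immediately gives $\backwarddiffcoeff_t^* = \forwarddiffcoeff_t$ and $\backwarddrift_t^*(\X) = \forwarddrift_t(\X) - \nabla_\x \log p_t(\X)$, which is exactly the stated formula.

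\textbf{Jump part.} Here I would use the ``infinitesimal detailed balance'' identity that characterises the reverse jump kernel: comparing the joint laws of $(\X_t,\X_{t+\mathrm{d}t})$ and $(\X_{t+\mathrm{d}t},\X_t)$ under the forward and reverse processes respectively, and keeping only the $\mathcal{O}(\mathrm{d}t)$ terms, yields the pointwise balance
\begin{equation*}
p_t(\X)\,\lambda_t(\X)\,K_t(\Y|\X) \;=\; p_t(\Y)\,\backwardrate_t(\Y)\,\btk_t(\X|\Y),
\end{equation*}
which is the jump analogue of Anderson's drift correction and can be derived rigorously by matching the two Fokker--Planck equations acting on arbitrary test functions. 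The reverse rate then follows by integrating both sides against $\mathrm{d}\X$: the right-hand side collapses to $p_t(\Y)\,\backwardrate_t(\Y)$ because $\btk_t(\cdot|\Y)$ is a probability kernel, while on the left I substitute $\X = \mathrm{ins}(\Y,\yadd,i)$ to resolve the delta $\updelta_{\mathrm{del}(\X,i)}(\Y)$ inside $K_t$, producing exactly $\forwardrate_t(n+1)\sum_{i=1}^{n+1}\delidxdist(i|n+1)\int_{\yadd} p_t(\mathrm{ins}(\Y,\yadd,i))\,\mathrm{d}\yadd$. This is $\backwardrate_t^*$. Plugging $\backwardrate_t^*$ back into the balance equation and writing $\btk_t^*(\X|\Y) = \int_{\yadd}\sum_{i}\autonet_t^*(\yadd,i|\Y)\,\updelta_{\mathrm{ins}(\Y,\yadd,i)}(\X)\,\mathrm{d}\yadd$, then matching coefficients of the delta masses, yields $\autonet_t^*(\yadd,i|\Y) \propto p_t(\mathrm{ins}(\Y,\yadd,i))\,\delidxdist(i|n+1)$, with the normalisation constant being precisely $p_t(\Y)\backwardrate_t^*(\Y)/\forwardrate_t(n+1)$.

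\textbf{Main obstacle.} The delicate step is making the detailed-balance identity rigorous in the presence of the delta-valued kernel $K_t$ and across states of different dimensionality. I would handle this by testing against a bounded measurable $f:\mathcal{X}\to\mathbb{R}$, writing the jump contribution to $\int f\,\mathcal{L}_t^* p_t\,\mathrm{d}\X$ as the ``flux integral'' $\iint f(\Y)\bigl[\lambda_t(\X)K_t(\Y|\X)p_t(\X) - \lambda_t(\Y)K_t(\X|\Y)p_t(\Y)\bigr]\mathrm{d}\X\,\mathrm{d}\Y$, resolving the deltas via the $\mathrm{ins}/\mathrm{del}$ change of variables (noting that $\mathrm{ins}$ has unit Jacobian in $\yadd$), and verifying the resulting identity is precisely $\int f\,\bar{\mathcal{L}}_t^* p_t\,\mathrm{d}\X$ with the opposite sign. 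The bookkeeping across the dimensions $n$ and $n+1$ is the only really fiddly part; everything else is an application of Anderson's theorem and a standard flux-reversal argument.
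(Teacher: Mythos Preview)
Your proposal is correct and essentially mirrors the paper's own intuitive proof: it invokes Anderson's theorem for the diffusion part, then uses the flux/detailed-balance identity $p_t(\X)\,\forwardrate_t(\X)\,\ftk_t(\Y|\X) = p_t(\Y)\,\backwardrate_t^*(\Y)\,\btk_t^*(\X|\Y)$ for the jump part, integrating to isolate $\backwardrate_t^*$ and substituting back to read off $\autonet_t^*$. Your remark about making the balance equation rigorous via test functions and the $\mathrm{ins}/\mathrm{del}$ change of variables also anticipates the paper's rigorous version, which formalises exactly this flux equation as a lemma and then verifies the reverse generator via the martingale problem.
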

All proofs are given in Appendix~\ref{sec:Proofs}. The expressions for $\backwarddrift_t^*$ and $\backwarddiffcoeff^*_t$ are the same as for a standard diffusion except for replacing $\nabla_{\x} \log p_t(\x)$ with $\nabla_{\x} \log p_t(\X) = \nabla_{\x} \log p_t(\x | n)$ which is simply the score in the current dimension.
The expression for $\backwardrate_t^*$ can be understood intuitively by noting that the numerator in the probability ratio is the probability that at time $t$, given a deletion occurs, the forward process will arrive at $\X$. If this is higher than the raw probability at time $t$ that the forward process is at $\X$ (the denominator) then we should have high $\backwardrate_t^*$ because $\X$ is likely the result of a deletion of a larger datapoint.
Finally the optimum $\autonet^*_t(\yadd, i | \X)$ is simply the conditional distribution of $\yadd$ and $i$ given $\X$ when the joint distribution over $\yadd, i, \X$ is given by $p_{t}(\text{ins}(\X, \yadd, i) ) \delidxdist(i| n+1)$. 

\subsection{Objective for Learning the Backward Process}
The true $\backwarddrift_t^*$, $\backwardrate_t^*$ and $\autonet_t^*$ are unknown so we need to learn approximations to them, $\backwarddrift_t^\theta$, $\backwardrate_t^\theta$ and $\autonet_t^\theta$. Following Proposition \ref{prop:time_reversal}, we set $\backwarddrift_t^\theta(\X) = \forwarddrift_t(\X) - s_t^\theta(\X)$ where $s_t^\theta(\X)$ approximates $\nabla_{\x} \log p_t(\X)$. The forward and parameterized backward processes are summarized in Table \ref{tab:forward_and_backward}. 

Standard diffusion models are trained using a denoising score matching loss which can be derived from maximizing an evidence lower bound on the model probability for $ \mathbb{E}_{\pdata(\x_0)} [ \log p_0^\theta(\x_0) ]$ \cite{song2021maximum}.
We derive here an equivalent loss to learn $s_t^\theta$, $\smash{\backwardrate_t^\theta}$ and $\autonet_t^\theta$ for our jump diffusion process by leveraging the results of \cite{benton2022denoising} and \cite{cheridito2005equivalent}. Before presenting this loss, we first introduce some notation. Our objective for $s_t^\theta(\X_t)$ will resemble denoising score matching \eqref{eq:dsm} but instead involve the conditional score $\nabla_{\x_t} \log p_{t|0}(\X_t | \X_0) = \nabla_{\x_t} \log p_{t|0}(\x_t | \X_0, n_t)$. This is difficult to calculate directly due to a combinatorial sum over the different ways the components of $\X_0$ can be deleted to get to $\X_t$. We avoid this problem by equivalently conditioning on a mask variable $M_t \in \{0, 1\}^{n_0}$ that is 0 for components of $\X_0$ that have been deleted to get to $\X_t$ and 1 for components that remain in $\X_t$. This makes our denoising score matching target easy to calculate: $\nabla_{\x_t} \log p_{t|0}(\x_t | \X_0, n_t, M_t) = \frac{\sqrt{\alpha_t} M_t(\x_0) - \x_t}{1 - \alpha_t}$ where $\alpha_t = \text{exp} ( - \int_{0}^t \beta(s) \rmd s )$ \cite{song2020score}. Here $M_t(\x_0)$ is the vector removing any components in $\x_0$ for which $M_t$ is $0$, thus $M_t(\x_0)$ and $\x_t$ have the same dimensionality. We now state our full objective.

\begin{proposition}
\label{prop:elbo}
For the backward generative jump diffusion process starting at $\pref(\X_T)$ and finishing at $ p_0^\theta(\X_0)$, an evidence lower bound on the model log-likelihood $ \mathbb{E}_{\x_0 \sim \pdata} [ \log p_0^\theta(\x_0) ]$ is given by
\begin{align}
    \label{eq:elbo_line_1}
    \mathcal{L}(\theta) = -\frac{T}{2} \mathbb{E}\Big[& \diffcoeff_t^2 \norm{s_t^\theta(\X_t) - \nabla_{\x_t} \log p_{t|0}(\x_t | \X_0, n_t, M_t)   }^2 \Big] + \\
    \label{eq:elbo_line_2}
    T \mathbb{E} \Big[& - \backwardrate_{t}^\theta(\X_t) + \forwardrate_t (n_t) \log \backwardrate_{t}^\theta(\Y) + \forwardrate_t(n_t) \log \autonet_{t}^\theta(\xadd_t, i | \Y) \Big] + C,
\end{align}
where expectations are with respect to $\mathcal{U}(t; 0, T) p_{0,t}(\X_0, \X_t, M_t) \delidxdist(i | n_t) \updelta_{\textup{del}(\X_t, i)} (\Y)$, $C$ is a constant term independent of $\theta$ and $\X_t = \text{ins}(\Y, \xadd_t, i)$.
This evidence lower bound is equal to the log-likelihood when $\backwarddrift_t^\theta = \backwarddrift_t^*$, $\backwardrate_t^\theta = \backwardrate_t^*$ and $A_t^\theta = A_t^*$.

\end{proposition}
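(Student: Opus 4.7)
\emph{Strategy.} The plan is to derive the ELBO from the standard variational bound $\mathbb{E}_{\pdata}[\log p_0^\theta(\x_0)] \geq -\mathrm{KL}(q_{0:T|\x_0} \| p^\theta_{0:T|\x_0}) + C$, where $q$ is the forward noising path measure started at the data and $p^\theta$ is the parameterized backward generative path measure from $\pref$, both lifted to the common jump diffusion trajectory space $\bigcup_n \{n\} \times \mathbb{R}^{nd}$. The task then reduces to evaluating this KL between two jump diffusion path measures in closed form and absorbing $\theta$-independent pieces into the constant $C$.

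\emph{Decomposition of the path KL.} Using the Radon--Nikodym derivative for equivalent jump diffusion measures from \cite{cheridito2005equivalent}, $\log(dq/dp^\theta)$ splits into a continuous (Girsanov) contribution and a jump compensator-plus-mark contribution. Identifying the backward drift parametrization of Table \ref{tab:forward_and_backward} with the exact time reversal of Proposition \ref{prop:time_reversal}, the continuous part yields the standard denoising-score-matching-like term $\tfrac{1}{2}\mathbb{E}_q[\int_0^T g_t^2 \|s_t^\theta(\X_t) - \nabla_{\x_t} \log p_t(\X_t)\|^2 dt]$. The jump part produces $\mathbb{E}_q[\int_0^T (\smash{\backwardrate^\theta_t}(\X_t) - \smash{\forwardrate_t}(n_t)) dt]$ from the compensator mismatch, plus a sum over forward jumps of the log mark ratio $\log(\smash{\forwardrate_t}(n_t)\, \delidxdist(i|n_t) / (\smash{\backwardrate_t^\theta}(\Y)\, \autonet_t^\theta(\xadd_t, i | \Y)))$. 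Rewriting the jump sum as the compensator integral $\mathbb{E}_q[\int_0^T \smash{\forwardrate_t}(n_t) \sum_i \delidxdist(i|n_t) (\cdot) dt]$ and pushing the $\theta$-independent $\log \forwardrate_t$ and $\log \delidxdist$ pieces into $C$ reproduces the three terms of \eqref{eq:elbo_line_2}.

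\emph{Mask-variable reduction and tightness.} For the score term I would apply the argument of \cite{vincent2011connection, song2021maximum}: expand the square and use that $\nabla_{\x_t} \log p_t(\X_t)$ equals the conditional expectation (given $\X_t$) of $\nabla_{\x_t} \log p_{t|0}(\x_t | \X_0, n_t, M_t)$, so the intractable marginal score can be replaced by the tractable mask-conditioned score at the cost of a $\theta$-independent constant. Conditional on $M_t$, the surviving entries of $\x_t$ are independent OU-diffused images of the corresponding entries of $\x_0$, giving the closed-form Gaussian score stated and sidestepping the combinatorial sum over deletion patterns. Tightness when $s^\theta = \nabla \log p_t$, $\backwardrate_t^\theta = \backwardrate_t^*$ and $\autonet_t^\theta = \autonet_t^*$ is then immediate: the KL vanishes by Proposition \ref{prop:time_reversal}, saturating the bound.

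\emph{Main obstacle.} The principal difficulty is the asymmetric form of the jump terms, which arises because on the common path space each forward deletion from $\X_t$ to $\Y = \textup{del}(\X_t, i)$ corresponds, from the backward process's point of view, to an insertion from $\Y$ with mark $(\xadd_t, i)$. Consequently the log ratio at jumps involves $\smash{\backwardrate_t^\theta}(\Y)$ and $\autonet_t^\theta(\xadd_t, i | \Y)$ evaluated at the pre-insertion state, while the compensator term involves $\smash{\backwardrate_t^\theta}(\X_t)$ evaluated at the pre-deletion state of the forward trajectory. Carefully book-keeping the mark space (insertion location plus inserted component) and verifying the compensator formula on the dimension-changing counting measure --- using the formal jump diffusion setup of Appendix \ref{sec:Proofs} --- is the technical crux of the argument.
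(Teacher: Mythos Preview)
Your approach is correct and takes a genuinely different route from the paper's. The paper derives the ELBO by instantiating the abstract score-matching framework of \cite{benton2022denoising}: it defines the forward and backward generators on the trans-dimensional space, computes the adjoint $\hat{\mathcal{K}}_t^*$, verifies that the Fokker--Planck equation can be recast as $\mathcal{M}\nu + c\nu = 0$ for an auxiliary jump diffusion, finds the function $\alpha_t$ linking $\mathcal{M}$ and $\mathcal{L}$ (Assumptions~1--2 of \cite{benton2022denoising}), and then substitutes into the generic $\mathcal{I}_{\mathrm{ISM}}$ loss, which it splits into diffusion and jump pieces and simplifies separately. Your path-measure KL approach via a Girsanov/compensator decomposition is more direct and is in fact acknowledged by the paper itself in the closing ``Other approaches'' paragraph as a valid alternative that bypasses \cite{benton2022denoising}. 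What the paper's route buys is that tightness comes for free from the identity $\mathrm{Gap} = \mathbb{E}[\int_0^T \Phi(v/\beta)\,\rmd t]$ of \cite{benton2022denoising}; what your route buys is a transparent probabilistic interpretation of every term in \eqref{eq:elbo_line_1}--\eqref{eq:elbo_line_2} as either a Girsanov drift mismatch, a compensator difference, or a log mark-density ratio.

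Two small caveats are worth flagging. First, the result of \cite{cheridito2005equivalent} that you invoke is stated for jump diffusions with jumps in a fixed $\mathbb{R}^d$; the paper explicitly notes that applying it here requires an extension to the one-point-compactification/trans-dimensional setting, so your ``using the formal jump diffusion setup of Appendix~\ref{sec:Proofs}'' should really be spelled out rather than assumed. Second, your tightness argument (``the KL vanishes by Proposition~\ref{prop:time_reversal}'') is slightly too quick: at $\theta = \theta^*$ the backward dynamics match the true time reversal, but the backward process is initialized at $\pref$ rather than $p_T$, so the conditional path KL given $\X_0$ does not literally vanish---the residual is a $\theta$-independent boundary term involving $p_T/\pref$ that must be shown to cancel against the corresponding piece of $C$. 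The paper handles this via the $\Phi$-operator identity of \cite{benton2022denoising}; in your framework you would need to track that boundary term explicitly.
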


We now examine the objective to gain an intuition into the learning signal.
Our first term \eqref{eq:elbo_line_1} is an $L_2$ regression to a target that, as we have seen, is a scaled vector between $\x_t$ and $\sqrt{\alpha_t}M_t(\x_0)$.
As the solution to an $L_2$ regression problem is the conditional expectation of the target, $s_t^\theta(\X_t)$ will learn to predict vectors pointing towards $\x_0$ averaged over the possible correspondences between dimensions of $\x_t$ and dimensions of $\x_0$.
Thus, during sampling, $s_t^\theta(\X_t)$ provides a suitable direction to adjust the current value $\X_t$ taking into account the fact $\X_t$ represents only a noisy subpart of a clean whole $\X_0$.

The second term (\ref{eq:elbo_line_2}) gives a learning signal for $\backwardrate_{t}^\theta$ and $\autonet_{t}^\theta$.
For $\autonet_{t}^\theta$, we simply have a maximum likelihood objective, predicting the missing part of $\X_t$ (i.e.~$\xadd_t$) given the observed part of $\X_t$ (i.e.~$\Y$).
The signal for $\backwardrate_{t}^\theta$ comes from balancing two terms: $-\backwardrate_{t}^\theta(\X_t)$ and $\forwardrate_t(n_t) \log \backwardrate_{t}^\theta(\Y)$ which encourage the value of $\backwardrate_{t}^\theta$ to move in opposite directions. For a new test input $\Z$, $\backwardrate_{t}^\theta(\Z)$'s value needs to trade off between the two terms by learning the relative probability between $\Z$ being the entirety of a genuine sample from the forward process, corresponding to the $\backwardrate_t^\theta(\X_t)$ term in \eqref{eq:elbo_line_2}, or $\Z$ being a substructure of a genuine sample, corresponding to the $\backwardrate_t^\theta(\Y)$ term in \eqref{eq:elbo_line_2}. The optimum trade-off is found exactly at the time reversal $\backwardrate_{t}^*$ as we show in Appendix \ref{sec:ObjTimeRevProof}.

We optimize $\mathcal{L}(\theta)$ using stochastic gradient ascent, generating minibatches by first sampling $t \sim \mathcal{U}(0, T)$, $\X_0 \sim \pdata$ and then computing $\X_t$ from  the forward process. This can be done analytically for the $\forwardrate_t(n)$ functions used in our experiments. We first sample $n_t$ by analytic integration of the dimension deletion Poisson process with time inhomogeneous rate $\forwardrate_t(n)$. We then add Gaussian noise independently to each dimension under $p_{t|0}(\x_t | \X_0, n_t, M_t)$ using a randomly drawn mask variable $M_t$. See Appendix \ref{sec:ApdxTrainingObjective} for further details on the efficient evaluation of our objective.
\subsection{Parameterization}

$s_t^\theta(\X_t)$, $\autonet_t^\theta(\yadd, i | \X_t)$ and $\backwardrate_t^\theta(\X_t)$ will all be parameterized by neural networks. In practice, we have a single backbone network suited to the problem of interest e.g. a Transformer \cite{vaswani2017attention}, an EGNN \cite{satorras2021n} or a UNet \cite{ronneberger2015u} onto which we add prediction heads for $s_t^\theta(\X_t)$, $\autonet_t^\theta(\yadd, i | \X_t)$ and $\backwardrate_t^\theta(\X_t)$. $s_t^\theta(\X_t)$ outputs a vector in $\mathbb{R}^{n_t d}$. $\autonet_t^\theta(\yadd, i | \X_t)$ outputs a distribution over $i$ and mean and standard deviation statistics for a Gaussian distribution over $\yadd$. Finally, having $\backwardrate_t^\theta(\X_t) \in \mathbb{R}_{\geq 0}$ be the raw output of a neural network can cause optimization issues due to the optimum $\backwardrate^*_t$ including a probability ratio which can take on very large values. Instead, we learn a component prediction network $p^\theta_{0|t}(n_0 | \X_t)$ that predicts the number of components in $\X_0$ given $\X_t$. To convert this into $\backwardrate^\theta(\X_t)$, we show in Proposition \ref{prop:backwardrateparam} how the optimum $\backwardrate_t^*(\X_t)$ is an analytic function of the true $p_{0|t}(n_0 | \X_t)$. We then plug $p^\theta_{0|t}(n_0 | \X_t)$ into Proposition \ref{prop:backwardrateparam} to obtain an approximation of $\backwardrate_t^*(\X_t)$.
\begin{proposition} We have
    \label{prop:backwardrateparam}
    \begin{equation}\label{eq:backwardrateparam}
        \backwardrate_{t}^*(\X_t) = \forwardrate_{t}(n_t+1) \sum_{n_0=1}^N \frac{p_{t|0}(n_t + 1 | n_0)}{p_{t|0}(n_t | n_0)} p_{0|t}(n_0 | \X_t),
    \end{equation}
    where $\X_t = (n_t, \x_t)$ and $p_{t|0}(n_t + 1 | n_0)$ and $p_{t|0}(n_t | n_0)$ are both easily calculable distributions from the forward dimension deletion process.
\end{proposition}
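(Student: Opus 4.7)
The plan is to start from Proposition \ref{prop:time_reversal}'s expression for $\backwardrate_t^*(\X_t)$ and transform both its numerator and denominator by marginalising over the initial dimension $n_0$. The key structural fact is that, since $\forwardrate_t$ and $\delidxdist(\cdot | n)$ depend only on $n$, the forward dimension process is a pure-jump Markov chain on $\{1,\dots,N\}$ that evolves independently of the state-value dynamics given the dimension trajectory. Moreover, the induced chain on the surviving subset $S\subseteq[n_0]$ of $\x_0$'s components is itself Markov, with one-step kernel from size $n+1$ to size $n$ given by $\delidxdist(\cdot | n+1)$ acting on positional indices. This yields the factorisation
\[
p_t(\X_t) = \sum_{n_0=1}^N p_0(n_0)\, p_{t|0}(n_t | n_0)\, p_{t|0}(\x_t | n_0, n_t),
\]
together with its analogue at dimension $n_t+1$ for $p_t(\textup{ins}(\X_t,\yadd,i))$.

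Plugging that analogue into the numerator of Proposition \ref{prop:time_reversal} reduces the claim to the consistency lemma
\[
\sum_{i=1}^{n_t+1} \delidxdist(i | n_t+1) \int p_{t|0}(\textup{ins}(\x_t, \yadd, i) | n_0, n_t+1)\, \rmd \yadd = p_{t|0}(\x_t | n_0, n_t).
\]
I would prove this by writing $p_{t|0}(\cdot | n_0, k) = \int p_0(\x_0|n_0) \sum_{S:|S|=k} p(S | n_0, k)\, \mathcal{N}(\cdot;\sqrt{\alpha_t}\,(\x_0)_S,(1-\alpha_t)I)\,\rmd\x_0$, where $(\x_0)_S$ denotes the subvector of $\x_0$ indexed by $S$. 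Because the Gaussian factorises componentwise, integrating out $\yadd$ deletes exactly the $i$-th factor, replacing $(\x_0)_{S'}$ by $(\x_0)_{S' \setminus \{S'_i\}}$. Weighting by $\delidxdist(i|n_t+1)$ and summing over $i$ then realises one step of the subset chain, and the Markov semigroup identity $\sum_{S' \supseteq S} p(S' | n_0, n_t+1)\, \delidxdist(i(S',S)|n_t+1) = p(S | n_0, n_t)$ collapses the expression to $p_{t|0}(\x_t | n_0, n_t)$.

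With the lemma in hand, the numerator becomes $\sum_{n_0} p_0(n_0)\, p_{t|0}(n_t+1 | n_0)\, p_{t|0}(\x_t | n_0, n_t)$. Multiplying and dividing each summand by $p_{t|0}(n_t | n_0)$ and recognising
\[
\frac{p_0(n_0)\, p_{t|0}(n_t | n_0)\, p_{t|0}(\x_t | n_0, n_t)}{p_t(\X_t)} = p_{0|t}(n_0 | \X_t)
\]
by Bayes' rule yields \eqref{eq:backwardrateparam}. The main obstacle is the consistency lemma: one must carefully match the positional indices $i \in \{1,\dots,n_t+1\}$ used by $\delidxdist$ with the elements of $S'$ they delete, and then invoke the Markov property of the subset chain (which relies precisely on $\delidxdist(\cdot | n)$ depending only on $n$) to identify the weighted sum of $(n_t+1)$-subset probabilities with the $n_t$-subset probability $p(S | n_0, n_t)$.
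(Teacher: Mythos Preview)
Your proposal is correct and follows essentially the same route as the paper: start from the time-reversal expression in Proposition~\ref{prop:time_reversal}, marginalise over the initial data, and reduce the computation to a consistency identity proved via the Gaussian factorisation (integrating out $\yadd$ removes one component) together with the one-step Markov recursion for the mask/subset distribution, after which Bayes' rule gives $p_{0|t}(n_0\mid\X_t)$. The only cosmetic difference is that the paper conditions on the full $\X_0=(n_0,\x_0)$ and shows the resulting ratio equals $1$ pointwise in $\x_0$, whereas you fold $\x_0$ into $p_{t|0}(\x_t\mid n_0,k)$ upfront; your lemma's proof in fact establishes the paper's pointwise-in-$\x_0$ identity along the way.
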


\subsection{Sampling}

\begin{wrapfigure}{R}{0.6\textwidth}
    \vspace{-1.4cm}
    \begin{minipage}[t]{0.6\textwidth}
    \small
          \vspace{0pt}  
        \begin{algorithm}[H]
        \SetAlgoNoLine
        \DontPrintSemicolon
        $t \leftarrow T$\;
        $\X \sim \pref(\X) = \mathbb{I}\{ n=1\} \mathcal{N}(\x; 0, I_{d})$\;
        \While{ $t > 0$}{
        \If{$u < \backwardrate_{t}^\theta(\X) \updelta t$~\textup{with~}$u \sim \mathcal{U}(0, 1)$}{
        Sample $\xadd, i \sim \autonet_{t}^\theta(\xadd, i | \X)$ \;
        $\X \leftarrow \text{ins}(\X, \xadd, i)$\;
        }
        $\x \leftarrow \x - \backwarddrift_{t}^\theta(\X) \updelta t + g_{t} \sqrt{\updelta t} \epsilon$ with $\epsilon \sim \mathcal{N}(0, I_{nd})$ \;
        $\X \leftarrow (n, \x)$, $t \leftarrow t - \updelta t$
        }
         \caption{Sampling the Generative Process}
         \label{alg:backwardsampling}
        \end{algorithm}
    \end{minipage}
    \vspace{-1cm}
\end{wrapfigure}
To sample the generative process, we numerically integrate the learned backward jump diffusion process using time-step  $\updelta t$. Intuitively, it is simply the standard continuous time diffusion sampling scheme \cite{song2020score} but at each timestep we check whether a jump has occurred and if it has, sample the new component and insert it at the chosen index as explained by Algorithm \ref{alg:backwardsampling}.

\section{Related Work}

Our method jointly generates both dimensions and state values during the generative process whereas prior approaches \cite{hoogeboom2022equivariant, igashov2022equivariant} are forced to first sample the number of dimensions and then run the diffusion process in this fixed dimension. When diffusion guidance is applied to these unconditional models \cite{weiss2023guided, zhang2023towards}, users need to pick by hand the number of dimensions independent of the conditioning information even though the number of dimensions can be correlated with the conditioning parameter.

Instead of automatically learning when and how many dimensions to add during the generative process, previous work focusing on images \cite{jing2022subspace, zhang2022dimensionality} hand pick dimension jump points such that the resolution of images is increased during sampling and reaches a certain pre-defined desired resolution at the end of the generative process. Further, rather than using any equivalent of $\autonet_t^\theta$, the values for new dimensions are simply filled in with Gaussian noise. These approaches mainly focus on efficiency rather than flexible generation as we do here.

The first term in our learning objective in Proposition \ref{prop:elbo} corresponds to learning the continuous part of our process (the diffusion) and the second corresponds to learning the discrete part of our process (the jumps). The first term can be seen as a trans-dimensional extension of standard denoising score matching \cite{vincent2011connection} whilst the second bears similarity to the discrete space ELBO derived in \cite{campbell2022continuous}.

Finally, jump diffusions also have a long history of use in Bayesian inference, where one aims to draw samples from a trans-dimensional target posterior distribution based on an unnormalized version of its density \cite{grenander1994representations}: an ergodic jump diffusion is designed which admits the target as the invariant distribution \cite{grenander1994representations,phillips1995bayesian,miller1997automatic}. The invariant distribution is not preserved when time-discretizing the process.
However, it was shown in  \cite{green1995reversible,green2003trans} how general jump proposals could be built and how this process could be ``Metropolized'' to obtain a discrete-time Markov process admitting the correct invariant distribution, yielding the popular Reversible Jump Markov Chain Monte Carlo algorithm.
Our setup differs significantly as we only have access to samples in the form of data, not an unnormalized target.

\section{Experiments}

\subsection{Molecules}
We now show how our model provides significant benefits for diffusion guidance and interpolation tasks. We model the QM9 dataset \cite{ruddigkeit2012enumeration, ramakrishnan2014quantum} of 100K varying size molecules. Following \cite{hoogeboom2022equivariant}, we consider each molecule as a 3-dimensional point cloud of atoms, each atom having the features: $(x,y,z)$ coordinates, a one-hot encoded atom type, and an integer charge value. Bonds are inferred from inter-atomic distances. We use an EGNN \cite{satorras2021n} backbone with three heads to predict $\smash{s_{t}^\theta}$, $\smash{p_{0|t}^\theta(n_0 | \X_t)}$, and $\smash{\autonet_t^\theta}$. We uniformly delete dimensions, $K^{\text{del}}(i | n) = 1/n$, and since a point cloud is permutation invariant, $\smash{\autonet_t^\theta(\yadd | \X_t)}$ need only predict new dimension values. We set $\smash{\forwardrate_t}$ to a constant except for $t < 0.1T$, where we set $\smash{\forwardrate_{t < 0.1T} = 0}$. This ensures that all dimensions are added with enough generation time remaining for the diffusion process to finalize all state values. 

We visualize sampling from our learned generative process in Figure \ref{fig:uncond_chain_vis}; note how the process jointly creates a suitable number of atoms whilst adjusting their positions and identities. Before moving on to apply diffusion guidance which is the focus of our experiments, we first verify our unconditional sample quality in Table \ref{tab:uncond_mol} and find we perform comparably to the results reported in \cite{hoogeboom2022equivariant} which use an FDDM. We ablate our choice of $\smash{\forwardrate_t}$ by comparing with setting $\smash{\forwardrate_t}$ to a constant for all $t$ and with setting $\smash{\forwardrate_t}=0$ for $t<0.9T$ (rather than just for $t<0.1T$). We find that the constant $\smash{\forwardrate_t}$ performs worse due to the occasional component being added late in the generation process without enough time for the diffusion process to finalize its value. We find the $\smash{\forwardrate_{t<0.9T} = 0}$ setting to have satisfactory sample quality however this choice of $\smash{\forwardrate_t}$ introduces issues during diffusion guided generation as we see next. Finally, we ablate the parameterization of Proposition \ref{prop:backwardrateparam} by learning $\backwardrate_t^\theta(\X_t) \in \mathbb{R}$ directly as the output of a neural network head. We find that this reduces sample quality due to the more well-behaved nature of the target, $p_{0|t}^\theta(n_0 | \X_t)$ when using Proposition \ref{prop:backwardrateparam}. We note pure autoregressive models perform significantly worse than diffusion based models as found in \cite{hoogeboom2022equivariant}.

\begin{figure}[h]
    \centering
    \includegraphics[width=\textwidth]{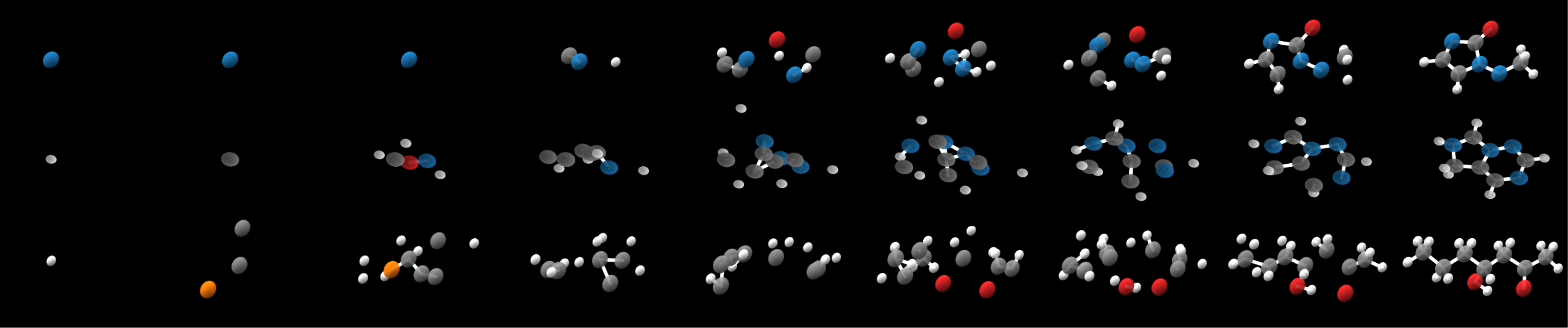}
    \caption{Visualization of the jump-diffusion backward generative process on molecules.}
    \label{fig:uncond_chain_vis}
\end{figure}

\begin{table}[tb]
\caption{Sample quality metrics for unconditional molecule generation. An atom is stable if it has the correct valency whilst a molecule is considered stable if all of its atoms are stable. Molecular validity is measured using RDKit \cite{rdkit}. All methods use 1000 simulation steps and draw 10000 samples.}
\label{tab:uncond_mol}
\centering
\begin{tabular}{@{}lccc@{}}
\toprule
Method & \shortstack{\% Atom \\ Stable ($\uparrow$)} & \shortstack{ \% Molecule \\ Stable ($\uparrow$)} & \% Valid ($\uparrow$) \\ \midrule
FDDM \cite{hoogeboom2022equivariant} & $\mathbf{98.7}$ & $82.0$ & $91.9$  \\ \midrule
TDDM (ours) & $98.3$  & $\mathbf{87.2}$ & $\mathbf{92.3}$ \\
TDDM, const $\smash{\forwardrate_t}$ & $96.7$ & $79.1$ & $86.7$ \\
TDDM, $\smash{\forwardrate_{t<0.9T} = 0}$ & $97.7$ & $82.6$ & $89.4$ \\
TDDM w/o Prop. \ref{prop:backwardrateparam} & $97.0$ & $66.9$ & $87.1$ \\ \bottomrule
\end{tabular}
\end{table}

\subsubsection{Trans-Dimensional Diffusion Guidance}
\label{sec:mol_diff_guide}
We now apply diffusion guidance to our unconditional model in order to generate molecules that contain a certain number of desired atom types, e.g. 3 carbons or 1 oxygen and 2 nitrogens. The distribution of molecule sizes changes depending on these conditions. We generate molecules conditioned on these properties by using the reconstruction guided sampling approach introduced in \cite{ho2022video}. This method augments the score $s_{t}^\theta(\X_t)$ such that it approximates $\nabla_{\x_t} \log p_{t}(\X_t | y)$ rather than $\nabla_{\x_t} \log p_{t}(\X_t)$ (where $y$ is the conditioning information) by adding on a term approximating $\nabla_{\x_t} \log p_{t}(y | \X_t)$ with $p_t(y | \X_t) = \sum_{n_0} \int_{\x_0} p(y | \X_0) p_{0|t}(\X_0 | \X_t) \rmd \x_0 $. This guides $\x_t$ such that it is consistent with $y$. Since $\smash{\backwardrate_t^\theta(\X_t)}$ has access to $\x_t$, it will cause $n_t$ to automatically also be consistent with $y$ without the user needing to input any information on how the conditioning information relates to the size of the datapoints. We give further details on diffusion guidance in Appendix \ref{sec:ApdxDiffGuide}.

\begin{table}[tb]
\caption{Conditional Molecule Generation for 10 conditioning tasks that each result in a different dimension distribution. We report dimension error as the average Hellinger distance between the generated and ground truth dimension distributions for that property as well as average sample quality metrics. Standard deviations are given across the 10 conditioning tasks. We report in bold values that are statistically indistinguishable from the best result at the $5\%$ level using a two-sided Wilcoxon signed rank test across the 10 conditioning tasks.}
\label{tab:cond_mol}
\centering
\begin{tabular}{@{}lcccc@{}}
\toprule
Method & \shortstack{Dimension \\ Error ($\downarrow$) } & \shortstack{ \% Atom \\ Stable ($\uparrow$)} & \shortstack{\% Molecule \\ Stable ($\uparrow$)} & \% Valid ($\uparrow$) \\ \midrule
FDDM & $0.511 {\scriptstyle \pm 0.19}$ & $93.5 {\scriptstyle \pm 1.1}$ & $31.3 {\scriptstyle \pm 6.3}$ & $65.2 {\scriptstyle \pm 10.3}$ \\ \midrule
TDDM  & $\mathbf{0.134 {\scriptstyle \pm 0.076}}$ & $93.5 {\scriptstyle \pm 2.6}$ & $\mathbf{59.1 {\scriptstyle \pm 11}}$ & $\mathbf{74.8 {\scriptstyle \pm 9.3}} $ \\
 TDDM, const $\smash{\forwardrate_t}$ & $0.226 {\scriptstyle \pm 0.17}$ & $88.9 {\scriptstyle \pm 4.8}$   & $43.6 {\scriptstyle \pm 15}$ & $63.4 {\scriptstyle \pm 14}$ \\
 TDDM, $\smash{\forwardrate_{t<0.9T} = 0}$&  $0.390 {\scriptstyle \pm 0.38}$& $\mathbf{95.0 {\scriptstyle \pm 2.1}}$& $\mathbf{61.7 {\scriptstyle \pm 17}}$ & $\mathbf{77.8 {\scriptstyle \pm 13}} $ \\
 TDDM w/o Prop. \ref{prop:backwardrateparam} & $0.219 {\scriptstyle \pm 0.12} $ & $\mathbf{93.8 {\scriptstyle \pm 3.2}}$ & $55.0 {\scriptstyle \pm 19}$ & $73.8 {\scriptstyle \pm 13}$  \\ \bottomrule
\end{tabular}
\end{table}

We show our results in Table \ref{tab:cond_mol}. In order to perform guidance on the FDDM baseline, we implement the model from \cite{hoogeboom2022equivariant} in continuous time and initialize the dimension from the empirically observed dimension distribution in the dataset.  This accounts for the case of an end user attempting to guide a unconditional model with access to no further information. We find that TDDM produces samples whose dimensions much more accurately reflect the true conditional distribution of dimensions given the conditioning information.
The $\forwardrate_{t<0.9T} = 0$ ablation on the other hand only marginally improves the dimension error over FDDM because all dimensions are added in the generative process at a time when $\X_t$ is noisy and has little relation to the conditioning information. This highlights the necessity of allowing dimensions to be added throughout the generative process to gain the trans-dimensional diffusion guidance ability. The ablation with constant $\forwardrate_t$ has increased dimension error over TDDM as we find that when $\forwardrate_t > 0$ for all $t$, $\backwardrate_t^\theta$ can become very large when $t$ is close to 0 when the model has perceived a lack of dimensions. This occasionally results in too many dimensions being added hence an increased dimension error. Not using the Proposition \ref{prop:backwardrateparam} parameterization also increases dimension error due to the increased difficulty in learning $\backwardrate_t^\theta$.

\begin{wrapfigure}{r}{0.5\textwidth}
    \vspace{-0.5cm}
    \hspace{0.5cm} \includegraphics[width=0.35\textwidth]{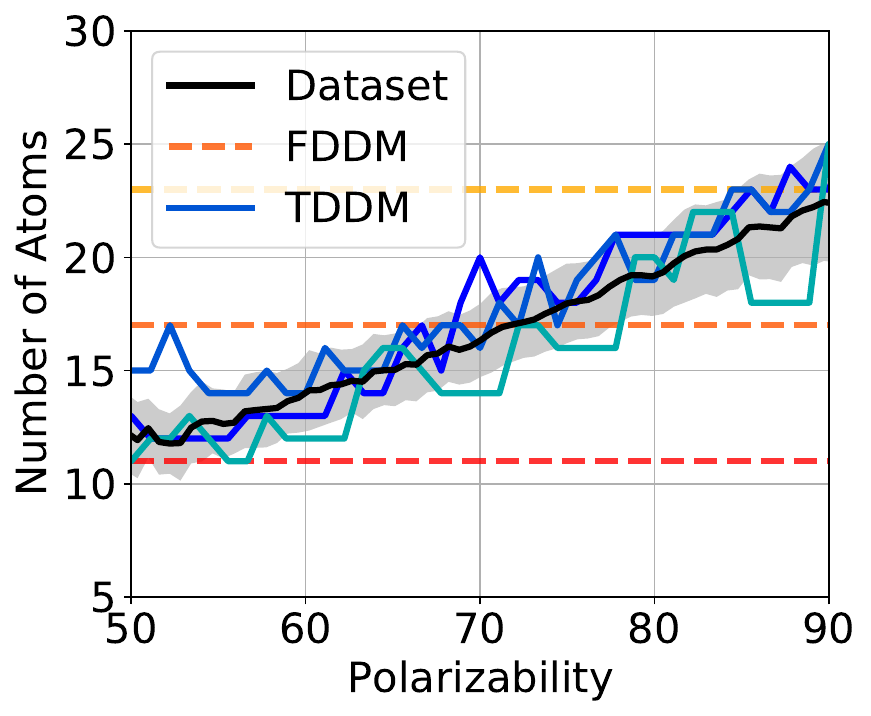}
    \caption{Number of atoms versus polarizability for $3$ interpolations with fixed random noise. The dataset mean and standard deviation for the number of atoms is also shown.
    FDDM interpolates entirely in a fixed dimensional space hence the number of atoms is fixed for all polarizabilities.}
    \label{fig:interp_plot}
    \vspace{-0.3cm}
\end{wrapfigure}

\subsubsection{Trans-Dimensional Interpolation}
\label{sec:mol_interp}
Interpolations are a unique way of gaining insights into the effect of some conditioning parameter on a dataset of interest.
To create an interpolation, a conditional generative model is first trained and then sampled with a sweep of the conditioning parameter but using fixed random noise \cite{hoogeboom2022equivariant}.
The resulting series of synthetic datapoints share similar features due to the fixed random noise but vary in ways that are very informative as to the effect of the conditioning parameter. 
Attempting to interpolate with an FDDM is fundamentally limited because the entire interpolation occurs in the same dimension which is unrealistic when the conditioning parameter is heavily correlated with the dimension of the datapoint. We demonstrate this by following the setup of \cite{hoogeboom2022equivariant} who train a conditional FDDM conditioned on polarizability.
Polarizability is the ability of a molecule's electron cloud to distort in response to an external electric field \cite{modernphysicalorganicchemistry} with larger molecules tending to have higher polarizability. To enable us to perform a trans-dimensional interpolation, we also train a conditional version of our model conditioned on polarizability. An example interpolation with this model is shown in Figure \ref{fig:interp}. We find that indeed the size of the molecule increases with increasing polarizability, with some molecular substructures e.g.~rings, being maintained across dimensions.
We show how the dimension changes with polarizability during 3 interpolations in Figure \ref{fig:interp_plot}. We find that these match the true dataset statistics much more accurately than interpolations using FDDM which first pick a dimension and carry out the entire interpolation in that fixed dimension.

\begin{figure}[h]
    \centering
    \includegraphics[width=\textwidth]{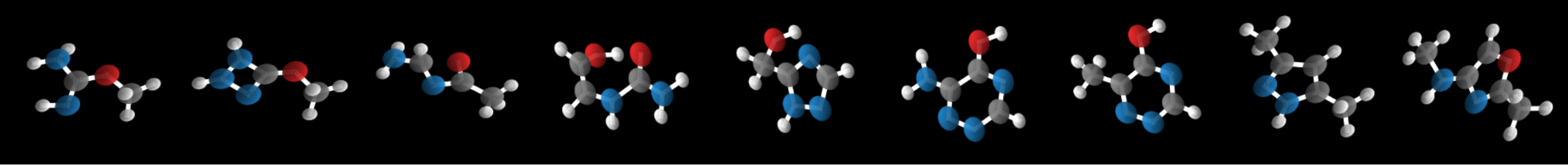}
    \caption{Sequence of generations for linearly increasing polarizability from $39 \, \text{Bohr}^3$ to $66 \, \text{Bohr}^3$ with fixed random noise. Note how molecular size generally increases with polarizability and how some molecular substructures are maintained between sequential generations of differing dimension. For example, between molecules 6 and 7, the single change is a nitrogen (blue) to a carbon (gray) and an extra hydrogen (white) is added to maintain the correct valency.}
    \label{fig:interp}
\end{figure}

\subsection{Video}

We finally demonstrate our model on a video modeling task. Specifically we model the RoboDesk dataset~\citep{tian2023control}, a video benchmark to measure the applicability of video models for planning and control problems. The videos are renderings of a robotic arm~\citep{kannan2021robodesk} performing a variety of different tasks including opening drawers and moving objects.
We first train an unconditional model on videos of varying length and then perform planning by applying diffusion guidance to generate videos conditioned on an initial starting frame and a final goal frame \cite{janner2022diffuser}. The planning problem is then reduced to ``filling in'' the frames in between. Our trans-dimensional model automatically varies the number of in-filled frames during generation so that the final length of video matches the length of time the task should take, whereas the fixed dimension model relies on the unrealistic assumption that the length of time the task should take is known before generation.

We model videos at $32 \times 32$ resolution and with varying length from $2$ to $35$ frames. For the network backbone, we use a UNet adapted for video \cite{harvey2022flexible}. In contrast to molecular point clouds, our data is no longer permutation invariant hence $A_t^\theta(\yadd, i | \X_t)$ includes a prediction over the location to insert the new frame. Full experimental details are provided in Appendix \ref{sec:ExperimentDetails}.
We evaluate our approach on three planning tasks, holding stationary, sliding a door and pushing an object. An example generation conditioned on the first and last frame for the slide door task is shown in Figure \ref{fig:video_example}, with the model in-filling a plausible trajectory. We quantify our model's ability to generate videos of a length appropriate to the task in Table \ref{tab:video_results} finding on all three tasks we generate a more accurate length of video than FDDM which is forced to sample video lengths from the unconditional empirically observed length distribution in the training dataset.

\definecolor{videored}{HTML}{f00606}
\definecolor{videoblue}{HTML}{4b4bfc}

\begin{figure}[h]
    \centering
    \includegraphics[width=\textwidth]{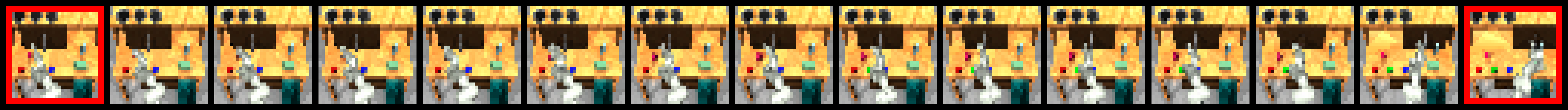}
    \caption{A sample for the slide door task conditioned on the first and last frame ({\color{red} highlighted}).}
    \label{fig:video_example}
\end{figure}

\vspace{-0.6cm}
\begin{table}[h]
     \centering
   \caption{Dimension prediction mean absolute error for three planning tasks with standard deviations estimated over 45 samples.}
   \begin{tabular}{@{}lcccc@{}}
     \toprule
     Method & Stationary ($\downarrow$) & Slide Door ($\downarrow$) & Push Object ($\downarrow$) & Average ($\downarrow$)   \\ \midrule
     FDDM & $14.16 {\scriptstyle \pm 1.41}$ & $13.39 {\scriptstyle \pm 1.34}$ & $17.06 {\scriptstyle \pm 1.47}$ & $14.87$ \\
     TDDM & $\mathbf{9.70 {\scriptstyle \pm 0.99}}$ & $\mathbf{11.47 {\scriptstyle \pm 0.74}}$ & $\mathbf{15.43 {\scriptstyle \pm 0.90}}$ & $\mathbf{12.2}$ \\ \bottomrule
   \end{tabular}
   \label{tab:video_results}
\end{table}

\vspace{-0.5cm}

\section{Discussion}
In this work, we highlighted the pitfalls of performing generative modeling on varying dimensional datasets when treating state values and dimensions completely separately. We instead proposed a trans-dimensional generative model that generates both state values and dimensions jointly during the generative process. We detailed how this process can be formalized with the time-reversal of a jump diffusion and derived a novel evidence lower bound training objective for learning the generative process from data. In our experiments, we found our trans-dimensional model to provide significantly better dimension generation performance for  diffusion guidance and interpolations when conditioning on properties that are heavily correlated with the dimension of a datapoint. We believe our approach can further enable generative models to be applied in a wider variety of domains where previous restrictive fixed dimension assumptions have been unsuitable.

\section{Acknowledgements}
The authors are grateful to Martin Buttenschoen for helpful discussions. AC acknowledges support from the EPSRC CDT in Modern Statistics and Statistical Machine Learning (EP/S023151/1). AD acknowledges support of the UK Dstl and EPSRC grant EP/R013616/1. This is part of the collaboration between US DOD, UK MOD and UK EPSRC under the Multidisciplinary University Research Initiative. He also acknowledges support from the EPSRC grants CoSines (EP/R034710/1) and Bayes4Health (EP/R018561/1). WH and CW acknowledge the support of the Natural Sciences and Engineering Research Council of Canada (NSERC), the Canada CIFAR AI Chairs Program. This material is based upon work supported by the United States Air Force Research Laboratory (AFRL) under the Defense Advanced Research Projects Agency (DARPA) Data Driven Discovery Models (D3M) program (Contract No. FA8750-19-2-0222) and Learning with Less Labels (LwLL) program (Contract No.FA8750-19-C-0515). Additional support was provided by UBC's Composites Research Network (CRN), Data Science Institute (DSI) and Support for Teams to Advance Interdisciplinary Research (STAIR) Grants. This research was enabled in part by technical support and computational resources provided by WestGrid (\url{https://www.westgrid.ca/}) and Compute Canada (\url{www.computecanada.ca}). The authors would like to acknowledge the use of the University of Oxford Advanced Research Computing (ARC) facility in carrying out this work. \url{http://dx.doi.org/10.5281/zenodo.22558}

\bibliographystyle{unsrt}
\bibliography{references}

\newpage
\appendix
{\huge \bfseries Appendix}\\

This appendix is organized as follows.  In Section \ref{sec:Proofs}, we present proofs for all of our propositions. Section \ref{sec:notation} presents a rigorous definition of our forward process using a more specific notation. This is then used in Section \ref{sec:rigorousProofTimeReversal} to prove the time reversal for our jump diffusions. We also present an intuitive proof of the time reversal using notation from the main text in Section \ref{sec:intuitive_time_reversal}. In Section \ref{sec:proof_elbo} we prove Proposition \ref{prop:elbo} using the notation from the main text. We prove Proposition \ref{prop:backwardrateparam} in Section \ref{sec:proofPropBackwardRateParam} and we analyse the optimum of our objective directly without using stochastic process theory in Section \ref{sec:ObjTimeRevProof}. In Section \ref{sec:ApdxTrainingObjective} we give more details on our objective and in Section \ref{sec:ApdxDiffGuide} we detail how we apply diffusion guidance to our model. We give the full details for our experiments in Section \ref{sec:ExperimentDetails} and finally, in Section \ref{sec:BroaderImpacts}, we discuss the broader impacts of our work.

\section{Proofs}
\label{sec:Proofs}

\subsection{Notation and Setup}
\label{sec:notation}

\def \msb {\mathsf{B}}
\newcommand{\expeLigne}[1]{\mathbb{E}[#1]}
\newcommand{\CPELigne}[2]{\mathbb{E}[#1 \ | \ #2]}
\newcommand{\coint}[1]{[ #1 )}
\newcommand{\ccint}[1]{[ #1 ]}
\newcommand{\ooint}[1]{( #1 )}
\def \Jbb{\mathbb{J}}
\def \Pbb{\mathbb{P}}
\def \Kbb{\mathbb{K}}
\def \rmd{\mathrm{d}}
\def \msp{\mathsf{P}}
\def \Jker{\mathbb{J}}
\newcommand{\abs}[1]{|#1|}
\def \msd{\mathsf{D}}
\def \calA{\mathcal{A}}
\def \calR{\mathcal{R}}
\def \rset{\mathbb{R}}
\def \msx{\mathsf{X}}
\def \mcx{\mathcal{X}}
\def \nset{\mathbb{N}}
\def \dim{\mathrm{dim}}
\def \rmc{\mathrm{C}}
\newcommand{\normLigne}[1]{\| #1 \|}
\def \bfX{\mathbf{X}}
\def \bfY{\mathbf{Y}}
\def \Pker{\mathrm{P}}
\def \Qker{\mathrm{Q}}
\def \Jker{\mathrm{J}}
\newcommand{\condprobaLigne}[2]{\mathbb{P}(#1 \ | \ #2)}
\def \msa{\mathsf{A}}

We here introduce a more rigorous notation for defining our trans-dimensional notation that will be used in a rigorous proof for the time-reversal of our jump diffusion. First, while it makes sense from a methodological and experimental point of
view to present our setting as a \emph{transdimensional} one, we slightly change
the point of view in order to derive our theoretical results. We extend the
space $\rset^d$ to $\hat{\rset}^d = \rset^d \cup \{\infty\}$ using the
\emph{one-point compactification} of the space. We refer to
\cite{kelley2017general} for details on this space. The point $\infty$ will be
understood as a mask. For instance, let $x_1, x_2, x_3 \in \rset^d$. Then
$X = (x_1, x_2, x_3) \in (\hat{\rset}^d)^N$ with $N=3$ corresponds to a vector
for which all components are \emph{observed} whereas
$X' = (x_1, \infty, x_3) \in (\hat{\rset}^d)^N$ corresponds to a vector for
which only the components on the first and third dimension are observed. The
second dimension is \emph{masked} in that case. Doing so, we will consider
diffusion models on the space $\msx = (\hat{\rset}^d)^N$ with $d, N \in
\nset$. In the case of a video diffusion model, $N$ can be seen as the max number of frames. We
will always consider that this space is equipped with its Borelian sigma-field
$\mcx$ and all probability measures will be defined on $\mcx$.

We denote $\dim: \ \msx \to \{0,1\}^N$ which is given for any $X = \{x_i\}_{i=1}^N \in \msx$ by
\begin{equation}
  \textstyle \dim(X) = \{\updelta_{\rset^d}(x_i)\}_{i=1}^N . 
\end{equation}
In other words, $\dim(X)$ is a binary vector identifying the ``dimension'' of
the vector $X$, i.e.~which frames are observed. Going back to our example
$X = (x_1, x_2, x_3) \in (\hat{\rset}^d)^N$ and
$X' = (x_1, \infty, x_3) \in (\hat{\rset}^d)^N$, we have that
$\dim(X) = \{1,1,1\}$ and $\dim(X') = \{1, 0, 1\}$. For any vector
$u \in \{0,1\}^N$ we denote $\abs{u} = \sum_{i=1}^N u_i$, i.e.~ the \emph{active
  dimensions} of $u$ (or equivalently the non-masked frames). For any
$X \in \msx$ and $\msd \in \{0, 1\}^N$, we denote $X_\msd = \{X_i'\}_{i=1}^N$
with $X_i'=X_i$ if $\msd_i=1$ and $X_i'=\infty$ if $\msd_i=0$.

We denote $\rmc_b^k(\rset^d, \rset)$ the set of functions which are $k$
differentiable and bounded. Similarly, we denote $\rmc_b^k(\rset^d, \rset)$ the
set of functions which are $k$ differentiable and compactly supported. The set
$\rmc_0^k(\rset^d, \rset)$ denotes the functions which are $k$ differentiable
and vanish when $\normLigne{x} \to +\infty$.
We note that $f \in \rmc(\hat{\rset}^d)$, if
$f \in \rmc(\rset^d)$ and $f -f(\infty) \in \rmc_0(\rset^d)$ and that
$f \in \rmc^k(\hat{\rset}^d)$ for any $k \in \nset$ if the restriction of $f$ to
$\rset^d$ is in $\rmc^k(\rset^d)$ and $f \in \rmc(\hat{\rset}^d)$.

\subsubsection{Transdimensional infinitesimal generator}

To introduce rigorously the \emph{transdimensional} diffusion model defined in
Section \ref{sec:jump-diff-proc}, we will introduce its \emph{infinitesimal
  generator}. The infinitesimal generator of a stochastic process can be roughly
defined as its ``probabilistic derivative''. More precisely, assume that a
stochastic process $(\bfX_t)_{t \geq 0}$ admits a transition semigroup
$(\Pker_t)_{t \geq 0}$, i.e.~ for any $t \geq 0$, $\msa \in \mcx$ and
$X \in \msx$ we have
$\condprobaLigne{\bfX_t \in \msa}{\bfX_0 = x} = \Pker_t(x, \msa)$, then the
infinitesimal generator is defined as
$\calA(f) = \lim_{t \to 0} (\Pker_t(f) - f)/t$, for every $f$ for which this
quantity is well-defined.

Here, we start by introducing the infinitesimal generator of interest and give
some intuition about its form. Then, we prove a time-reversal formula for this
infinitesimal generator. 

We consider $b: \ \rset^d \to \rset^d$, $\alpha: \ \{0,1\}^{NM} \to \rset_+$. 
For any $f \in \rmc^2(\msx)$ and $X \in \msx$ we define
\begin{align}
  \label{eq:definition_generator}
  \calA(f)(X) &\textstyle= \sum_{i=1}^N \{ \langle b(X_i), \nabla_{x_i}f(X) \rangle + \tfrac{1}{2} \Delta_{x_i}f(X)\} \updelta_{\rset^d}(X_i) \\
  & -\textstyle \sum_{\msd_1 \subset \msd_0^{\Delta_0}} \dots  \sum_{\msd_M \subset \msd_{M-1}^{\Delta_{M-1}}} \alpha(\msd_0, \dots, \msd_M) \sum_{i=0}^{M-1} (f(X) - f(X_{\msd_{i+1}})) \updelta_{\msd_i}(\dim(X)) ,
\end{align}
where $M \in \nset$, $\msd_0 = \{1\}^N$,
$\{\Delta_j\}_{j=0}^{M-1} \in \nset^{M}$ such that
$\sum_{j=0}^{M-1} \Delta_j < N$ and for any $j \in \{0, \dots, M-1\}$,
$\msd_j^{\Delta_j}$ is the subset of $\{0,1\}^{\{1, \dots, N\}}$ such that
$\msd_{j+1} \in \msd_j^{\Delta_j}$ if and only if
$\msd_j \cdot \msd_{j+1} = \msd_{j+1}$, where $\cdot$ is the pointwise
multiplication operator, and $|\msd_j| = |\msd_{j+1}| + \Delta_j$. The condition
$\msd_j \cdot \msd_{j+1} = \msd_{j+1}$ means that the non-masked dimensions in
$\msd_{j+1}$ are also non-masked dimensions in $\msd_j$. The condition
$|\msd_j| = |\msd_{j+1}| + \Delta_j$ means that in order to go from $\msd_j$ to
$\msd_{j+1}$, one needs to mask exactly $\Delta_j$ dimensions.

Therefore, a sequence $\{\Delta_j\}_{j=0}^{M-1} \in \nset^{M}$ such that
$\sum_{j=0}^{M-1} \Delta_j < N$ can be interpreted as a sequence of
\emph{drops} in dimension. At the core level, we have that
$\abs{\msd_M} = N - \sum_{j=0}^{M-1} \Delta_j$. For instance if
$\abs{\msd_M} = 1$, we have that at the end of the process, only one dimension
is considered.

We choose $\alpha$ such that
$\sum_{\msd_1 \subset \msd_0^{\Delta_0}} \dots \sum_{\msd_M \subset
  \msd_{M-1}^{\Delta_{M-1}}} \alpha(\msd_0, \dots, \msd_M) = 1$. Therefore,
$\alpha(\msd_0, \dots, \msd_M)$ corresponds to the probability to choose the
\emph{dimension path} $\msd_0 \to \dots \to \msd_M$.

The part
$X \mapsto \langle b(X_i), \nabla_{x_i}f(X) \rangle + \tfrac{1}{2} \Delta_{x_i}f(X)$ is
more classical and corresponds to the \emph{continuous part} of the diffusion
process. We refer to \cite{ethier2009markov} for a thorough introduction on
infinitesimal generators. For simplicity, we omit the schedule coefficients in
\eqref{eq:definition_generator}.

\subsubsection{Justification of the form of the infinitesimal generator}
\label{sec:just-form-infin}

For any \emph{dimension path} $\msp = \msd_0 \to \dots \to \msd_M$ (recall that
$\msd_0 = \{1\}^N$), we define the \emph{jump kernel} $\Jbb^\msp$ as
follows. For any $x \in \msx$, we have
$\Jbb^\msp(X, \rmd Y) = \sum_{i=0}^{M-1} \updelta_{\msd_i}(\dim(X))
\updelta_{X_{\msd_{i+1}}}(\rmd Y)$. This operator corresponds to the \emph{deletion}
operator introduced in Section \ref{sec:jump-diff-proc} . Hence, for any \emph{dimension
  path} $\msp = \msd_0 \to \dots \to \msd_M$, we can define the associated
infinitesimal generator: for any $f \in \rmc^2(\msx)$ and $X \in \msx$ we define
\begin{align}
  \calA^\msp(f)(X) &\textstyle= \sum_{i=1}^N \{ \langle b(x_i), \nabla_{x_i}f(X) \rangle + \tfrac{1}{2} \Delta_{x_i}f(X)\} \updelta_{\rset^d}(X_i) + \int_{\msx}(f(Y) - f(X))\Jbb^\msp(X, \rmd Y) .
\end{align}
We can define the following \emph{jump kernel}
\begin{equation}
  \textstyle \Jbb = \sum_{\msd_1 \subset \msd_0^{\Delta_0}} \dots  \sum_{\msd_M \subset \msd_{M-1}^{\Delta_{M-1}}} \alpha(\msd_0, \dots, \msd_M) \Jbb^\msp .
\end{equation}
This corresponds to averaging the jump kernel over the different possible
dimension paths. We have that for any $f \in \rmc^2(\msx)$ and $X \in \msx$
\begin{align}
  \label{eq:inf_gen_tot}
  \calA(f)(X) &\textstyle= \sum_{i=1}^N \{ \langle b(x_i), \nabla_{x_i}f(X) \rangle + \tfrac{1}{2} \Delta_{x_i}f(X)\} \updelta_{\rset^d}(X_i) + \int_{\msx}(f(Y) - f(X))\Jbb(X, \rmd Y) .
\end{align}
In other words,
$\calA = \sum_{\msd_1 \subset \msd_0^{\Delta_0}} \dots \sum_{\msd_M \subset
  \msd_{M-1}^{\Delta_{M-1}}} \alpha(\msd_0, \dots, \msd_M) \calA^\msp$.

In what follows, we assume that there exists a Markov process
$(\bfX_t)_{t \geq 0}$ with infinitesimal generator $\calA$. In order to sample
from $(\bfX_t)_{t \geq 0}$, one choice is to first sample the dimension path
$\msp$ according to the probability $\alpha$. Second sample from the Markov
process associated with the infinitesimal generator $\calA^\msp$. We can
approximately sample from this process using the Lie-Trotter-Kato formula
\cite[Corollary 6.7, p.33]{ethier2009markov}.

Denote $(\Pker_t)_{t \geq 0}$ the semigroup associated with $\calA^\msp$,
$(\Qker_t)_{t \geq 0}$ the semigroup associated with the continuous part of
$\calA^\msp$ and $(\Jker_t)_{t \geq 0}$ the semigroup associated with the jump
part of $\calA^\msp$. More precisely, we have that, $(\Qker_t)_{t \geq 0}$ is
associated with $\calA_{\mathrm{cont}}$ such that for any $f \in \rmc^2(\msx)$
and $X \in \msx$
\begin{equation}
  \textstyle \calA_{\mathrm{cont}}(f)(X) = \sum_{i=1}^N \{ \langle b(X_i), \nabla_{x_i}f(X) \rangle + \tfrac{1}{2} \Delta_{x_i}f(X)\} . 
\end{equation}
In addition, we have that, $(\Qker_t)_{t \geq 0}$ is
associated with $\calA_{\mathrm{jump}}^\msp$ such that for any $f \in \rmc^2(\msx)$
and $X \in \msx$
\begin{equation}
  \textstyle \calA_{\mathrm{jump}}^\msp(f)(X) = \int_{\msx} (f(Y)-f(X)) \Jbb^\msp(X, \rmd Y) . 
\end{equation}

First, note that $\calA_{\mathrm{cont}}$ corresponds to the infinitesimal
generator of a classical diffusion on the components which are not set to
$\infty$. Hence, we can approximately sample from $(\Qker_t)_{t \geq 0}$ by
sampling according to the Euler-Maruyama discretization of the associated
diffusion, i.e. by setting
\begin{equation}
  \label{eq:euler_maruyama}
  \bfX_t \approx \bfX_0 + t b(\bfX_0) + \sqrt{t} Z , 
\end{equation}
where $Z$ is a Gaussian random variable.

Similarly, in order to sample from $(\Jker_t)_{t \geq 0}$, one should sample
from the jump process defined as follows. On the interval $\coint{0, \tau}$, we
have $\bfX_t = \bfX_0$. At time $\tau$, we define
$\bfX_1 \sim \Jbb(\bfX_0, \cdot)$ and repeat the procedure. In this case $\tau$
is defined as an exponential random variable with parameter $1$. For $t > 0$
small enough the probability that $t > \tau$ is of order $t$. Therefore, we
sample from $\Jker$, i.e.~ the deletion kernel, with probability $t$. Combining
this approximation and \eqref{eq:euler_maruyama}, we get approximate samplers
for $(\Qker_t)_{t \geq 0}$ and $(\Jker_t)_{t \geq 0}$. Under mild assumptions,
the Lie-Trotter-Kato formula ensures that for any $t \geq 0$
\begin{equation}
  \Pker_t = \lim_{n \to +\infty} (\Qker_{t/n} \Jker_{t/n})^n . 
\end{equation}
This justifies sampling according to Algorithm \ref{alg:backwardsampling} (in
the case of the forward process).

\subsection{Proof of Proposition \ref{prop:time_reversal}}
For the proof of Proposition \ref{prop:time_reversal}, we first provide a rigorous proof using the notation introduced in \ref{sec:notation}. We then follow this with a second proof that aims to be more intuitive using the notation used in the main paper.

\subsubsection{Time-reversal for the transdimensional infinitesimal generator and Proof of Proposition \ref{prop:time_reversal}}
\label{sec:rigorousProofTimeReversal}

We are now going to derive the formula for the time-reversal of the
transdimensional infinitesimal generator $\calA$, see
\eqref{eq:definition_generator}. This corresponds to a rigorous proof of
Proposition \ref{prop:time_reversal}. We refer to Section
\ref{sec:intuitive_time_reversal} for a more intuitive, albeit less-rigorous,
proof. We start by introducing the kernel $\Kbb^\msp$ given for any dimension
path $\msd_0 \to \dots \to \msd_M$, for any $i \in \{0, \dots, M-1\}$,
$Y \in \msd_{i+1}$ and $\msa \in \mcx$ by
\begin{equation}
  \textstyle \Kbb^\msp(Y, \msa) = \sum_{i=0}^{M-1} \updelta_{\msd_{i+1}}(\dim(Y)) \int_{\msa \cap \msd_i} \tfrac{p_t((X_{\msd_i \backslash \msd_{i+1}}, Y_{\msd_{i+1}})|\dim(\bfX_t)=\msd_i)\Pbb(\dim(\bfX_t)=\msd_i)}{p_t(Y_{\msd_{i+1}}|\dim(\bfX_t)=\msd_{i+1})\Pbb(\dim(\bfX_t)=\msd_{i+1})} \rmd X_{\msd_i \backslash \msd_{i+1}} . 
\end{equation}
Note that this kernel is the same as the one considered in Proposition
\ref{prop:time_reversal}. It is well-defined under the following assumption.

\begin{assumption}
  \label{assum:existence_of_density}
  For any $t > 0$ and $\msd \subset \{0,1\}^N$, we have that $\bfX_t$
  conditioned to $\dim(\bfX_t) = \msd$ admits a density w.r.t. the
  $\abs{\msd}d$-dimensional Lebesgue measure, denoted
  $p_t(\cdot | \dim(\bfX_t) = \msd)$. 
\end{assumption}

The following result will be key to establish the time-reversal formula.

\begin{lemma}
  \label{lemma:flux_equation}
  Assume \textup{A\ref{assum:existence_of_density}}. Let $\msa, \msb \in \mcx$. Let
  $\msp$ be a dimension path $\msd_0 \to \dots \to \msd_M$ with $M \in \nset$.
  Then, we have
  \begin{equation}
    \expeLigne{\mathbf{1}_{\msa}(\bfX_t) \Jbb^\msp(\bfX_t, \msb)} = \expeLigne{\mathbf{1}_{\msb}(\bfX_t) \Kbb^\msp(\bfX_t, \msa)} . 
  \end{equation}
\end{lemma}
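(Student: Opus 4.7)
The plan is to expand both sides of the identity by conditioning on the dimension vector $\dim(\bfX_t)$, and then show the identity holds term-by-term in the sum over $i \in \{0, \dots, M-1\}$. Write
\begin{equation}
\Jbb^\msp(\bfX_t, \msb) = \sum_{i=0}^{M-1} \updelta_{\msd_i}(\dim(\bfX_t))\, \mathbf{1}_\msb(\bfX_{t,\msd_{i+1}}),
\end{equation}
so that using \textbf{A}\ref{assum:existence_of_density} the left-hand side becomes
\begin{equation}
\sum_{i=0}^{M-1} \Pbb(\dim(\bfX_t)=\msd_i) \int_{\msx} \mathbf{1}_\msa(X)\, \mathbf{1}_\msb(X_{\msd_{i+1}})\, p_t(X \mid \dim(\bfX_t)=\msd_i)\,\rmd X,
\end{equation}
where the integration is carried out against the $\abs{\msd_i}d$-dimensional Lebesgue measure on the observed coordinates.

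The second step is a change of coordinates: decompose each $X$ with $\dim(X)=\msd_i$ into its $\msd_{i+1}$-part $Y$ and its $(\msd_i \setminus \msd_{i+1})$-part, so that $\rmd X = \rmd X_{\msd_i \setminus \msd_{i+1}}\, \rmd Y_{\msd_{i+1}}$, and the density factors as $p_t((X_{\msd_i\setminus\msd_{i+1}}, Y_{\msd_{i+1}}) \mid \dim(\bfX_t)=\msd_i)$. Since $\mathbf{1}_\msb(X_{\msd_{i+1}}) = \mathbf{1}_\msb(Y)$ (where $Y$ has the same masked dimensions as $\msd_{i+1}$), the integrand separates neatly.

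The third step is to compute the right-hand side from the definition of $\Kbb^\msp$. Expanding analogously, conditioning on $\dim(\bfX_t)=\msd_{i+1}$, the density $p_t(Y_{\msd_{i+1}}\mid \dim(\bfX_t)=\msd_{i+1})$ and the probability $\Pbb(\dim(\bfX_t)=\msd_{i+1})$ coming from the expectation cancel exactly with the denominator in the definition of $\Kbb^\msp$. What remains is
\begin{equation}
\sum_{i=0}^{M-1} \Pbb(\dim(\bfX_t)=\msd_i) \int \mathbf{1}_\msb(Y) \int_{\msa \cap \{\dim=\msd_i\}} p_t((X_{\msd_i\setminus\msd_{i+1}}, Y_{\msd_{i+1}}) \mid \dim(\bfX_t)=\msd_i)\, \rmd X_{\msd_i\setminus\msd_{i+1}}\, \rmd Y_{\msd_{i+1}},
\end{equation}
which is exactly the expression obtained in the previous step. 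Fubini makes the two iterated integrals interchangeable, establishing the equality.

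The only real obstacle is bookkeeping around the one-point compactification and the masking convention: verifying that $\mathbf{1}_\msa(X) = \mathbf{1}_{\msa \cap \{\dim = \msd_i\}}((X_{\msd_i\setminus\msd_{i+1}},Y_{\msd_{i+1}}))$ and that the Lebesgue factorization on $\msd_i$-dimensional fibers is respected. Once the notation is pinned down, each side collapses to the same joint integral against the $\msd_i$-conditional density weighted by $\Pbb(\dim(\bfX_t)=\msd_i)$, and the sum over $i$ yields the claimed identity.
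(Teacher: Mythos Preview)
Your proposal is correct and follows essentially the same approach as the paper: expand $\Jbb^\msp$ as a sum over $i$, invoke A\ref{assum:existence_of_density} to pass to integrals against the conditional densities, split the $\msd_i$-integral into its $\msd_{i+1}$- and $(\msd_i\setminus\msd_{i+1})$-parts, apply Fubini, and identify the inner integral with the definition of $\Kbb^\msp$ via the cancellation you describe. The only cosmetic difference is that the paper transforms the left-hand side step by step into the right-hand side, whereas you compute both sides and match them to a common expression.
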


\begin{proof}
  Let $\msa, \msb \in \mcx$. We have
  \begin{align}
    &\expeLigne{\mathbf{1}_{\msa}(\bfX_t) \Jbb^\msp(\bfX_t, \msb)} = \textstyle \sum_{i=0}^{M-1} \expeLigne{\mathbf{1}_{\msa}(\bfX_t) \updelta_{\msd_i}(\dim(\bfX_t)) \mathbf{1}_{\msb}((\bfX_t)_{\msd_{i+1}})}\\
                                                                  &\quad = \textstyle \sum_{i=0}^{M-1} \int_{\msa \cap \msd_i}  p_t(X_{\msd_i}|\dim(\bfX_t) = \msd_i) \Pbb(\dim(\bfX_t)=\msd_i) \mathbf{1}_{\msb}(X_{\msd_{i+1}})  \rmd X_{\msd_i} \\ 
    &\quad = \textstyle \sum_{i=0}^{M-1} \int_{\msa \cap \msd_i} p_t(X_{\msd_i}|\dim(\bfX_t) = \msd_i)\Pbb(\dim(\bfX_t)=\msd_i)  \mathbf{1}_{\msb}(X_{\msd_{i+1}}) \rmd X_{\msd_{i+1}} \rmd X_{\msd_i \backslash \msd_{i+1}}    \\
    &\quad = \textstyle \sum_{i=0}^{M-1} \int_{\msb \cap \msd_{i+1}}  \mathbf{1}_{\msb}(X_{\msd_{i+1}})\\
    & \qquad \qquad \textstyle \times  (\int_{\msa \cap \msd_i} \mathbf{1}_{\msa}(X_{\msd_i}) p_t(X_{\msd_i}|\dim(\bfX_t) = \msd_i)  \Pbb(\dim(\bfX_t)=\msd_i) \rmd X_{\msd_i \backslash \msd_{i+1}} )  \rmd X_{\msd_{i+1}} \\
    &\quad = \textstyle \sum_{i=0}^{M-1} \int_{\msb \cap \msd_{i+1}} \mathbf{1}_{\msb}(X_{\msd_{i+1}}) \\
    & \qquad \qquad \times \Kbb^\msp(X_{\msd_{i+1}}, \msa) p_t(X_{\msd_{i+1}}|\dim(\bfX_t) = \msd_{i+1}) \Pbb(\dim(\bfX_t) = \msd_{i+1})  \rmd X_{\msd_{i+1}} \\
    &\quad = \textstyle \sum_{i=0}^{M-1} \expeLigne{\updelta_{\msd_{i+1}}(\dim(\bfX_t)) \Kbb^\msp(\bfX_t, \msa) \mathbf{1}_{\msb}(\bfX_t)} ,
  \end{align}
  which concludes the proof.
\end{proof}

Lemma \ref{lemma:flux_equation} shows that $\Kbb^\msp$ verifies the \emph{flux
  equation} associated with $\Jbb^\msp$. The flux equation is the discrete
state-space equivalent of the classical time-reversal formula for continuous
state-space. We refer to \cite{conforti2022time} for a rigorous treatment of
time-reversal with jumps under entropic conditions. 

We are also going to consider the following assumption which ensures that the
integration by part formula is valid.

\begin{assumption}
  \label{assum:ipp}
  For any $t > 0$ and $i \in \{1, \dots, N\}$, $\bfX_t$ admits a smooth
  density w.r.t. the $Nd$-dimensional Lebesgue measure denoted $p_t$ and we have that for any
  $f, h \in \rmc_b^2((\rset^d)^N)$
for any $u \in \ccint{0,t}$ and $i \in \{1, \dots, N\}$
    \begin{align}
      &\expeLigne{\updelta_{\rset^d}((\bfX_u)_i) \langle \nabla_{x_i} f(\bfX_u), \nabla_{x_i} h( \bfX_u) \rangle} \\
      & \qquad \qquad = -  \expeLigne{\updelta_{\rset^d}((\bfX_u)_i) h(\bfX_u) (\Delta_{x_i} f(\bfX_u) + \langle \nabla_{x_i} \log p_u(\bfX_u), \nabla_{x_i} f(\bfX_u) \rangle)} .
    \end{align}  
\end{assumption}

The second assumption ensures that we can apply the backward Kolmogorov
evolution equation.

\begin{assumption}
  \label{assum:backward_kolmogorov}
  For any $g \in \rmc^2(\msx)$ and $t > 0$, we have that for any $u \in \ccint{0,t}$ and
  $X \in \msx$, $\partial_u g(u,X) + \calA(g)(u,X) = 0$, where for any
  $u \in \ccint{0,t}$ and $X \in \msx$,
  $g(u,X) = \CPELigne{g(\bfX_t)}{\bfX_u=X}$.
\end{assumption}

We refer to \cite{haussmann1986time} for conditions under A\ref{assum:ipp} and
A\ref{assum:backward_kolmogorov} are valid in the setting of diffusion processes.

  \begin{proposition}
    \label{prop:time_reversal_kill}
    Assume \textup{A\ref{assum:existence_of_density}}, \textup{A\ref{assum:ipp}}
    and \textup{A\ref{assum:backward_kolmogorov}}. Assume that there exists a
    Markov process $(\bfX_t)_{t \geq 0}$ solution of the martingale problem
    associated with \eqref{eq:inf_gen_tot}. Let $T > 0$ and consider
    $(\bfY_t)_{t \in \ccint{0,T}} = (\bfX_{T-t})_{t \in \ccint{0,T}}$. Then
    $(\bfY_t)_{t \in \ccint{0,T}}$ is solution to the martingale problem
    associated with $\calR$, where for any $f \in \rmc^2(\msx)$,
    $t \in \ooint{0,T}$ and $x \in \msx$ we have
\begin{align}
  \label{eq:inf_gen_tot_rev}
  \calR(f)(t, X) &\textstyle= \sum_{i=1}^N \{ -\langle b(X_i)+\nabla_{x_i} \log p_t(X), \nabla_{x_i}f(X) \rangle + \tfrac{1}{2} \Delta_{x_i}f(X)\} \updelta_{\rset^d}(X_i) \\
  & \qquad  \textstyle + \int_{\msx}(f(Y) - f(X))\Kbb(X, \rmd Y) .
\end{align}
  \end{proposition}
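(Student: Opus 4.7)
The plan is to identify $\calR$ as the infinitesimal generator of $(\bfY_t)_{t \in \ccint{0,T}}$ by establishing, for all test functions $f, g \in \rmc_b^2(\msx)$ and all $t \in \ooint{0,T}$, the duality identity
\begin{equation*}
  \expeLigne{g(\bfX_t)\, \calR(f)(t, \bfX_t)} = \expeLigne{f(\bfX_t)\, \calA(g)(\bfX_t)} - \expeLigne{\calA(fg)(\bfX_t)}.
\end{equation*}
To arrive at this reformulation, I would compute $\lim_{\delta \to 0^+} \delta^{-1}\bigl(\expeLigne{g(\bfX_t)\, f(\bfX_{t-\delta})} - \expeLigne{g(\bfX_t)\, f(\bfX_t)}\bigr)$ by conditioning on $\bfX_{t-\delta}$, Taylor-expanding $h(u, X) = \CPELigne{g(\bfX_t)}{\bfX_u = X}$ around $u = t$ via Assumption A\ref{assum:backward_kolmogorov}, and using $\tfrac{\rmd}{\rmd u}\expeLigne{(fg)(\bfX_u)} = \expeLigne{\calA(fg)(\bfX_u)}$. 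Once the identity holds on a sufficiently rich class of test functions, it characterizes $\calR$ as the reversed generator and the martingale problem follows by a standard density argument.

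The identity then splits into a continuous and a jump part, handled by different tools. For the continuous part, the Leibniz rule gives
\begin{equation*}
  f \calA_{\mathrm{cont}}(g) - \calA_{\mathrm{cont}}(fg) = -g \calA_{\mathrm{cont}}(f) - \sum_{i=1}^N \updelta_{\rset^d}(X_i)\, \langle \nabla_{x_i} f, \nabla_{x_i} g \rangle,
\end{equation*}
and Assumption A\ref{assum:ipp} applied with $h = g$ rewrites the cross term in expectation as $-\expeLigne{g \sum_i \updelta_{\rset^d}((\bfX_t)_i)(\Delta_{x_i} f + \langle \nabla_{x_i} \log p_t, \nabla_{x_i} f \rangle)}$. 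Combining this with $-\expeLigne{g \calA_{\mathrm{cont}}(f)}$ reconstructs exactly $\expeLigne{g\, \calR_{\mathrm{cont}}(f)}$, producing the reversed drift that mixes $b$ and the score $\nabla_{x_i} \log p_t$.

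For the jump part, by linearity it suffices to treat each $\calA_{\mathrm{jump}}^\msp$ separately, since $\calA_{\mathrm{jump}} = \sum_\msp \alpha(\msp) \calA_{\mathrm{jump}}^\msp$. A direct algebraic manipulation gives
\begin{equation*}
  f\, \calA_{\mathrm{jump}}^\msp(g)(X) - \calA_{\mathrm{jump}}^\msp(fg)(X) = \int_{\msx} g(Y)\,(f(X) - f(Y))\, \Jbb^\msp(X, \rmd Y).
\end{equation*}
Taking expectations under $\bfX_t$, I would then invoke Lemma \ref{lemma:flux_equation} with the test function $\phi(X, Y) = g(Y)(f(X) - f(Y))$ to swap the roles of source and destination, rewriting the right-hand side as $\expeLigne{g(\bfX_t) \int_{\msx} (f(X') - f(\bfX_t))\, \Kbb^\msp(\bfX_t, \rmd X')}$. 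Summing over dimension paths with weights $\alpha$ then yields the jump contribution to $\expeLigne{g\, \calR(f)}$.

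The main obstacle is the analytical bookkeeping: pushing the Taylor expansion through the expectation relies on smoothness and integrability consequences of Assumption A\ref{assum:backward_kolmogorov}, while extending Lemma \ref{lemma:flux_equation} from indicator sets to bounded measurable $\phi$ requires a standard monotone-class argument that one must check does not break down on $\msx = (\hat\rset^d)^N$ with its mixed continuous-discrete structure (in particular, carefully handling the $\updelta_{\msd_i}(\dim(\bfX_t))$ factors that select which dimension path is active). Once both steps are in place, summing the continuous and jump contributions yields the duality identity, so $\calR$ is the generator of $(\bfY_t)_{t \in \ccint{0,T}}$ and this process solves the associated martingale problem.
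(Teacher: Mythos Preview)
Your proposal is correct and takes essentially the same approach as the paper: both use the product-rule decomposition of $\calA(fg)$, the integration-by-parts assumption A\ref{assum:ipp} for the continuous part, Lemma~\ref{lemma:flux_equation} (extended from indicators to bounded measurable integrands by a monotone-class argument) for the jump part, and backward Kolmogorov A\ref{assum:backward_kolmogorov} via the conditional expectation $h(u,X)=\CPELigne{g(\bfX_t)}{\bfX_u=X}$. The only difference is cosmetic: the paper carries $g(u,X)$ through the computation and proves the integrated martingale identity $\expeLigne{(f(\bfX_t)-f(\bfX_s))g(\bfX_t)} = -\expeLigne{g(\bfX_t)\int_s^t \calR(f)(u,\bfX_u)\,\rmd u}$ directly, whereas you state the instantaneous duality identity at a fixed $t$ and would integrate at the end---your ``standard density argument'' step is precisely where you would reintroduce $h(u,\cdot)$ and integrate over $[s,t]$ as the paper does.
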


  \begin{proof}
    Let $f, g \in \rmc^2(\msx)$. In what follows, we show that for any $s, t \in \ccint{0,T}$ with $t \geq s$
    \begin{equation}
      \textstyle \expeLigne{(f(\bfY_t) - f(\bfY_s))g(\bfY_s) } = \expeLigne{g(\bfY_s) \int_s^t \calR(f)(u, \bfY_u) \rmd u } . 
    \end{equation}
    More precisely, we  show that for any $s, t \in \ccint{0,T}$ with $t \geq s$
    \begin{equation}
      \textstyle \expeLigne{(f(\bfX_t) - f(\bfX_s))g(\bfX_t) } = \expeLigne{-g(\bfX_t) \int_s^t \calR(f)(u, \bfX_u) \rmd u } . 
    \end{equation}
    Let $s, t \in \ccint{0,T}$, with $t \geq s$. Next, we denote for any
    $u \in \ccint{0,t}$ and $X \in \msx$,
    $g(u,X) = \CPELigne{g(\bfX_t)}{\bfX_u=X}$. Using
    A\ref{assum:backward_kolmogorov}, we have that for any $u \in \ccint{0,t}$
    and $X \in \msx$, $\partial_u g(u,X) + \calA(g)(u,X) = 0$, i.e.~ $g$
    satisfies the backward Kolmogorov equation. For any $u \in \ccint{0,t}$ and
    $X \in \msx$, we have
    \begin{align}
      \calA(fg)(u, X) &= \textstyle \partial_u g(u,X) f(X) + \sum_{i=1}^N (\langle b(X_i), \nabla_{x_i} g(u,X) \rangle + \tfrac{1}{2} \Delta_{x_i} g(u,X_i)) f(X) \updelta_{\rset^d}(X_i) \\
                      &\qquad \textstyle +\sum_{i=1}^N (\langle b(X_i), \nabla_{x_i} f(X) \rangle + \tfrac{1}{2} \Delta_{x_i} f(X)) g(u,X) \updelta_{\rset^d}(X_i) \\
                      & \qquad \textstyle + \sum_{i=1}^N \updelta_{\rset^d}(X_i) \langle \nabla_{x_i} f(X), \nabla_{x_i} g(u,X) \rangle 
                        + \Jbb(X, fg) \\
                      &= \partial_u g(u,X) f(X) + \calA(g)(u,X) f(X)+ \Jbb(X, fg) - \Jbb(X, g)f(X)  \\
                      &\qquad \textstyle +\sum_{i=1}^N (\langle b(X_i), \nabla_{x_i} f(X) \rangle + \tfrac{1}{2} \Delta_{x_i} f(X)) g(u,X) \updelta_{\rset^d}(X_i) \\
                      & \qquad \textstyle + \sum_{i=1}^N \updelta_{\rset^d}(X_i) \langle \nabla_{x_i} f(X), \nabla_{x_i} g(u,X) \rangle \\
                      &= \textstyle \sum_{i=1}^N (\langle b(X_i), \nabla_{x_i} f(X) \rangle + \tfrac{1}{2} \Delta_{x_i} f(X)) g(u,X) \updelta_{\rset^d}(X_i)  \\
                      & \qquad \textstyle + \sum_{i=1}^N \updelta_{\rset^d}(X_i) \langle \nabla_{x_i} f(X), \nabla_{x_i} g(u,X) \rangle + \Jbb(X, fg) - \Jbb(X, g)f(X). \label{eq:decomposition}
    \end{align}
    Using A\ref{assum:ipp}, we have that for any $u \in \ccint{0,t}$ and $i \in \{1, \dots, N\}$
    \begin{align}
      &\expeLigne{\updelta_{\rset^d}((\bfX_u)_i) \langle \nabla_{x_i} f(\bfX_u), \nabla_{x_i} g(u, \bfX_u) \rangle} \\
      & \qquad \qquad = -  \expeLigne{\updelta_{\rset^d}((\bfX_u)_i) g(u, \bfX_u) (\Delta_{x_i} f(\bfX_u) + \langle \nabla_{x_i} \log p_u(\bfX_u), \nabla_{x_i} f(\bfX_u) \rangle)} . \label{eq:ipp_1}
    \end{align}
    In addition, we have that for any $X \in \msx$ and $u \in \ccint{0,t}$,
    $\Jbb(X,fg) - \Jbb(X,g)f(X) = \int_{\msx} g(u,Y)(f(Y) - f(X)) \Jbb(X, \rmd
    Y)$. Using Lemma \ref{lemma:flux_equation}, we get
        \begin{equation}
      \expeLigne{\Jbb(\bfX_u, fg) - \Jbb(\bfX_u,f)g(u,\bfX_u)} = -\expeLigne{g(u,\bfX_u) \Kbb(\bfX_u,f)}\textstyle. \label{eq:ipp_2}
    \end{equation}
    Therefore, using \eqref{eq:decomposition}, \eqref{eq:ipp_1} and \eqref{eq:ipp_2}, we have 
    \begin{equation}
      \expeLigne{\calA(fg)(u, \bfX_u)} = \expeLigne{-\calR(f)(u, \bfX_u)g(u, \bfX_u)}. 
    \end{equation}
    Finally, we have
    \begin{align}
      \expeLigne{(f(\bfX_t) -f(\bfX_s))g(\bfX_t)} &= \expeLigne{g(t, \bfX_t)f(\bfX_t) -f(\bfX_s)g(s, \bfX_s)} \\
                                                  &\textstyle = \expeLigne{\int_s^t \calA(fg)(u, \bfX_u) \rmd u } \\
      &\textstyle = - \expeLigne{\int_s^t \calR(f)(u, \bfX_u)g(u, \bfX_u) \rmd u } = - \expeLigne{g(\bfX_t) \int_s^t \calR(f)(u, \bfX_u) \rmd u } , 
    \end{align}
    which concludes the proof.
  \end{proof}

\subsubsection{Intuitive Proof of Proposition \ref{prop:time_reversal}}
\label{sec:intuitive_time_reversal}

We recall Proposition \ref{prop:time_reversal}.

\setcounter{proposition}{0}
\begin{proposition}
The time reversal of a forward jump diffusion process given by drift $\forwarddrift_t$, diffusion coefficient $\forwarddiffcoeff_t$, rate $\forwardrate_t(n)$ and transition kernel $\sum_{i=1}^n \delidxdist(i | n) \updelta_{\textup{del}(\X, i)}(\Y)$ is given by a jump diffusion process with drift $\backwarddrift_t^*(\X)$, diffusion coefficient $\backwarddiffcoeff_t^*$, rate $\backwardrate_t^*(\X)$ and transition kernel $\int_{\yadd} \sum_{i=1}^{n+1}  \autonet^*_t(\yadd, i | \X) \updelta_{\textup{ins}(\X, \yadd, i)}(\Y) \rmd \yadd$ as defined below
\begin{align}
    &\backwarddrift_{t}^*(\X) = \forwarddrift_{t}(\X) - \nabla_{\x} \log p_{t}(\X),~~\backwarddiffcoeff_{t}^* = \forwarddiffcoeff_{t},\\
    & \backwardrate_{t}^*(\X) = \forwardrate_{t}(n+1) \frac{ \sum_{i=1}^{n+1} \delidxdist(i | n+1) \int_{\yadd} p_{t}( \textup{ins}(\X, \yadd, i)) \rmd \yadd }{p_{t}(\X)},\\
    & \autonet_{t}^*( \yadd, i | \X) \propto p_{t}(\textup{ins}(\X, \yadd, i) ) \delidxdist(i| n+1).
\end{align}
\end{proposition}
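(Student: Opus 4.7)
The plan is to identify the backward dynamics through the associated infinitesimal generators, matching the forward generator against its adjoint under the marginal law $p_t$ so that each of the four backward quantities $(\backwarddrift_t^*, \backwarddiffcoeff_t^*, \backwardrate_t^*, \autonet_t^*)$ is pinned down separately. The forward generator splits cleanly into a diffusion piece acting on the state values of the currently-alive components and a jump piece implementing deletions, and the two pieces can be handled independently because they act on disjoint features of the process (state values versus dimension).

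For the diffusion piece, I would invoke the standard Anderson/Haussmann-Pardoux calculation: apply the generator to a product $fg$ of test functions, use the backward Kolmogorov equation $\partial_u g + \calA(g) = 0$, and then perform integration by parts in each coordinate $x_i$ restricted to components where $\updelta_{\rset^d}(X_i) = 1$. The $\nabla_{x_i} f \cdot \nabla_{x_i} g$ cross-term becomes a drift correction $-\nabla_{x_i} \log p_t(\X)$, yielding $\backwarddrift_t^* = \forwarddrift_t - \nabla_\x \log p_t$ and $\backwarddiffcoeff_t^* = \forwarddiffcoeff_t$. This part is essentially the standard proof, just restricted to the active coordinates indicated by $\dim(\X)$.

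The heart of the argument is the jump part, where I need a discrete-state analogue of integration by parts, namely a flux/duality identity of the form
\begin{equation}
\mathbb{E}[\mathbf{1}_A(\X_t)\, \Jker_t(\X_t, B)] = \mathbb{E}[\mathbf{1}_B(\X_t)\, \Kker_t^*(\X_t, A)],
\end{equation}
where $\Jker_t(\X, \cdot)$ is the forward deletion kernel weighted by $\forwardrate_t(n)$. To derive $\Kker_t^*$ explicitly, I would expand the left-hand side as a sum over the current dimension $n+1$ and the deletion index $i$, then perform a change of variables that identifies $\X_t$ with $\text{ins}(\Y, \yadd, i)$ for $\Y \in B$, $\yadd \in \mathbb{R}^d$. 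Reading off the density appearing against $\mathbf{1}_B(\Y) p_t(\Y)$ yields
\begin{equation}
\Kker_t^*(\Y, d\X) = \frac{\forwardrate_t(n+1)}{p_t(\Y)} \sum_{i=1}^{n+1} \delidxdist(i|n+1) \int_{\yadd} p_t(\text{ins}(\Y, \yadd, i)) \updelta_{\text{ins}(\Y, \yadd, i)}(d\X)\, \rmd \yadd,
\end{equation}
and factoring this into a total mass (the rate $\backwardrate_t^*$) times a probability kernel gives exactly the formulas in the proposition, with the insertion law $\autonet_t^*(\yadd, i | \Y) \propto p_t(\text{ins}(\Y, \yadd, i))\,\delidxdist(i | n+1)$.

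Finally, I would combine the two pieces: applying $\calA$ to $fg$, grouping the $g$-terms into $\partial_u g + \calA(g) = 0$ to cancel them, and using the integration-by-parts and flux identities on the remaining cross-terms, produces $\mathbb{E}[\calA(fg)(u, \X_u)] = -\mathbb{E}[\calR(f)(u, \X_u)\, g(u, \X_u)]$ with $\calR$ of the claimed form. Integrating in $u$ over $[s,t]$ then shows that $(\X_{T-t})_{t \in [0,T]}$ solves the martingale problem for $\calR$, establishing the result. The main technical obstacle I anticipate is the flux identity: keeping the densities consistent across different-dimensional strata $\{\dim(\X) = \msd\}$, and verifying the change-of-variables that matches the pair $(\X, i)$ of the forward deletion with the triple $(\Y, \yadd, i)$ of the backward insertion without double-counting or mis-normalizing across dimensions.
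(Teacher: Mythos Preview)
Your proposal is correct and follows essentially the same route as the paper's rigorous proof: the paper also establishes the flux identity $\mathbb{E}[\mathbf{1}_A(\bfX_t)\Jbb(\bfX_t,B)]=\mathbb{E}[\mathbf{1}_B(\bfX_t)\Kbb(\bfX_t,A)]$ as a separate lemma, then computes $\calA(fg)$, cancels the $g$-terms via the backward Kolmogorov equation, applies integration by parts on the diffusion cross-term and the flux identity on the jump cross-term, and integrates in $u$ to obtain the martingale problem for the reversed generator $\calR$. The paper additionally gives a short ``intuitive'' derivation that reads $\backwardrate_t^*$ and $\autonet_t^*$ directly from the pointwise flux balance $p_t(\X)\backwardrate_t^*(\X)\btk_t^*(\Y|\X)=p_t(\Y)\forwardrate_t(\Y)\ftk_t(\X|\Y)$, which is the same identity you use but stated at the level of densities rather than expectations.
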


\textbf{Diffusion part. } Using standard diffusion models arguments such as
\cite{anderson1982reverse} \cite{conforti2022time}, we get
\begin{equation}
    \backwarddrift_{t}^*(\X) = \forwarddrift_t(\X) - \nabla_{\x} \log p_t(\X | n) . 
\end{equation}

\textbf{Jump part.}
We use the flux equation from \cite{conforti2022time} which intuitively relates the probability flow going in the forward direction with the probability flow going the backward direction with equality being achieved at the time reversal.
\begin{align}
  &p_t(\X) \backwardrate_t^*(\X) \btk_t^*(\Y | \X) = p_t(\Y) \forwardrate_t(\Y) \ftk_t(\X | \Y)\\
  &\textstyle p_t(\X) \backwardrate_t^*(\X) \int_{\yadd} \sum_{i=1}^{n+1} \autonet_t^*(\yadd, i | \X) \updelta_{\text{ins}(\X, \yadd, i)}(\Y) \rmd \yadd \\
  & \qquad = \textstyle p_t(\Y) \forwardrate_t(\Y) \sum_{i=1}^{n+1} \delidxdist(i | n+1) \updelta_{\text{del}(\Y, i)}(\X) . \label{eq:probflowfull}
\end{align}
To find $\backwardrate_t^*(\X)$, we sum and integrate both sides over $m$ and $\y$, with $\Y = (m, \y)$,
\begin{align}
    &\textstyle \sum_{m=1}^N \int_{\y \in \mathbb{R}^{md}} p_t(\X) \backwardrate_t^*(\X) \int_{\yadd} \sum_{i=1}^{n+1} \autonet_t^*(\yadd, i | \X) \updelta_{\text{ins}(\X, \yadd, i)}(\Y) \rmd \yadd \rmd \y \\
    &\textstyle = \sum_{m=1}^N \int_{\y \in \mathbb{R}^{md}} p_t(\Y) \forwardrate_t(\Y) \sum_{i=1}^{n+1} \delidxdist(i | n+1) \updelta_{\text{del}(\Y, i)}(\X) \rmd \y.
\end{align}
Now we use the fact that $\updelta_{\text{del}(\Y, i)}(\X)$ is $0$ for any $m \neq n+1$,
\begin{align}
    p_t(\X) \backwardrate_t^*(\X) &\textstyle = \forwardrate_t(n+1) \int_{\y \in \mathbb{R}^{(n+1)d}} p_t(\Y) \sum_{i=1}^{n+1} \delidxdist(i | n+1) \updelta_{\text{del}(\Y, i)} (\X) \rmd \y\\
    &\textstyle = \forwardrate_t(n+1) \sum_{i=1}^{n+1} \delidxdist(i | n+1) \int_{\y \in \mathbb{R}^{(n+1)d}} p_t(\Y)  \updelta_{\text{del}(\Y, i)} (\X) \rmd \y.
\end{align}
Now letting $\Y = \text{ins}(\X, \yadd, i)$,
\begin{align}
    p_t(\X) \backwardrate_t^*(\X) &\textstyle = \forwardrate_t(n+1) \sum_{i=1}^{n+1} \delidxdist(i | n+1) \int_{\yadd} p_t(\text{ins}(\X, \yadd, i)) \rmd \yadd\\
    \backwardrate_t^*(\X) &\textstyle x= \forwardrate_t(n+1) \frac{\sum_{i=1}^{n+1} \delidxdist(i | n+1) \int_{\yadd} p_t(\text{ins}(\X, \yadd, i)) \rmd \yadd}{p_t(\X)}.
\end{align}
\newcommand{\zadd}{\mathbf{z}^{\text{add}}}
To find $\autonet_t^*(\yadd, i | \X)$, we start from \eqref{eq:probflowfull} and set $\Y = \text{ins}(\X, \zadd, j)$ to get
\begin{equation}
    p_t(\X) \backwardrate_t^*(\X) \autonet_t^*(\zadd, j | \X) = p_t(\Y) \forwardrate_t(n+1) \delidxdist(j | n+1) .
\end{equation}
By inspection, we see immediately that
\begin{equation}
    \autonet_t^*(\zadd, j | \X) \propto p_t(\text{ins}(\X, \zadd, j) \delidxdist(j | n+1) . 
\end{equation}
With a re-labeling of $\zadd$ and $j$ we achieve the desired form
\begin{equation}
    \autonet_t^*(\yadd, i | \X) \propto p_t(\text{ins}(\X, \yadd, i)) \delidxdist(i | n+1) .
\end{equation}

\subsection{Proof of Proposition \ref{prop:elbo}}
\label{sec:proof_elbo}

\newcommand{\adjointK}{\hat{\mathcal{K}}^*}
\newcommand{\K}{\hat{\mathcal{K}}}
\newcommand{\mrate}{\lambda^M}
\newcommand{\mdrift}{\mathbf{b}^M}
\newcommand{\inty}{\sum_{m=1}^N \int_{\y \in \mathbb{R}^{md}}}
\newcommand{\intyNont}{\sum_{m=1 \backslash n_t}^N \int_{\y \in \mathbb{R}^{md}}}
\newcommand{\intx}{\sum_{n=1}^N \int_{\x \in \mathbb{R}^{nd}}}
\newcommand{\mtk}{K^{M}}
\newcommand{\Ldiff}{ \hat{\mathcal{L}}^{\text{diff}}  }
\newcommand{\LJ}{\hat{\mathcal{L}}^{\text{J}} }
\newcommand{\aLJ}{\hat{\mathcal{L}}^{\text{J}*  } }

In this section we prove Proposition \ref{prop:elbo} using the notation from the main paper by following the framework of \cite{benton2022denoising}.  We operate on a state
space $\mathcal{X} = \bigcup_{n=1}^N \{n\} \times \mathbb{R}^{nd}$.  On this
space the gradient operator
$\nabla: \mathcal{C}(\mathcal{X}, \mathbb{R}) \rightarrow
\mathcal{C}(\mathcal{X}, \mathcal{X})$ is defined as
$\nabla f(\X) = \nabla^{(nd)}_{\x} f(\X)$ where $\nabla^{(nd)}_{\x}$ is the
standard gradient operator defined as
$\mathcal{C}(\mathbb{R}^{nd}, \mathbb{R}) \rightarrow
\mathcal{C}(\mathbb{R}^{nd}, \mathbb{R}^{nd})$ with respect to
$\x \in \mathbb{R}^{nd}$. We will write integration with respect to a
probability measure defined on $\mathcal{X}$ as an explicit sum over the number
of components and integral over $\mathbb{R}^{nd}$ with respect to a probability
density defined on $\mathbb{R}^{nd}$ i.e.
$\int_{\X} f(\X) \mu(\rmd \X) = \sum_{n=1}^N \int_{\x \in \mathbb{R}^{nd}} f(\X)
p(n)p(\x | n) \rmd \x $ where, for $A \subset \mathbb{R}^{nd}$,
$\int_{(n, A)} \mu(d\X) = \int_{\x \in A} p(n)p(\x | n) \rmd \x$. We will write
$p(\X)$ as shorthand for $p(n)p(\x | n)$.

Following, \cite{benton2022denoising}, we start by augmenting our space with a time variable so that operators become time inhomegeneous on the extended space. We write this as $\bar{\X} = (\X, t)$ where $\bar{\X}$ lives in the extended space $\mathcal{S} = \mathcal{X} \times \mathbb{R}_{\geq 0}$. In the proof, we use the infinitesimal generators for the the forward and backward processes. An infinitesimal generator is defined as 
\begin{equation}
    \mathcal{A}(f)(\bar{\X}) = \underset{t \rightarrow 0}{\text{lim}} \frac{\mathbb{E}_{p_{t|0}(\bar{\Y} | \bar{\X})} [ f(\bar{\Y}) ] - f(\bar{\X}) }{t}
\end{equation}
and can be understood as a probabilistic version of a derivative. For our process on the augmented space $\mathcal{S}$, our generators decompose as $\mathcal{A} = \partial_t + \hat{\mathcal{A}}_t$ where $\hat{\mathcal{A}}_t$ operates only on the spatial components of $\bar{\X}$ i.e. $\X$ \cite{benton2022denoising}.

We now define the spatial infinitesimal generators for our forward and backward process. We will change our treatment of the time variable compared to the main text. Both our forward and backward processes will run from $t=0$ to $t=T$, with the true time reversal of $\X$ following the forward process satisfying $( \Y_t)_{t \in [0, T]} = (\X_{T-t})_{t \in [0, T]}$. Further, we will write $\forwarddiffcoeff_t$ as $g_t$ and $\backwarddiffcoeff_t = g_{T-t}$ as we do not learn $g$ and this is the optimal relation from the time reversal. We define
\begin{equation}
\textstyle    \hat{\mathcal{L}}_t(f)(\X) = \forwarddrift_t(\X) \cdot \nabla f(\X) + \frac{1}{2} g_t^2 \Delta f (\X) + \forwardrate_t(\X) \inty f(\Y) ( \ftk_t( \Y | \X) - \updelta_{\X}(\Y) ) \rmd\y ,
\end{equation}
as well as 
\begin{equation}
    \textstyle \hat{\mathcal{K}}_t(f)(\X) = \backwarddrift_t^\theta(\X) \cdot \nabla f(\X) + \frac{1}{2} g_{T-t}^2 \Delta f (\X) + \backwardrate_t^\theta(\X) \inty f(\Y) ( \btk_t^\theta(\Y | \X) - \updelta_{\X}(\Y) ) \rmd\y 
\end{equation}
where $\Delta = (\nabla \cdot \nabla)$ is the Laplace operator and $\updelta$ is a dirac delta on $\mathcal{X}$ i.e. $\inty \updelta_{\X}(\Y) \rmd \y = 1$ and $\inty f(\Y) \updelta_{\X} (\Y) \rmd \y = f(\X)$.

\paragraph{Verifying Assumption 1.} The first step in the proof is to verify Assumption 1 in
\cite{benton2022denoising}. Letting $\nu_t(\X) = p_{T-t}(\X)$, we assume we can
write $\partial_t p_t (\X) = \hat{\mathcal{K}}_t^* p_t(\X)$ in the form
$\mathcal{M}\nu + c \nu = 0$ for some function
$c : \mathcal{S} \rightarrow \mathbb{R}$, where $\mathcal{M}$ is the generator
of another auxiliary process on $\mathcal{S}$ and $\hat{\mathcal{K}}_t^*$ is the
adjoint operator which satisfies
$\langle \adjointK_t f, h \rangle = \langle f, \K_t h \rangle$ i.e.
\begin{equation}
  \textstyle   \intx h(\X) \hat{\mathcal{K}}_t^*(f)(\X) \rmd \x = \intx f(\X) \hat{\mathcal{K}}_t(h)(\X) \rmd \x
\end{equation}
We now find $\hat{\mathcal{K}}_t^*$. We start by substituting in the form for $\hat{\mathcal{K}}_t$,
\begin{align}
    \textstyle \intx f(\X) \hat{\mathcal{K}}_t(h)(\X) \rmd \x =&\textstyle \intx f(\X) \{ (\backwarddrift_t^\theta(\x) \cdot \nabla h)(\X) + \frac{1}{2} g_{T-t}^2 \Delta h (\X) + \\
    & \quad \textstyle \backwardrate_t^\theta(\X) \inty h(\Y) ( \btk_t^\theta(\Y | \X) - \updelta_{\X}(\Y) ) \rmd\y \} \rmd \x 
\end{align}
We first focus on the RHS terms corresponding to the diffusion part of the process
\begin{align}
  &\textstyle \intx f(\X) \{ (\backwarddrift_t^\theta \cdot \nabla h)(\X) + \frac{1}{2} g_{T-t}^2 \Delta h (\X) \} \rmd \x\\
  &\textstyle = \intx f(\X) (\backwarddrift_t^\theta \cdot \nabla h)(\X) + \frac{1}{2} g_{T-t}^2 f(\X) \nabla \cdot \nabla h (\X) \rmd \x\\
  &\textstyle = \intx f(\X) (\backwarddrift_t^\theta \cdot \nabla h)(\X) + \frac{1}{2} g_{T-t}^2 h(\X) \nabla \cdot \nabla f (\X) \rmd \x\\
  &\textstyle = \intx  -h (\X) \nabla \cdot ( f \backwarddrift_t^\theta)(\X) + \frac{1}{2} g_{T-t}^2 h(\X) \nabla \cdot \nabla f (\X) \rmd \x\\
  &\textstyle = \intx  h (\X) \{ - \nabla \cdot ( f \backwarddrift_t^\theta)(\X) + \frac{1}{2} g_{T-t}^2 \nabla \cdot \nabla f (\X) \} \rmd \x\\
  &\textstyle = \intx  h (\X) \{ -f(\X) \nabla \cdot \backwarddrift_t^\theta(\X) - \nabla f(\X) \cdot \backwarddrift_t^\theta(\X) + \frac{1}{2} g_{T-t}^2 \nabla \cdot \nabla f (\X) \} \rmd \x .
\end{align}
where we apply integration by parts twice to arrive at the third line and once to arrive at the fourth line. We now focus on the RHS term corresponding to the jump part of the process
\begin{align}
    &\textstyle \intx f(\X) \{ \backwardrate_t^\theta(\X) \inty h(\Y) ( \btk_t^\theta(\Y | \X) - \updelta_{\X}(\Y) ) \rmd \y \} \rmd \x \\
    &\textstyle =\inty h(\Y) \{ \intx f(\X) \backwardrate_t^\theta(\X) ( \btk_t^\theta(\Y | \X) - \updelta_{\X}(\Y) ) \rmd \x \} \rmd \y\\
    &\textstyle =\intx h(\X) \{ \inty f(\Y) \backwardrate_t^\theta(\Y) ( \btk_t^\theta(\X | \Y) - \updelta_{\Y}(\X) ) \rmd \y \} \rmd \x , 
\end{align}
where on the last line we have relabelled $\X$ to $\Y$ and $\Y$ to $\X$. Putting both re-arranged forms for the RHS together, we obtain
\begin{align}
    &\textstyle \intx h(\X) \hat{\mathcal{K}}_t^*(f)(\X) \rmd \x  = \\
    &\textstyle \hspace{2cm} \intx h(\X) \{ -f(\X) \nabla \cdot \backwarddrift_t^\theta(\X) - \nabla f(\X) \cdot \backwarddrift_t^\theta(\X) + \frac{1}{2} g_{T-t}^2 \nabla \cdot \nabla f (\X) + \\
    &\textstyle  \hspace{2cm} \inty f(\Y) \backwardrate_t^\theta(\Y) ( \btk_t^\theta(\X | \Y) - \updelta_{\Y}(\X) ) \rmd \y  \} \rmd \x . 
\end{align}
We therefore have
\begin{align}
    \adjointK_t(f)(\X) =&\textstyle   -f(\X) \nabla \cdot \backwarddrift_t^\theta(\X) - \nabla f(\X) \cdot \backwarddrift_t^\theta(\X) + \frac{1}{2} g_{T-t}^2 \Delta f(\X) + \\
    &\textstyle  \hspace{2cm} \inty f(\Y) \backwardrate_t^\theta(\Y) ( \btk_t^\theta(\X | \Y) - \updelta_{\X}(\Y) ) \rmd \y . 
\end{align}
Now we re-write $\partial_t p_t(\X) = \adjointK_t p_t(\x)$ in the form $\mathcal{M}\nu + c \nu = 0$. We start by re-arranging
\begin{equation}
\partial_t p_t(\X) = \adjointK_t p_t(\X) \implies 0 = \partial_t \nu_t(\X) + \adjointK_{T-t} \nu_t(\X) . 
\end{equation}
Substituting in our form for $\adjointK_t$ we obtain
\begin{align}
    0 = &\textstyle \partial_t \nu_t(\X) -  \nu_t(\X) \nabla \cdot \backwarddrift_{T-t}^\theta(\X) - \backwarddrift_{T-t}^\theta(\X) \cdot \nabla \nu_t(\X) + \frac{1}{2}g_{t}^2 \Delta \nu_t(\X)\\
    &\textstyle  + \inty \nu_t(\Y) \backwardrate_{T-t}^\theta(\Y) ( \btk_{T-t}^\theta(\X | \Y) - \updelta_{\X}(\Y) ) \rmd\y
\label{eq:assm1LHS}
\end{align}

We define our auxiliary process to have generator $\mathcal{M} = \partial_t + \hat{\mathcal{M}}_t$ with
\begin{equation}
  \textstyle 
    \hat{\mathcal{M}}_t(f)(\X) = \mdrift_t (\X) \cdot \nabla f(\X) + \frac{1}{2} g_{t}^2 \Delta f (\X) + \mrate_t(\X) \inty f(\Y) ( \mtk_t(\Y | \X) - \updelta_{\X}(\Y) ) \rmd \y
\end{equation}
which is a jump diffusion process with drift $\mdrift_t$, diffusion coefficient $g_{t}$, rate $\mrate_t$ and transition kernel $\mtk_t$. Then if we have $\mdrift_t = - \backwarddrift_{T-t}^\theta$,
\begin{equation}
  \label{eq:def_mrate}
  \textstyle 
    \mrate_t(\X) = \inty \backwardrate_{T-t}^\theta(\Y) \btk_{T-t}^\theta(\X | \Y) \rmd \y , 
\end{equation}
and
\begin{equation}
  \label{eq:def_mtkt}
  \textstyle 
    \mtk_t (\Y | \X) \propto \backwardrate_{T-t}^\theta(\Y) \btk_{T-t}^\theta(\X | \Y) . 
\end{equation}
Then we have \eqref{eq:assm1LHS} can be rewritten as 
\begin{align}
    0 = &\textstyle \partial_t \nu_t(\X) - \nu_t(\X) \nabla \cdot \backwarddrift_{T-t}^\theta(\X) + \mdrift_t(\X) \cdot \nabla \nu_t(\X) + \frac{1}{2} g_{t}^2 \Delta \nu_t(\X)\\
    &\textstyle - \nu_t(\X) \backwardrate_{T-t}^\theta(\X) + \nu_t(\X) \inty \backwardrate_{T-t}^\theta(\Y) \btk_{T-t}^\theta(\X | \Y) \rmd \y + \\
    &\textstyle  \mrate_t(\x) \inty \nu_t(\Y) ( \mtk_t(\Y | \X) - \updelta_{\X}(\Y) )\rmd \y
\end{align}
which is in the form $\mathcal{M}(\nu)(\bar{\X}) + c(\bar{\X}) \nu(\bar{\X}) = 0$ if we let
\begin{equation}
  \textstyle 
    c(\bar{\X}) = -\nabla \cdot \backwarddrift_{T-t}^\theta(\X) - \backwardrate_{T-t}^\theta(\X) + \inty \backwardrate_{T-t}^\theta(\Y) \btk_{T-t}^\theta(\X | \Y) \rmd \y . 
\end{equation}

\paragraph{Verifying Assumption 2.} Now that we have verified Assumption 1, the
second step in the proof is Assumption 2 from \cite{benton2022denoising}. We
assume there is a bounded measurable function
$\alpha : \mathcal{S} \rightarrow (0, \infty)$ such that
$\alpha \mathcal{M} f = \mathcal{L}(f \alpha) - f \mathcal{L}\alpha$ for all
functions $f : \mathcal{X} \rightarrow \mathbb{R}$ such that
$f \in \mathcal{D}(\mathcal{M})$ and $f \alpha \in
\mathcal{D}(\mathcal{L})$. Substituting in $\mathcal{M}$ and $\mathcal{L}$ we
get
\begin{align}
  &\textstyle \alpha_t(\X) [ \partial_t f(\X) - \backwarddrift_{T-t}^\theta(\X) \cdot \nabla f(\X) \\
  & \qquad \textstyle + \frac{1}{2} g_{t}^2 \Delta f(\X) + \mrate_t(\X) \inty f(\Y) ( \mtk_t(\Y | \X) - \updelta_{\X}(\Y) ) \rmd \y ]\\
  &\textstyle = \partial_t( f \alpha_t) (\X) + \forwarddrift_t(\X) \cdot \nabla (f \alpha_t)(\X) + \frac{1}{2} g_t^2 \Delta (f \alpha_t) (\X)\\
  & \qquad \textstyle + \forwardrate_t(\X) \inty f(\Y) \alpha_t(\Y) ( \ftk_t(\Y | \X) - \updelta_{\X}(\Y) ) \rmd \y\\
  &\textstyle  \quad - f(\X) [ \partial_t \alpha_t(\X) + \forwarddrift_t(\X) \cdot \nabla \alpha_t(\X) + \frac{1}{2} g_t^2 \Delta \alpha_t (\X) \\
  & \qquad \textstyle + \forwardrate_t(\X) \inty \alpha_t(\Y) ( \ftk_t(\Y | \X) - \updelta_{\X}(\Y) ) \rmd \y ] 
    \label{eq:assm2equality}
\end{align}
Since $f$ does not depend on time, $\partial_t f(\X) = 0$ and $\partial_t(f \alpha_t) (\X) = f(\X) \partial_t \alpha_t (\X)$ thus the $\partial_t$ terms on the RHS also cancel out. Comparing terms on the LHS and RHS relating to the diffusion part of the process we obtain
\begin{align}
    &\textstyle - \alpha_t(\X) ( \backwarddrift_{T-t}^\theta(\X) \cdot \nabla f(\X)) + \frac{1}{2} \alpha_t(\X) g_t^2 \Delta f(\X) = \\
    &\textstyle \forwarddrift_t(\X) \cdot \nabla(f \alpha_t) (\X) + \frac{1}{2} g_t^2 \Delta (f \alpha_t) (\X) - f(\X) \forwarddrift_t(\X) \cdot \nabla \alpha_t(\X) - \frac{1}{2} f(\X) g_t^2 \Delta \alpha_t(\X) . 
\end{align}
Therefore, we get 
\begin{align}
    &\textstyle -\alpha_t(\X) (\backwarddrift_{T-t}^\theta(\X) \cdot \nabla f(\X)) + \frac{1}{2} \alpha_t(\X) g_t^2 \Delta f(\X) = \\
    &\textstyle  \quad \quad \forwarddrift_t(\X) \cdot ( f(\X) \nabla \alpha_t(\X) + \alpha_t(\X) \nabla f(\X)) \\
  &\textstyle \quad \quad + \frac{1}{2} g_t^2 \big( 2 \nabla f(\X) \cdot \nabla \alpha_t(\X) +f(\X) \Delta \alpha_t(\X) + \alpha_t (\X) \Delta f(\X) \big) \\
    &\textstyle  \quad \quad - f(\X) \forwarddrift_t((\X) \cdot \nabla \alpha_t(\X) - \frac{1}{2} f(\X) g_t^2 \Delta \alpha_t(\X) . 
\end{align}
Simplifying the above expression, we get
\begin{align}
    -\alpha_t(\X) ( \backwarddrift_{T-t}^\theta(\X) \cdot \nabla f(\X)) &\textstyle = \alpha_t(\X) \forwarddrift_t(\X) \cdot \nabla f(\X) + g_t^2 \nabla f(\X) \cdot \nabla \alpha_t(\X)\\
    (-\alpha_t(\X) \backwarddrift_{T-t}^\theta(\X)) \cdot \nabla f(\X) &\textstyle = ( \alpha_t(\X) \forwarddrift_t(\X) + g_t^2 \nabla \alpha_t(\X) ) \cdot \nabla f(\X) . 
\end{align}
This is true for any $f$ implying
\begin{align}
    -\alpha_t(\X) \backwarddrift_{T-t}^\theta(\X) = \alpha_t(\X) \forwarddrift_t(\X) + g_t^2 \nabla \alpha_t(\X) . 
\end{align}
This implies that $\alpha_t(\X)$ satisfies 
\begin{equation}
  \textstyle 
    \nabla \log \alpha_t(\X) = - \frac{1}{g_t^2} ( \forwarddrift_t(\X) + \backwarddrift_{T-t}^\theta (\X))
    \label{eq:alphaDiffRelation}
\end{equation}
Comparing terms from the LHS and RHS of \eqref{eq:assm2equality} relating to the jump part of the process we obtain
\begin{align}
    &\textstyle \alpha_t(\X) \mrate_t(\X) \inty f(\Y) ( \mtk_t(\Y | \X) - \updelta_{\X}(\Y) ) \rmd \y = \\
    &\textstyle  \hspace{1cm} \forwardrate_t(\X) \inty f(\Y) \alpha_t(\Y) ( \ftk_t(\Y | \X) - \updelta_{\X}(\Y) ) \rmd \y\\
    &\textstyle  \hspace{1cm} - f(\X) \forwardrate_t(\X) \inty \alpha_t(\Y) ( \ftk_t(\Y | \X) - \updelta_{\X}(\Y) ) \rmd \y .
\end{align}
Hence, we have 
\begin{align}
    &\textstyle \alpha_t(\X)  \inty f(\Y) \mrate_t(\X) \mtk_t(\Y | \X) \rmd \y - \alpha_t(\X) \mrate_t(\X) f(\X) = \\
    &\textstyle  \hspace{1cm} \forwardrate_t(\X) \inty f(\Y) \alpha_t(\Y) \ftk_t(\Y | \X) \rmd \y \\
  & \qquad \quad \textstyle - f(\X) \forwardrate_t(\X) \inty \alpha_t(\Y) \ftk_t(\Y | \X) \rmd \y . 
\end{align}
Recalling the definitions of $\mrate_t(\X)$ and $\mtk(\Y | \X)$, \eqref{eq:def_mrate} and \eqref{eq:def_mtkt}, we get 
\begin{align}
  &\textstyle \alpha_t(\X) \inty f(\Y) \backwardrate_{T-t}^\theta(\Y) \btk_{T-t}^\theta(\X | \Y) \rmd \y \\
  & \qquad \qquad \textstyle - \alpha_t(\X) f(\X) \inty \backwardrate_{T-t}^\theta(\Y) \btk_{T-t}^\theta(\X | \Y) \rmd \y = \\
  &\textstyle  \hspace{1cm} \forwardrate_t(\X) \inty f(\Y) \alpha_t(\Y) \ftk_t(\Y | \X) \rmd \y \\
  & \qquad \textstyle - f(\X) \forwardrate_t(\X) \inty \alpha_t(\Y) \ftk_t(\Y | \X) \rmd \y .
\end{align}
This equality is satisfied if $\alpha_t(\X)$ follows the following relation
\begin{equation}
    \alpha_t(\Y) = \alpha_t(\X) \frac{\backwardrate_{T-t}^\theta(\Y) \btk_{T-t}^\theta(\X | \Y)}{\forwardrate_t(\X) \ftk_t(\Y | \X)} \quad \text{for } n \neq m
    \label{eq:alphaJumpRelation}
\end{equation}
We only require this relation to be satisfied for $n \neq m$ because both $\ftk_t(\Y | \X)$ and $\btk_{T-t}^\theta(\X | \Y)$ are $0$ for $n = m$.
We note at this point, as in \cite{benton2022denoising}, that if we have $\alpha_t(\X) = 1/p_t(\X)$ and $\backwardrate_{T-t}^\theta$ and $\btk_{T-t}^\theta(\X | \Y)$ equal to the true time-reversals, then both \eqref{eq:alphaDiffRelation}, and \eqref{eq:alphaJumpRelation} are satisfied. However, $\alpha_t(\X) = 1/p_t(\X)$ is not the only $\alpha_t$ to satisfy these equations. \eqref{eq:alphaDiffRelation} and \eqref{eq:alphaJumpRelation} can be thought of as enforcing a certain parameterization of the generative process in terms of $\alpha_t$ \cite{benton2022denoising}.

\paragraph{Concluding the proof.} Now for the final part of the proof, we
substitute our value for $\alpha$ into the $\mathcal{I}_{\text{ISM}}$ loss from
\cite{benton2022denoising} which is equal to the negative of the evidence lower
bound on $\mathbb{E}_{\pdata(\X_0)} [ \log p_0^\theta(\X_0) ]$ up to
a constant independent of $\theta$. Defining
$\beta_t(\X_t) = 1 / \alpha_t(\X_t)$, we have
\begin{equation}
  \textstyle 
    \mathcal{I}_{\text{ISM}} (\beta) = \int_{0}^T \mathbb{E}_{p_t(\X_t)} [ \frac{\hat{\mathcal{L}}_t^* \beta_t(\X_t)}{\beta_t(\X_t)} + \hat{\mathcal{L}}_t \log \beta_t(\X_t) ] \rmd t . 
\end{equation}
 We split the spatial infintesimal generator of the forward process into the generator corresponding to the diffusion and the generator corresponding to the jump part, $\hat{\mathcal{L}} = \Ldiff_t + \LJ_t$ with
 \begin{equation}
    \textstyle  \Ldiff_t(f)(\X) = \forwarddrift_t(\X) \cdot \nabla f(\X) + \frac{1}{2} g_t^2 \Delta f(\X) . 
 \end{equation}
 and 
 \begin{equation}
   \textstyle \LJ_t(f)(\X) = \forwardrate_t(\X) \inty f(\Y) ( \ftk_t(\Y | \X) - \updelta_{\X}(\Y) ) \rmd \y . 
 \end{equation}
 By comparison with the approach to find the adjoint $\adjointK_t$, we also have
 $\hat{\mathcal{L}}_t^* = \hat{\mathcal{L}}_t^{\text{diff}*} + \aLJ_t$ with
 \begin{equation}
\textstyle     \hat{\mathcal{L}}_t^{\text{diff}*}(f)(\X) = -f(\X) \nabla \cdot \forwarddrift_t(\X) - \nabla f(\X) \cdot \forwarddrift_t(\X) + \frac{1}{2} g_t^2 \Delta f(\X) .
   \end{equation}
In addition, we get 
 \begin{equation}
     \textstyle \aLJ_t(f)(\X) = \inty f(\Y) \forwardrate_t(\Y) ( \ftk_t(\X | \Y) - \updelta_{\X}(\Y) ) \rmd \y .
 \end{equation}
Finally, $\mathcal{I}_{\text{ISM}}$ becomes
\begin{align}
    \mathcal{I}_{\text{ISM}} (\beta) &\textstyle = \int_{0}^T \mathbb{E}_{p_t(\X_t)} [ \frac{\hat{\mathcal{L}}_t^{\text{diff}*} \beta_t(\X_t)}{\beta_t(\X_t)} + \Ldiff_t \log \beta_t(\X_t) ] dt + 
    \int_{0}^T \mathbb{E}_{p_t(\X_t)} [ \frac{\aLJ_t \beta_t(\X_t)}{\beta_t(\X_t)} + \LJ \log \beta_t(\X_t) ] \rmd t \\
    &\textstyle = \mathcal{I}_{\text{ISM}}^{\text{diff}}(\beta) + \mathcal{I}_{\text{ISM}}^{\text{J}}(\beta) , 
\end{align}
where we have named the two terms corresponding to the diffusion and jump part of the process as $\mathcal{I}_{\text{ISM}}^{\text{diff}}$, $\mathcal{I}_{\text{ISM}}^{\text{J}}$ respectively. For the diffusion part of the loss, we use the denoising form of the objective proven in Appendix E of \cite{benton2022denoising} which is equivalent to $\mathcal{I}_{\text{ISM}}^{\text{diff}}$ up to a constant independent of $\theta$
\begin{equation}
  \textstyle 
    \mathcal{I}_{\text{ISM}}^{\text{diff}}(\beta) = \int_{0}^T \mathbb{E}_{p_{0, t}(\X_0, \X_t)} [ \frac{\Ldiff_t ( p_{t|0}( \cdot | \X_0) \alpha_t(\cdot))(\X_t)}{p_{t|0}(\X_t | \X_0) \alpha_t(\X_t) } - \Ldiff_t \log ( p_{t|0}(\cdot | \X_0) \alpha_t(\cdot) ) (\X_t) ] \rmd t + \text{const} .
\end{equation}
To simplify this expression, we first re-arrange $\Ldiff_t(h)$ for some general function $h : \mathcal{S} \rightarrow \mathbb{R}$.
\begin{align}
    \frac{\Ldiff_t(h)}{h} - \Ldiff_t(\log h) &\textstyle  = \frac{\forwarddrift_t \cdot \nabla h}{h} + \frac{1}{2} g_t^2 \frac{\Delta h}{h} - \forwarddrift_t \cdot \nabla \log h - \frac{1}{2} g_t^2 \Delta \log h \\
    &\textstyle = \frac{1}{2} g_t^2 ( \frac{\nabla \cdot \nabla h}{h} - \nabla \cdot \nabla \log h ) \\
    &\textstyle = \frac{1}{2} g_t^2 \norm{\nabla \log h}^2 . 
\end{align}
Setting $h = p_{t|0}(\cdot | \X_0) \alpha_t(\cdot)$, our diffusion part of the loss becomes
\begin{equation}
  \textstyle 
    \mathcal{I}_{\text{ISM}}^{\text{diff}}(\beta) = \frac{1}{2} \int_{0}^T g_t^2 \mathbb{E}_{p_{0, t}(\X_0, \X_t) } [ \norm{ \nabla \log p_{t|0}(\X_t | \X_0) + \nabla \log \alpha_t(\X_t) }^2 ] \rmd t + \text{const}
\end{equation}
We then directly parameterize $\nabla \log \alpha_t(\X_t)$ as $- s_t^\theta(\X_t)$
\begin{equation}
  \textstyle 
    \mathcal{I}_{\text{ISM}}^{\text{diff}}(\beta) = \frac{1}{2} \int_{0}^T g_t^2 \mathbb{E}_{p_{0, t}(\X_0, \X_t) } [ \norm{ \nabla \log p_{t|0}(\X_t | \X_0) - s_t^\theta(\X_t) }^2 ] \rmd t + \text{const} .
\end{equation}
We now focus on the expectation within the integral to re-write it in an easy to calculate form
\begin{align}
    &\textstyle \mathbb{E}_{p_{0, t}(\X_0, \X_t) } [ \norm{ \nabla \log p_{t|0}(\X_t | \X_0) - s_t^\theta(\X_t) }^2 ] \\
    &\textstyle = \mathbb{E}_{p_{0,t}(\X_0, \X_t)} [ \norm{s_t^\theta(\X_t)}^2 - 2 s_t^\theta(\X_t)^T \nabla \log p_{0,t}(\X_0, \X_t) ] + \text{const}
\end{align}
Now we note that we can re-write $\nabla \log p_{0,t}(\X_0, \X_t)$ using $M_t$ where $M_t$ is a mask variable $M_t \in \{0, 1\}^{n_0}$ that is 0 for components of $\X_0$ that have been deleted to get to $\X_t$ and 1 for components that remain in $\X_t$.
\begin{align}
    \nabla \log p_{0,t}(\X_0, \X_t) &\textstyle = \frac{1}{p_{0,t}(\X_0, \X_t)} \nabla p_{0,t}(\X_0, \X_t) \\
    &\textstyle = \frac{1}{p_{0,t}(\X_0, \X_t)} \nabla \sum_{M_t} p_{0,t}(\X_0, \X_t, M_t)\\
    &\textstyle = \sum_{M_t} \frac{1}{p_{0,t}(\X_0, \X_t)} \nabla p_{0,t}(\X_0, \X_t, M_t)\\
    &\textstyle = \sum_{M_t} \frac{p(n_t, M_t, \X_0)}{p_{0,t}(\X_0, \X_t)} \nabla p_{t|0}(\x_t | n_t, \X_0, M_t)\\
    &\textstyle = \sum_{M_t} \frac{p(M_t | \X_0, \X_t)}{p(\x_t | n_t, \X_0, M_t)} \nabla p_{t|0}(\x_t | n_t, \X_0, M_t)\\
    &\textstyle = \mathbb{E}_{p(M_t | \X_0, \X_t)} [ \nabla \log p_{t|0}(\x_t | n_t, \X_0, M_t) ]
\end{align}
Substituting this back in we get
\begin{align}
     &\textstyle \mathbb{E}_{p_{0, t}(\X_0, \X_t) } [ \norm{ \nabla \log p_{t|0}(\X_t | \X_0) - s_t^\theta(\X_t) }^2 ] \\
     &\textstyle = \mathbb{E}_{p_{0,t}(\X_0, \X_t)} [ \norm{s_t^\theta(\X_t)}^2 - 2 s_t^\theta(\X_t)^T \mathbb{E}_{p(M_t | \X_0, \X_t)} [ \nabla \log p_{t|0}(\x_t | n_t, \X_0, M_t) ] ] + \text{const}\\
     &\textstyle = \mathbb{E}_{p_{0,t}(\X_0, \X_t, M_t)} [ \norm{\nabla \log p_{t|0}(\x_t | n_t, \X_0, M_t) - s_t^\theta(\X_t) }^2 ] + \text{const} . 
\end{align}
Therefore, the diffusion part of $\mathcal{I}_{\text{ISM}}$ can be written as
\begin{equation}
\textstyle     \mathcal{I}_{\text{ISM}}^{\text{diff}}(\beta) = \frac{T}{2} \mathbb{E}_{\mathcal{U}(t; 0, T) p_{0,t}(\X_0, \X_t, M_t) } [g_t^2 \norm{\nabla \log p_{t|0}(\x_t | n_t, \X_0, M_t) - s_t^\theta(\X_t)}^2 ] + \text{const} . 
\end{equation}
We now focus on the jump part of the loss $\mathcal{I}_{\text{ISM}}^{\text{J}}$. We first substitute in $\hat{\mathcal{L}}^{\text{J}}_t$ and $\aLJ_t$
\begin{align}
    \textstyle \mathcal{I}_{\text{ISM}}^{\text{J}} = \int_0^T \mathbb{E}_{p_t(\X_t)} [&\textstyle  \sum_m \int_{\y \in \mathbb{R}^{md}} \forwardrate_t(\Y) \frac{\beta_t(\Y)}{\beta_t(\X_t)} ( \ftk_t(\X_t | \Y) - \updelta_{\Y} (\X_t) ) \rmd\y + \\
    &\textstyle \forwardrate_t(\X_t) \inty \ftk_t(\Y | \X_t) \log \beta_t(\Y) \rmd\y - \forwardrate_t(\X_t) \log \beta_t(\X_t)] \rmd t. \label{eq:def_ism_j}
\end{align}
Noting that $\beta_t(\X_t) = 1 / \alpha_t(\X_t)$, we get 
\begin{equation}
    \textstyle \frac{\beta_t ( \X_t)}{\beta_t(\Y)} = \frac{\backwardrate_{T-t}^\theta (\Y) \btk_{T-t}^\theta(\X_t | \Y)}{\forwardrate_t(\X_t) \ftk_t(\Y | \X_t)} \quad \text{for } n_t \neq m
    \label{eq:betaratio1}
\end{equation}
or swapping labels for $\X_t$ and $\Y$,
\begin{equation}
    \textstyle \frac{\beta_t ( \Y)}{\beta_t(\X_t)} = \frac{\backwardrate_{T-t}^\theta (\X_t) \btk_{T-t}^\theta(\Y | \X_t)}{\forwardrate_t(\Y) \ftk_t(\X_t | \Y)} \quad \text{for } n_t \neq m
    \label{eq:betaratio2}
\end{equation}
Substituting \eqref{eq:betaratio1} into the second line and
$\eqref{eq:betaratio2}$ into the first line of \eqref{eq:def_ism_j} and using
the fact that $\ftk_t(\X_t | \Y) = 0$ for $n_t = m$, we obtain
\begin{align}
    \mathcal{I}_{\text{ISM}}^{\text{J}} = \textstyle \int_0^T \mathbb{E}_{p_t(\X_t)} [&\textstyle  \intyNont \forwardrate_t(\Y)  \frac{\backwardrate_{T-t}^\theta (\X_t) \btk_{T-t}^\theta(\Y | \X_t)}{\forwardrate_t(\Y) \ftk_t(\X_t | \Y)} \ftk_t(\X_t | \Y) \rmd \y \\
    &\textstyle - \inty \forwardrate_t(\Y) \frac{\beta_t(\Y)}{\beta_t(\X_t)} \updelta_{\Y} (\X_t)  \rmd\y \\
    &\textstyle +\forwardrate_t(\X_t) \intyNont \ftk_t(\Y | \X_t) \{ \log \beta_t(\X_t) - \log \backwardrate_{T-t}^\theta(\Y) \\ & \textstyle - \log \btk_{T-t}^\theta(\X_t | \Y) 
    \textstyle   + \log \forwardrate_t(\X_t) + \log \ftk_t(\Y | \X_t)  \} \rmd \y \\
    &\textstyle - \forwardrate_t(\X_t) \log \beta_t(\X_t)] \rmd t . 
\end{align}
Hence, we have 
\begin{align}
  \textstyle 
    \mathcal{I}_{\text{ISM}}^{\text{J}} &\textstyle = \int_{0}^T \mathbb{E}_{p_t(\X_t)} [\textstyle  \backwardrate_{T-t}^\theta(\X_t) \intyNont \btk_{T-t}^\theta(\Y | \X_t) \rmd\y - \forwardrate_t(\X_t) \frac{\beta_t(\X_t)}{\beta_t(\X_t)}  + \\
    & \qquad\textstyle \forwardrate_t(\X_t) \intyNont \ftk_t(\Y | \X_t) \{ -\log \backwardrate_{T-t}^\theta(\Y) - \log \btk_{T-t}^\theta(\X_t | \Y) \} \rmd \y ] \rmd t + \text{const} .
\end{align}
This can be rewritten as 
\begin{align}
  \textstyle 
    \mathcal{I}_{\text{ISM}}^{\text{J}} = \int_{0}^T \mathbb{E}_{p_t(\X_t)} [ \backwardrate_{T-t}^\theta(\X_t) + \forwardrate_t(\X_t) \mathbb{E}_{\ftk_t(\Y | \X_t)} [ - \log \backwardrate_{T-t}^\theta(\Y) - \log \btk_{T-t}^\theta(\X_t | \Y) ] ] \rmd t + \text{const} . 
\end{align}
Therefore, we have 
\begin{align}
   \mathcal{I}_{\text{ISM}}^{\text{J}} =  T\mathbb{E}_{\mathcal{U}(t; 0, T) p_t(\X_t) \ftk_t(\Y | \X_t) } [ \backwardrate_{T-t}^\theta(\X_t) - \forwardrate_t(\X_t) \log \backwardrate_{T-t}^\theta(\Y) - \forwardrate_t(\X_t) \log \btk_{T-t}^\theta(\X_t | \Y) ] + \text{const} . 
\end{align}
Finally, using  the definition of the forward and backward kernels, i.e.
$\ftk_t(\Y | \X_t) = \sum_{i=1}^n \delidxdist(i | n) \updelta_{\text{del}(\X,
  i)}(\Y)$ and
$\btk_{T-t}^\theta(\X_t | \Y) = \int_{\xadd} \sum_{i=1}^{n}
\autonet^\theta_t(\xadd, i | \Y) \updelta_{\text{ins}(\Y, \xadd, i)}(\X_t) \rmd
\xadd$, we get
\begin{align}
   \mathcal{I}_{\text{ISM}}^{\text{J}} =  T\mathbb{E}_{\mathcal{U}(t; 0, T) p_t(\X_t) } [&\textstyle  \inty \sum_{i=1}^{n_t} \delidxdist(i | n_t) \updelta_{\text{del}(\X_t, i)}(\Y) ( \backwardrate_{T-t}^\theta(\X_t) - \forwardrate_t(\X_t) \log \backwardrate_{T-t}^\theta(\Y) \\ 
    &\textstyle - \forwardrate_t(\X_t) \log \btk_{T-t}^\theta(\X_t | \Y) ) \rmd \y ] + \text{const} 
\end{align}
We get 
\begin{align}
  \mathcal{I}_{\text{ISM}}^{\text{J}} &=  T\mathbb{E}_{\mathcal{U}(t; 0, T) p_t(\X_t) \delidxdist(i | n_t) \updelta_{\text{del}(\X_t, i)}(\Y) } \\
  & \qquad [\textstyle  \backwardrate_{T-t}^\theta(\X_t) - \forwardrate_t(\X_t) \log \backwardrate_{T-t}^\theta(\Y) - \forwardrate_t(\X_t) \log \btk_{T-t}^\theta(\X_t | \Y) ] + \text{const} . 
\end{align}
Therefore, we have 
\begin{align}
  \mathcal{I}_{\text{ISM}}^{\text{J}} &=  T\mathbb{E}_{\mathcal{U}(t; 0, T) p_t(\X_t) \delidxdist(i | n_t) \updelta_{\text{del}(\X_t, i)}(\Y) } \\
                                      &\qquad [\textstyle  \backwardrate_{T-t}^\theta(\X_t) - \forwardrate_t(\X_t) \log \backwardrate_{T-t}^\theta(\Y) - \forwardrate_t(\X_t) \log A_{T-t}^\theta(\xadd, i | \Y) ] + \text{const} . 
\end{align}
Putting are expressions for $\mathcal{I}_{\text{ISM}}^{\text{diff}}$ and $\mathcal{I}_{\text{ISM}}^{\text{J}}$ together we obtain
\begin{align}
    \mathcal{I}_{\text{ISM}} = &\textstyle \frac{T}{2} \mathbb{E} [g_t^2 \norm{\nabla \log p_{t|0}(\x_t | n_t, \X_0, M_t) - s_t^\theta(\X_t)}^2 ] + \\
    &\textstyle T\mathbb{E} [ \backwardrate_{T-t}^\theta(\X_t) - \forwardrate_t(\X_t) \log \backwardrate_{T-t}^\theta(\Y) - \forwardrate_t(\X_t) \log A_{T-t}^\theta(\xadd, i | \Y) ] + \text{const} . 
\end{align}
We get that $-\mathcal{I}_{\text{ISM}}$ gives us our evidence lower bound on
$\mathbb{E}_{\pdata(\X_0)} [ \log p_0^\theta(\X_0) ]$ up to a constant that does
not depend on $\theta$. In the main text we have used a time notation such that
the backward process runs backwards from $t=T$ to $t=0$. To align with the
notation of time used in the main text we change $T-t$ to $t$ on subscripts for
$\backwardrate_{T-t}^\theta$ and $A_{T-t}^\theta$. We also will use the fact
that $\forwardrate_t(\X_t)$ depends only on the number of components in $\X_t$,
$\forwardrate_t(\X_t) = \forwardrate_t(n_t)$.
\begin{align}
    \mathcal{L}(\theta) = &\textstyle  -\frac{T}{2} \mathbb{E} [g_t^2 \norm{ s_t^\theta(\X_t) - \nabla_{\x_t} \log p_{t|0}(\x_t | n_t, \X_0, M_t)}^2 ] + \\
    &\textstyle T\mathbb{E} [ - \backwardrate_{t}^\theta(\X_t) + \forwardrate_t(n_t) \log \backwardrate_{t}^\theta(\Y) + \forwardrate_t(n_t) \log A_{t}^\theta(\xadd, i | \Y) ] + \text{const} . 
\end{align}

\paragraph{Tightness of the lower bound}
\label{sec:tightness-}
Now that we have derived the ELBO as in Proposition \ref{prop:elbo}, we show that the maximizers of the ELBO are tight, i.e.~that they close the variational
gap. We do this by proving the general ELBO presented in \cite{benton2022denoising} has this property and therefore ours, which is a special case of this general ELBO, also has that the optimum parameters close the variational gap.

To state our proposition, we recall the setting of
\cite{benton2022denoising}. The forward noising process is denoted
$(\bfY_t)_{t \geq 0}$ and associated with an infinitesimal generator
$\hat{\mathcal{L}}$ its extension $(t, \bfY_t)_{t \geq 0}$ is associated with
the infinitesimal generator $\mathcal{L}$, i.e.~
$\mathcal{L} = \partial_t + \hat{\mathcal{L}}$. We also define the
score-matching operator $\Phi$ given for any $f$ for which it is defined by
\begin{equation}
  \Phi(f) = \mathcal{L}(f)/f - \mathcal{L}(\log(f)) . 
\end{equation}
We recall that according to \cite[Equation (8)]{benton2022denoising} and under
\cite[Assumption 1, Assumption2]{benton2022denoising}, we have
\begin{equation}
 \textstyle \log p_T(\bfY_0) \geq \mathbb{E}[\log p_0(\bfY_T) - \int_0^T \mathcal{L}(v/\beta)/(v/\beta) + \mathcal{L}(\log \beta) \rmd t] ,
\end{equation}
with $v_t = p_{T-t}$ for any $t \in \ccint{0,T}$. 
We define the \emph{variational gap} $\mathrm{Gap}$ as follows
\begin{equation}
  \mathrm{Gap} = \mathbb{E}[\textstyle \log p_T(\bfY_0) - \log p_0(\bfY_T) + \int_0^T \mathcal{L}(v/\beta)/(v/\beta) + \mathcal{L}(\log \beta) \rmd t] .
\end{equation}
In addition, using It\^o Formula, we have that $\log v_T(\bfY_T) - \log v_0(\bfY_0) = \int_0^T \mathcal{L}(v) \rmd t$.
Assuming that $\expeLigne{\abs{\log v_T(\bfY_T) - \log v_0(\bfY_0)}} < +\infty$, we get
\begin{equation}
  \textstyle \mathrm{Gap} = \expeLigne{\int_0^T -\mathcal{L}(\log v) + \mathcal{L}(v/\beta)/(v/\beta) + \mathcal{L}(\log \beta) \rmd t} = \expeLigne{\int_0^T \Phi(v/\beta) \rmd t} . 
\end{equation}
In particular, using \cite[Proposition 1]{benton2022denoising}, we get that
$\mathrm{Gap} \geq 0$ and $\mathrm{Gap} = 0$ if and only if $\beta \propto
v$. In addition, the ELBO is maximized if and only if $\beta \propto v$, see
\cite[Equation 10]{benton2022denoising} and the remark that follows. Therefore,
we have that: if we maximize the ELBO then the ELBO is tight. Combining this with the fact that the ELBO is maximized at the time reversal \cite{benton2022denoising}, then we have that when our jump diffusion parameters match the time reversal, our variational gap is $0$.

\paragraph{Other approaches.} Another way to derive the ELBO is to follow
the steps of \cite{huang2021variational} directly, since
\cite{benton2022denoising} is a general framework extending this approach. The
key formulae to derive the result and the ELBO is 1) a Feynman-Kac formula 2) a
Girsanov formula.  In the case of jump diffusions (with jump in $\rset^d$) a
Girsanov formula has been established by
\cite{cheridito2005equivalent}. Extending this result to one-point
compactification space would allow us to prove directly Proposition
\ref{prop:elbo} without having to rely on the general framework of
\cite{benton2022denoising}.

\subsection{Proof of Proposition \ref{prop:backwardrateparam}}
\label{sec:proofPropBackwardRateParam}
We start by recalling the form for the time reversal given in Proposition \ref{prop:time_reversal}
\begin{equation}
\textstyle    \backwardrate_{t}^*(\X) = \forwardrate_{t}(n+1) \sum_{i=1}^{n+1} \delidxdist(i | n+1) \int_{\yadd} p_{t}( \textup{ins}(\X, \yadd, i)) \rmd \yadd  / p_{t}(\X) .
\end{equation}
We then introduce a marginalization over $\X_0$
\begin{align}
    \backwardrate_{t}^*(\X) &\textstyle = \forwardrate_{t}(n+1)  \sum_{i=1}^{n+1} \delidxdist(i | n+1) \int_{\yadd} \sum_{n_0} \int_{\x_0} p_{0,t}(\X_0, \textup{ins}(\X, \yadd, i)) \rmd \x_0 \rmd \yadd  /p_{t}(\X)\\
    &\textstyle = \forwardrate_{t}(n+1) \sum_{i=1}^{n+1} \delidxdist(i | n+1) \int_{\yadd} \sum_{n_0} \int_{\x_0} \frac{p_0(\X_0)}{p_t(\X)} p_{t|0}(\textup{ins}(\X, \yadd, i) | \X_0) \rmd \x_0 \rmd \yadd\\
    &\textstyle = \forwardrate_{t}(n+1) \sum_{i=1}^{n+1} \delidxdist(i | n+1) \int_{\yadd} \sum_{n_0} \int_{\x_0} \frac{p_{0|t}(\X_0 | \X)}{p_{t|0}(\X | \X_0)} p_{t|0}(\textup{ins}(\X, \yadd, i) | \X_0) \rmd \x_0 \rmd \yadd\\
    &\textstyle = \forwardrate_{t}(n+1) \sum_{i=1}^{n+1} \delidxdist(i | n+1) \int_{\yadd} \sum_{n_0} \int_{\x_0} \frac{p_{0|t}(n_0 | \X) p_{0|t}(\x_0 | \X, n_0)}{p_{t|0}(n | \X_0) p_{t|0}(\x | \X_0, n)} \times \\
    &\textstyle  \hspace{5cm} p_{t|0}(n+1 | \X_0) p_{t|0}( \z(\X, \yadd, i) | \X_0, n+1) \rmd \x_0 \rmd \yadd
\end{align}
where $(n+1, \z(\X, \yadd, i)) = \text{ins}(\X, \yadd, i)$. Now using the fact
the forward component deletion process does not depend on $\x_0$, only $n_0$, we
have $p_{t|0}(n | \X_0) = p_{t|0}(n | n_0)$ and
$p_{t|0}(n+1 | \X_0) = p_{t|0}(n+1 | n_0)$. Using this result, we get 
\begin{align}
   \backwardrate_t^*(\X) = &\textstyle \forwardrate_t(n+1) \sum_{n_0} \{ \frac{p_{t|0}(n+1 | n_0)}{p_{t|0}(n | n_0)} p_{0|t}(n_0 | \X) \times \\
\label{eq:backwardrateparamProofLine1}
   &\textstyle   \int_{\x_0}  \frac{\sum_{i=1}^{n+1} K^{\text{del}}(i | n+1) \int_{\yadd} p_{t|0}(\z(\X, \yadd, i) | \X_0, n+1) \rmd \yadd}{p_{t|0}(\x | \X_0, n)} p_{0|t}(\x_0 | \X, n_0)\rmd \x_0 \} . 
\end{align}
We now focus on the probability ratio within the integral over $\x_0$. We will show that this ratio is $1$. We start with the numerator, introducing a marginalization over possible mask variables between $\X_0$ and $(n+1, \z)$, denoted $M^{(n+1)}$ with $M^{(n+1)}$ having $n+1$ ones and $n_0 - (n+1)$ zeros.
\begin{align}
    &\textstyle \sum_{i=1}^{n+1} K^{\text{del}}(i | n+1) \int_{\yadd} p_{t|0}(\z(\X, \yadd, i) | \X_0, n+1) \rmd \yadd \\
    &\textstyle  = \sum_{i=1}^{n+1} K^{\text{del}}(i | n+1) \sum_{M^{(n+1)}} \int_{\yadd} p_{t|0}(M^{(n+1)}, \z(\X, \yadd, i) | \X_0, n+1) \rmd \yadd \\
    &\textstyle  = \sum_{M^{(n+1)}} \sum_{i=1}^{n+1} K^{\text{del}}(i | n+1)  p_{t|0}(M^{(n+1)} | \X_0, n+1)  \int_{\yadd} p_{t|0}(\z(\X, \yadd, i) | \X_0, n+1, M^{(n+1)}) \rmd \yadd 
\end{align}
Now, for our forward process we have
\begin{equation}
\textstyle    p_{t|0}(\z (\X, \yadd, i) | \X_0, n+1, M^{(n+1)}) = \prod_{j=1}^{n+1} \mathcal{N}( \z^{(j)} ; \sqrt{\alpha_t} M^{(n+1)}(\X_0)^j, (1-\alpha_t) I_d)
\end{equation}
where $\z$ is shorthand for $\z(\X, \yadd, i)$, $\z^{(j)}$ is the vector in $\mathbb{R}^d$ for the $j$th component of $\z$ and $M^{(n+1)}(\X_0)^j$ is the vector in $\mathbb{R}^d$ corresponding to the component in $\X_0$ corresponding to the $j$th one in the $M^{(n+1)}$ mask. Integrating out $\yadd$ we have
\begin{equation}
\textstyle    \int_{\yadd} p_{t|0}(\z(\X, \yadd, i) | \X_0, n+1, M^{(n+1)}) \rmd \yadd  = \prod_{j=1}^{n}\mathcal{N}( \x^{(j)} ; \sqrt{\alpha_t} M^{(n+1) \backslash i}(\X_0)^j, (1-\alpha_t) I_d) ,
\end{equation}
where $M^{(n+1) \backslash i}$ denotes a mask variable obtained by setting the $i$th one of $M^{(n+1)}$ to zero.
Hence, we have 
\begin{align}
    &\textstyle \sum_{i=1}^{n+1} K^{\text{del}}(i | n+1) \int_{\yadd} p_{t|0}(\z(\X, \yadd, i) | \X_0, n+1) \rmd \yadd \\
    &\textstyle  = \sum_{M^{(n+1)}} \sum_{i=1}^{n+1} K^{\text{del}}(i | n+1)  p_{t|0}(M^{(n+1)} | \X_0, n+1) \\
  & \qquad \textstyle \prod_{j=1}^{n}\mathcal{N}( \x^{(j)} ; \sqrt{\alpha_t} M^{(n+1) \backslash i}(\X_0)^j, (1-\alpha_t) I_d) .
    \label{eq:backwardrateParamProofLine1p5}
\end{align}
We now re-write the denominator from \eqref{eq:backwardrateparamProofLine1} introducing a marginalization over mask variables, $M^{(n)}$
\begin{equation}
    \textstyle p_{t|0}(\x | \X_0, n) = \sum_{M^{(n)}} p_{t|0}(M^{(n)} | \X_0, n) p_{t|0}(\x | M^{(n)}, \X_0, n)  . 
    \label{eq:backwardrateParamProofLine2}
\end{equation}
We use the following recursion for the probabilities assigned to mask variables
\begin{equation}
\textstyle    p_{t|0}(M^{(n)} | \X_0, n) = \sum_{M^{(n+1)}}  \sum_{i=1}^{n+1} \mathbb{I}\{ M^{(n+1) \backslash i} = M^{(n)} \} K^{\text{del}}(i | n+1) p_{t|0}(M^{(n+1)} | \X_0, n+1) . 
\end{equation}
 Substituting this into \eqref{eq:backwardrateParamProofLine2} gives
\begin{align}
    p_{t|0}(\x | \X_0, n) &\textstyle = \sum_{M^{(n)}}  \sum_{M^{(n+1)}} \sum_{i=1}^{n+1} \mathbb{I}\{ M^{(n+1) \backslash i} = M^{(n)} \} K^{\text{del}}(i | n+1)  \times\\
    &\textstyle  \qquad \qquad p_{t|0}(M^{(n+1)} | \X_0, n+1) p_{t|0}(\x | M^{(n)}, \X_0, n)\\
    &\textstyle = \sum_{M^{(n)}}  \sum_{M^{(n+1)}} \sum_{i=1}^{n+1} \mathbb{I}\{ M^{(n+1) \backslash i} = M^{(n)} \} K^{\text{del}}(i | n+1) \\ 
    &\textstyle  \qquad  \times  p_{t|0}(M^{(n+1)} | \X_0, n+1) \prod_{j=1}^n \mathcal{N}(\x^{(j)} ; \sqrt{\alpha_t} M^{(n)} (\X_0)^j, (1-\alpha_t) I_d)\\
                          &\textstyle = \sum_{M^{(n+1)}} \sum_{i=1}^{n+1} K^{\text{del}}(i | n+1) p_{t|0}(M^{(n+1)} | \X_0, n+1)\\
  & \qquad \textstyle \times \prod_{j=1}^n \mathcal{N}(\x^{(j)} ; \sqrt{\alpha_t} M^{(n+1) \backslash i} (\X_0)^j, (1-\alpha_t) I_d) .
\end{align}
By comparing with \eqref{eq:backwardrateParamProofLine1p5}, we can see that
\begin{align}
\textstyle     p_{t|0}(\x | \X_0, n) = \sum_{i=1}^{n+1} K^{\text{del}}(i | n+1) \int_{\yadd} p_{t|0}(\z(\X, \yadd, i) | \X_0, n+1) \rmd \yadd . 
\end{align}
This shows that the probability ratio in \eqref{eq:backwardrateparamProofLine1} is 1. Therefore, we have
\begin{align}
       \backwardrate_t^*(\X) &\textstyle = \forwardrate_t(n+1) \sum_{n_0} \{ \frac{p_{t|0}(n+1 | n_0)}{p_{t|0}(n | n_0)} p_{0|t}(n_0 | \X) \int_{\x_0} p_{0|t}(\x_0 | \X, n_0)\rmd \x_0 \}\\
       &\textstyle = \forwardrate_t(n+1) \sum_{n_0} \frac{p_{t|0}(n+1 | n_0)}{p_{t|0}(n | n_0)} p_{0|t}(n_0 | \X) , 
\end{align}
which concludes the proof.

$p_{t|0}(n | n_0)$ can be analytically calculated when $\forwardrate_t(n)$ is of a simple enough form. When $\forwardrate_t(n)$ does not depend on $n$ then the dimension deletion process simply becomes a time inhomogeneous Poisson process. Therefore, we would have
\begin{equation}
\textstyle    p_{t|0}(n | n_0) = \frac{(\int_0^t \forwardrate_s \rmd s)^{n_0 - n}}{(n_0 - n)!} \text{exp}(- \int_0^t \forwardrate_s \rmd s) . 
\end{equation}
In our experiments we set $\forwardrate_t(n=1) = 0$ to stop the dimension deletion process when we reach a single component. If we have $\forwardrate_t(n) = \forwardrate_t(m)$ for all $n, m > 1$ then we can still use the time inhomogeneous Poisson process formula for $n > 1$ and find the probability for $n=1$, $p_{t|0}(n=1 | n_0)$ by requiring $p_{t|0}(n | n_0)$ to be a valid normalized distribution. Therefore, for the case that $\forwardrate_t(n) = \forwardrate_t(m)$ for all $n, m > 1$ and $\forwardrate_t(n=1) = 0$, we have
\begin{equation}
    p_{t|0}(n | n_0) = \begin{cases}
         \frac{(\int_0^t \forwardrate_s \rmd s)^{n_0 - n}}{(n_0 - n)!} \text{exp}(- \int_0^t \forwardrate_s \rmd s)&\textstyle  1 < n \leq n_0 \\
        1 -  \sum_{m=2}^{n_0} \frac{(\int_0^t \forwardrate_s \rmd s)^{n_0 - m}}{(n_0 - m)!} \text{exp}(- \int_0^t \forwardrate_s \rmd s)  &\textstyle  n = 1
    \end{cases}
\end{equation}
In cases where $\forwardrate_t(n)$ depends on $n$ not just for $n=1$, $p_{t|0}(n | n_0)$ can become more difficult to calculate analytically. However, since the probability distributions are all 1-dimensional over $n$, it is very cheap to simply simulate the forward dimension deletion process many times and empirically estimate $p_{t|0}(n | n_0)$ although we do not need to do this for our experiments.

\subsection{The Objective is Maximized at the Time Reversal}
\label{sec:ObjTimeRevProof}

\newcommand{\intxz}{\sum_{n_0=1}^N \int_{\x_0 \in \mathbb{R}^{n_0 d}}}

In this section, we analyze the objective $\mathcal{L}(\theta)$ as a standalone object and determine the optimum values for $s_t^\theta$, $\backwardrate_t^\theta$ and $\autonet_t^\theta$ directly. This is in order to gain intuition directly into the learning signal of $\mathcal{L}(\theta)$ without needing to refer to stochastic process theory.

The definition of $\mathcal{L}(\theta)$ as in the main text is
\begin{align}
    \mathcal{L}(\theta) = \textstyle -\frac{T}{2} \mathbb{E}[&\textstyle  \diffcoeff_t^2 \norm{s_t^\theta(\X_t) - \nabla_{\x_t} \log p_{t|0}(\x_t | \X_0, n_t, M_t)   }^2 ] + \\
    T \mathbb{E} [&\textstyle  - \backwardrate_{t}^\theta(\X_t) + \forwardrate_t (n_t) \log \backwardrate_{t}^\theta(\Y) + \forwardrate_t(n_t) \log \autonet_{t}^\theta(\xadd_t, i | \Y) ] + C.
\end{align}
with the expectations taken over $\mathcal{U}(t; 0, T) p_{0,t}(\X_0, \X_t, M_t) \delidxdist(i | n_t) \updelta_{\textup{del}(\X_t, i)} (\Y)$.

\paragraph{Continuous optimum.} We start by analysing the objective for
$s_t^\theta$. This part of $\mathcal{L}(\theta)$ can be written as
\begin{equation}
  \textstyle 
    -\frac{1}{2} \int_{0}^T g_t^2 \mathbb{E}_{p_{0,t}(\X_0, \X_t, M_t)} [ \norm{s_t^\theta(\X_t) - \nabla_{\x_t} \log p_{t|0}(\x_t | \X_0, n_t, M_t) }^2 ] \rmd t
\end{equation}
We now use the fact that the function that minimizes an $L_2$ regression problem $\underset{f}{\text{min}} \,\, \mathbb{E}_{p(x, y)} [ \norm{f(x) - y}^2 ]$ is the conditional expectation of the target $f^*(x) = \mathbb{E}_{p(y|x)} [ y]$. Therefore the optimum value for $s_t^\theta(\X_t)$ is
\begin{align}
    s_t^*(\X_t) &\textstyle = \mathbb{E}_{p(M_t, \X_0 | \X_t)} [ \nabla_{\x_t} \log p_{t|0}(\x_t | \X_0, n_t, M_t) ]\\
    &\textstyle = \sum_{M_t} \intxz p(M_t, n_0, \x_0 | \X_t) \nabla_{\x_t} \log p_{t|0}(\x_t | \x_0, n_0, n_t, M_t) \rmd \x_0\\
    &\textstyle = \sum_{M_t} \intxz \frac{p(M_t, n_0, \x_0 | \X_t)}{p_{t|0}(\x_t | \x_0, n_0, n_t, M_t)} \nabla_{\x_t} p_{t|0}(\x_t | \x_0, n_0, n_t, M_t) \rmd \x_0\\
    &\textstyle = \sum_{M_t} \intxz \frac{p(\x_0, n_0, n_t, M_t)}{p(n_t, \x_t)} \nabla_{\x_t} p_{t|0}(\x_t | \x_0, n_0, n_t, M_t) \rmd \x_0\\
    &\textstyle = \frac{1}{p(n_t, \x_t)} \sum_{M_t} \intxz  \nabla_{\x_t} p(\x_t, \x_0, n_0, n_t, M_t) \rmd \x_0\\
    &\textstyle = \frac{1}{p(n_t, \x_t)} \nabla_{\x_t} \sum_{M_t} \intxz  p(\x_t, \x_0, n_0, n_t, M_t) \rmd \x_0\\
    &\textstyle = \frac{1}{p(n_t, \x_t)} \nabla_{\x_t} p(\x_t, n_t) \textstyle = \nabla_{\x_t}\log p(\X_t) . 
\end{align}
Therefore, the optimum value for $s_t^\theta(\X_t)$ is $\nabla_{\x_t} \log p(\X_t)$ which is the value that gives $\backwarddrift_t$ to be the time reversal of $\forwarddrift_t$ as stated in Proposition \ref{prop:time_reversal}.\\

\paragraph{Jump rate optimum.} The learning signal for $\backwardrate_{t}^\theta$ comes from these two terms in $\mathcal{L}(\theta)$
\begin{equation}
    T \mathbb{E}[ - \backwardrate_t^\theta(\X_t) + \forwardrate_t(n_t) \log \backwardrate_t^\theta(\Y) ]
    \label{eq:backwardrateelboexpression}
\end{equation}
This expectation is maximized when for each test input $\Z$ and test time $t$, we have the following expression maximized
\begin{equation}
  \textstyle 
    -p_t(\Z) \backwardrate_t^\theta(\Z) + \sum_{i=}^{n_z+1} \int_{\yadd} p_t(\text{ins}(\Z, \yadd, i)) \delidxdist(i | n_z + 1) \rmd \yadd \times \forwardrate_t(n_z + 1) \log \backwardrate_t^\theta(\Z) , 
\end{equation}
because $p_t(\Z)$ is the probability $\Z$ gets drawn as a full sample from the forward process and $\sum_{i=}^{n_z+1} \int_{\yadd} p_t(\text{ins}(\Z, \yadd, i)) \delidxdist(i | n_z + 1) \rmd \yadd $ is the probability that a sample one component bigger than $\Z$ gets drawn from the forward process and then a component is deleted to get to $\Z$. Therefore the first probability is the probability that test input $\Z$ and test time $t$ appear as the first term in \eqref{eq:backwardrateelboexpression} whereas the second probability is the probability that test input $\Z$ and test time $t$ appear as the second term in \eqref{eq:backwardrateelboexpression}.

We now use the fact that, for constants $b$ and $c$,
\begin{equation}
    \textstyle \mathrm{argmax}_a \quad -ba + c \log a = \frac{c}{b}.
    \label{eq:log_minimizer}
\end{equation}
We therefore have the optimum $\backwardrate_t^\theta(\Z)$ as
\begin{equation}
\textstyle    \backwardrate_t^*(\Z) = \forwardrate_t(n_z + 1) \frac{\sum_{i=}^{n_z+1} \int_{\yadd} p_t(\text{ins}(\Z, \yadd, i)) \delidxdist(i | n_z + 1) \rmd \yadd}{ p_t(\Z) }
\end{equation}
which is the form for the time-reversal given in Proposition \eqref{prop:time_reversal}.\\

\paragraph{Jump kernel optimum.} Finally, we analyse the part of
$\mathcal{L}(\theta)$ for learning $\autonet_t^\theta(\xadd_t, i | \Y)$,
\begin{align}
    &\textstyle T \mathbb{E} [ \forwardrate_t(n_t) \log \autonet_t^\theta(\xadd_t, i | \Y) ]\\
    &\textstyle = \int_{0}^T \mathbb{E}_{p_t(\X_t) \delidxdist(i | n_t) \updelta_{\text{del}(\X_t, i)}(\Y)} [ \forwardrate_t(n_t) \log \autonet_t^\theta(\xadd_t, i | \Y) ] \rmd t\\
    &\textstyle = \int_{0}^T \mathbb{E}_{p_t(n_t)} [ \forwardrate_t(n_t) \mathbb{E}_{p_t(\x_t | n_t) \delidxdist(i | n_t) \updelta_{\text{del}(\X_t, i)}(\Y)} [ \log \autonet_t^\theta(\xadd_t, i | \Y) ] ] \rmd t.
\end{align}
We now re-write the joint probability distribution that the inner expectation is taken with respect to,
\begin{equation}
    p_t(\x_t | n_t) \delidxdist(i | n_t) \updelta_{\text{del}(\X_t, i)}(\Y) = \tilde{p}(\Y | n_t) p(\xadd_t, i | \Y) \updelta_{\y}(\x_t^{\text{base}}) . 
\end{equation}
with 
\begin{equation}
  \textstyle 
    \tilde{p}(\Y | n_t) = \sum_{i=1}^{n_t} \int_{\x_t} p_t(\x_t | n_t) \delidxdist(i | n_t) \updelta_{\text{del}(\x_t, i)}(\Y) \rmd \x_t,
  \end{equation}
  and
\begin{equation}
    p(\xadd_t, i | \Y) \propto p_t(\x_t | n_t) \delidxdist(i | n_t) , 
\end{equation}
and $\x_t^{\text{base}} \in \mathbb{R}^{(n_t-1)d}$ referring to the $n_t - 1$ components of $\x_t$, that are not $\x_t^{\text{add}}$ i.e. $\X_t = \text{ins}((\x_t^{\text{base}}, n_t - 1), \xadd_t, i)$.
We then have
\begin{align}
    &\textstyle T \mathbb{E} [ \forwardrate_t(n_t) \log \autonet_t^\theta(\xadd_t, i | \Y) ]\\
    &\textstyle = \int_{0}^T \mathbb{E}_{p_t(n_t)} [ \forwardrate_t(n_t) \mathbb{E}_{\tilde{p}(\Y | n_t) p(\xadd_t, i | \Y) \updelta_{\y}(\x_t^{\text{base}})} [ \log \autonet_t^\theta(\xadd_t, i | \Y) ] ] \rmd t\\
    &\textstyle = \int_{0}^T \mathbb{E}_{p_t(n_t)} [ \forwardrate_t(n_t) \mathbb{E}_{\tilde{p}(\Y | n_t) p(\xadd_t, i | \Y) \updelta_{\y}(\x_t^{\text{base}})} [ \log \autonet_t^\theta(\xadd_t, i | \Y) ] ] \rmd t \\
    &\textstyle  \hspace{1cm} - \int_{0}^T \mathbb{E}_{p_t(n_t)} [ \forwardrate_t(n_t) \mathbb{E}_{\tilde{p}(\Y | n_t) p(\xadd_t, i | \Y) \updelta_{\y}(\x_t^{\text{base}})} [ \log p(\xadd_t, i | \Y) ] ] \rmd t + \text{const}\\
    &\textstyle = \int_{0}^T \mathbb{E}_{p_t(n_t)} [ \forwardrate_t(n_t) \mathbb{E}_{\tilde{p}(\Y | n_t) \updelta_{\y}(\x_t^{\text{base}}) } [ - \text{KL}(p(\xadd_t, i | \Y) \, || \, A_t^\theta(\xadd_t, i | \Y) ] ] \rmd t + \text{const} .
\end{align}
Therefore, the optimum $\autonet_t^\theta(\xadd_t, i | \Y)$ which maximizes this part of $\mathcal{L}(\theta)$ is
\begin{equation}
    \autonet_t^*(\xadd_t, i | \Y) = p(\xadd_t, i | \Y) \propto p_t(\X_t) \delidxdist(i | n_t) . 
\end{equation}
which is the same form as given in Proposition \ref{prop:time_reversal}.

\section{Training Objective}
\label{sec:ApdxTrainingObjective}
We estimate our objective $\mathcal{L}(\theta)$ by taking minibatches from the expectation $\mathcal{U}(t; 0, T) p_{0,t}(\X_0, \X_t, M_t) \delidxdist(i | n_t) \updelta_{\text{del}(\X_t, i)}(\Y)$. We first sample $t \sim \mathcal{U}(t; 0, T)$ and then take samples from our dataset $\X_0 \sim \pdata(\X_0)$. In order to sample $p_{t|0}(\X_t, M_t | \X_0)$ we need to both add noise, delete dimensions and sample a mask variable. Since the Gaussian noising process is isotropic, we can add a suitable amount of noise to all dimensions of $\X_0$ and then delete dimensions of that noised full dimensional value. More specifically, we first sample $\tilde{\X}_t = (n_0, \tilde{\x}_t)$ with $\tilde{\x}_t \sim \mathcal{N}(\tilde{\x}_t; \sqrt{\alpha_t} \x_0, (1 - \alpha_t) I_{n_0 d})$ for $\alpha_t = \text{exp}\left( - \int_0^t \beta(s) \rmd s \right)$ using the analytic forward equations for the VP-SDE derived in \cite{song2020score}. Then we sample the number of dimensions to delete. This is simple to do when our rate function is independent of $n$ except for the case when $n=1$ at which it is zero. We simply sample a Poisson random variable with mean parameter $\int_0^t \forwardrate_s \rmd s$ and then clamp its value such that the maximum number of possible components that are deleted is $n_0 - 1$. This gives the appropriate distribution over $n$, $p_{t|0}(n | n_0)$ as given in Section \ref{sec:proofPropBackwardRateParam}. To sample which dimensions are deleted, we can sample $\delidxdist(i_1 | n_0) \delidxdist(i_2 | n_0-1) \dots \delidxdist(i_{n_0 - n_t} | n_t+1)$ from which we can create the mask $M_t$ and apply it to $\tilde{\X}_t$ to obtain $\X_t$, $\X_t = M_t(\tilde{\X}_t)$. When $\delidxdist(i | n) = 1/n$ this is especially simple to do by simply randomly permuting the components of $\tilde{\X}_t$, and then removing the final $n_0 - n_t$ components.\\

As is typically done in standard diffusion models, we parameterize $s_t^\theta$ in terms of a noise prediction network that predicts $\epsilon$ where $\x_t = \sqrt{\alpha_t}M_t(\x_0) + \sqrt{1- \alpha_t} \epsilon$, $\epsilon \sim \mathcal{N}(0, I_{n_t d})$. We then re-weight the score loss in time such that we have a uniform weighting in time rather than the `likelihood weighting' with $g_t^2$ \cite{song2020score, song2021maximum}. Our objective to learn $s_t^\theta$ then becomes
\begin{equation}
    - \mathbb{E}_{\mathcal{U}(t; 0, T) \pdata(\X_0) p(M_t, n_t | \X_0) \mathcal{N}(\epsilon; 0, I_{n_t d})} \left[ \norm{\epsilon_t^\theta(\X_t) - \epsilon}^2 \right]
\end{equation}
with $\x_t = \sqrt{\alpha_t} M_t(\x_0) + \sqrt{1 - \alpha_t} \epsilon$, $s_t^\theta(\X_t) = \frac{-1}{\sqrt{1-\alpha_t}} \epsilon_t^\theta(\X_t)$.

Further, by using the parameterization given in Proposition \ref{prop:backwardrateparam}, we can directly supervise the value of $p_{0|t}^\theta(n_0 | \X_t)$ by adding an extra term to our objective. We can treat the learning of $p_{0|t}^\theta(n_0 | \X_t)$ as a standard prediction task where we aim to predict $n_0$ given access to $\X_t$. A standard objective for learning $p_{0|t}^\theta(n_0 | \X_t)$ is then the cross entropy
\begin{equation}
    \underset{\theta}{\text{max}} \quad \mathbb{E}_{p_{0,t}(\X_0, \X_t)} \left[ \log p_{0|t}^{\theta}(n_0 | \X_t) \right] 
\end{equation}
Our augmented objective then becomes
\begin{equation}
    \tilde{\mathcal{L}}(\theta) = T\mathbb{E} [ -\frac{1}{2}\norm{\epsilon_t^\theta(\X_t) - \epsilon}^2 - \backwardrate_t^\theta(\X_t) + \forwardrate_t(n_t) \log \backwardrate_t^\theta(\Y) + \forwardrate_t(n_t) \log \autonet_t^\theta(\xadd_t, i | \Y) + \gamma \log p_{0|t}^\theta (n_0 | \X_t) ]
    \label{eq:augmentedObjective}
\end{equation}
where the expectation is taken with respect to 
\begin{equation}
\mathcal{U}(t; 0, T) \pdata(\X_0) p(M_t, n_t | \X_0) \mathcal{N}(\epsilon; 0, I_{n_t d}) \delidxdist(i | n_t) \updelta_{\text{del}(\X_t, i)}(\Y)
\end{equation}
where $\x_t = \sqrt{\alpha_t} M_t(\x_0) + \sqrt{1 - \alpha_t} \epsilon$ and $\gamma$ is a loss weighting term for the cross entropy loss.

\section{Trans-Dimensional Diffusion Guidance}
\label{sec:ApdxDiffGuide}

To guide an unconditionally trained model such that it generates datapoints consistent with conditioning information, we use the reconstruction guided sampling approach introduced in \cite{ho2022video}. Our conditioning information will be the values for some of the components of $\X_0$, and thus the guidance should guide the generative process such that the rest of the components of the generated datapoint are consistent with those observed components. Following the notation of \cite{ho2022video}, we denote the observed components as $\x^a \in \mathbb{R}^{n_a d}$ and the components to be generated as $\x^b \in \mathbb{R}^{n_b d}$. Our trained score function $s_t^\theta(\X_t)$ approximates $\nabla_{\x_t} \log p_t(\X_t)$ whereas we would like the score to approximate $\nabla_{\x_t} \log p_t(\X_t | \x_0^a)$. In order to do this, we will need to augment our unconditional score $s_t^\theta(\X_t)$ such that it incorporates the conditioning information.\\

We first focus on the dimensions of the score vector corresponding to $\x^a$. These can be calculated analytically from the forward process
\begin{equation}
    \nabla_{\x_t^a} \log p(\X_t | \x_0^a) = \nabla_{\x_t^a} \log p_{t|0}(\x_t^a | \x_0^a, n_t)
\end{equation}
with $p_{t|0}(\x_t^a | \x_0^a, n_t) = \mathcal{N}(\x_t^a; \sqrt{\alpha_t} \x_0^a, (1 - \alpha_t) I_{n_a d})$. Note that we assume a correspondence between $\x_t^a$ and $\x_0^a$. For example, in video if we condition on the first and last frame, we assume that the first and last frame of the current noisy $\x_t$ correspond to $\x_0^a$ and guide them towards their observed values. For molecules, the point cloud is permutation invariant and so we can simply assume the first $n_a$ components of $\x_t$ correspond to $\x_0^a$ and guide them to their observed values.\\

Now we analyse the dimensions of the score vector corresponding to $\x^b$. We split the score as
\begin{equation}
    \nabla_{\x_t^b} \log p(\X_t | \x_0^a) = \nabla_{\x_t^b} \log p(\x_0^a | \X_t) + \nabla_{\x_t^b} \log p_t(\X_t)
    \label{eq:condScoreSplit}
\end{equation}
$p(\x_0^a | \X_t)$ is intractable to calculate directly and so, following \cite{ho2022video}, we approximate it with $\mathcal{N}(\x_0^a ; \hat{\x}_0^{\theta a}(\X_t), \frac{1 - \alpha_t}{\alpha_t} I_{n_a d})$ where $\hat{\x}_0^{\theta a}(\X_t)$ is a point estimate of $\x_0^a$ given from $s_t^\theta(\X_t)$ calculated as 
\begin{equation}
    \hat{\x}_0^{\theta a}(\X_t) = \frac{\x_t^a + (1 - \alpha_t) s_t^\theta(\X_t)^a}{\sqrt{\alpha_t}}
\end{equation}
where again we have assumed a correspondence between $\x_t^a$ and $\x_0^a$. Our approximation for $\nabla_{\x_t^b} \log p(\x_0^a | \X_t)$ is then
\begin{equation}
    \nabla_{\x_t^b} \log p(\x_0^a | \X_t) \approx - \nabla_{\x_t^b} \frac{\alpha_t}{2 ( 1- \alpha_t)} \norm{\x_0^a - \hat{\x}_0^{\theta a}(\X_t) }^2
\end{equation}
which can be calculated by differentiating through the score network $s_t^\theta$. \\

We approximate $\backwardrate_t^*(\X_t | \x_0^a)$ and $\autonet_t^*(\yadd, i | \X_t, \x_0^a)$, with their unconditional forms $\backwardrate_t^\theta(\X_t)$ and $\autonet_t^\theta(\yadd, i | \X_t)$. We find this approximation still leads to valid generations because the guidance of the score network $s_t^\theta$, results in $\X_t$ containing the conditioning information which in turn leads to $\backwardrate_t^\theta(\X_t)$ guiding the number of components in $\X_t$ to be consistent with the conditioning information too as verified in our experiments. Further, any errors in the approximation for $\autonet_t^\theta(\yadd, i | \X_t)$ are fixed by further applications of the guided score function, highlighting the benefits of our combined autoregressive and diffusion based approach.

\section{Experiment Details}
\label{sec:ExperimentDetails}
Our code is available at \url{https://github.com/andrew-cr/jump-diffusion}

\subsection{Molecules}

\subsubsection{Network Architecture}
\paragraph{Backbone}
For our backbone network architecture, we used the EGNN used in \cite{hoogeboom2022equivariant}. This is a specially designed graph neural network applied to the point cloud treating it as a fully connected graph. A special equivariant update is used, operating only on distances between atoms. We refer to \cite{hoogeboom2022equivariant} for the specific details on the architecture. We used the same size network as used in \cite{hoogeboom2022equivariant}'s QM9 experiments, specifically there are $9$ layers, with a hidden node feature size of $256$. The output of the EGNN is fed into a final output projection layer to give the score network output $s_t^\theta(\X_t)$.

\paragraph{Component number prediction}
To obtain $p_{0|t}^\theta(n_0 | \X_t)$,  we take the embedding produced by the EGNN before the final output embedding layer and pass it through 8 transformer layers each consisting of a self-attention block and an MLP block applied channel wise. Our transformer model dimension is $128$ and so we project the EGNN embedding output down to $128$ before entering into the transformer layers. We then take the output of the transformer and take the average embedding over all nodes. This embedding is then passed through a final projection layer to give softmax logits over the $p_{0|t}^\theta(n_0 | \X_t)$ distribution.

\paragraph{Autoregressive Distribution}
Our $\autonet_t^\theta(\yadd, i | \X_t)$ network has to predict the position and features for a new atom when it is added to the molecule. Since the point cloud is permutation invariant, we do not need to predict $i$ and so we just need to parameterize $\autonet_t^\theta(\yadd | \X_t)$. We found the network to perform the best if the network first predicts the nearest atom to the new atom and then a vector from that atom to the location of the new atom. To achieve this, we first predict softmax logits for a distribution over the nearest atom by applying a projection to the embedding output from the previously described transformer block. During training, the output of this distribution can be directly supervised by a cross entropy loss. Given the nearest atom, we then need to predict the position and features of the new atom to add. We do this by passing in the embedding generated by the EGNN and original point cloud features into a new transformer block of the same size as that used for $p_{0|t}^\theta(n_0 | \X_t)$. We also input the distances from the nearest atom to all other atoms in the molecule currently as an additional feature. To obtain the position of the new atom, we will take a weighted sum of all the vectors between the nearest atom and other atoms in the molecule. This is to make it easy for the network to create new atoms `in plane' with existing atoms which is useful for e.g. completing rings that have to remain in the same plane. To calculate the weights for the vectors, we apply an output projection to the output of the transformer block. The new atom features (atom type and charge) are generated by a separate output projection from the transformer block. For the position and features, $\autonet_t^\theta(\yadd | \X_t)$ outputs both a mean and a standard deviation for a Gaussian distribution. For the position distribution, we set the standard deviation to be isotropic to remain equivariant to rotations. In total our model has around 7.3 million parameters.

\subsubsection{Training}
We train our model for 1.3 million iterations at a batch size of $64$. We use the Adam optimizer with learning rate $0.00003$. We also keep a running exponential moving average of the network weights that is used during sampling as is standard for training diffusion models \cite{ho2020denoising, song2020score, karraselucidating2022} with a decay parameter of $0.9999$. We train on the 100K molecules contained in the QM9 training split. We model hydrogens explicitly. Training a model requires approximately $7$ days on a single GPU which was done on an Academic cluster.

In \cite{hoogeboom2022equivariant} the atom type is encoded as a one-hot vector and diffused as a continuous variable along with the positions and charge values for all atoms. They found that multiplying the one-hot vectors by $0.25$ to boost performance by allowing the atom-type to be decided later on in the diffusion process. We instead multiply the one-hot vectors by $4$ so that atom-type is decided early on in the diffusion process which improves our guided performance when conditioning on certain atom-types being present. We found our model is robust to this change and achieves similar sample quality to \cite{hoogeboom2022equivariant} as shown in Table \ref{tab:uncond_mol}.

When deleting dimensions, we first shuffle the ordering of the nodes and then delete the final $n_0 - n_t$ nodes. The cross entropy loss weighting in \eqref{eq:augmentedObjective} is set to $1$.

Following \cite{hoogeboom2022equivariant} we train our model to operate within the center of mass (CoM) zero subspace of possible molecule positions. The means, throughout the forward and backward process, the average position of an atom is $0$. In our transdimensional framework, this is achieved by first deleting any atoms required under the forward component deletion process. We then move the molecule such that its CoM is $0$. We then add CoM free noise such that the noisy molecule also has CoM$=0$. Our score model $s_t^\theta$ is parameterized through a noise prediction model $\epsilon_t^\theta$ which is trained to predict the CoM free noise that was added. Therefore, our score network learns suitable directions to maintain the process on the CoM$=0$ subspace. For the position prediction from $\autonet_t^\theta(\yadd | \X_t)$ we train it to predict the new atom position from the current molecules reference frame. When the new atom is added, we then update all atom positions such that CoM$=0$ is maintained.

\subsubsection{Sampling}
During sampling we found that adding corrector steps \cite{song2020score} improved sample quality. Intuitively, corrector steps form a process that has $p_t(\X)$ as its stationary distribution rather than the process progressing toward $p_0(\X)$. We use the same method to determine the corrector step size $\zeta$ as in \cite{song2020score}. For the conditional generation tasks, we also found it useful to include corrector steps for the component generation process. As shown in \cite{campbell2022continuous}, corrector steps in discrete spaces can be achieved by simulating with a rate that is the addition of the forward and backward rates. We achieve this in the context of trans-dimensional modeling by first simulating a possible insertion using $\backwardrate_t^\theta$ and then simulating a possible deletion using $\forwardrate_t$. We describe our overall sampling algorithm in Algorithm \ref{alg:backwardsamplingWithCorrector}.

\begin{algorithm}[]
    \SetAlgoNoLine
    \DontPrintSemicolon
    \KwIn{Number of corrector steps $C$}
    $t \leftarrow T$\;
    $\X \sim \pref(\X) = \mathbb{I}\{ n=1\} \mathcal{N}(\x; 0, I_{d})$\;
    \While{ $t > 0$}{
    \If{$u < \backwardrate_{t}^\theta(\X) \updelta t$~\textup{with~}$u \sim \mathcal{U}(0, 1)$}{
    Sample $\xadd, i \sim \autonet_{t}^\theta(\xadd, i | \X)$ \;
    $\X \leftarrow \text{ins}(\X, \xadd, i)$\;
    }
    $\x \leftarrow \x - \backwarddrift_{t}^\theta(\X) \updelta t + g_{t} \sqrt{\updelta t} \epsilon$ with $\epsilon \sim \mathcal{N}(0, I_{nd})$ \;
    \For{$c = [1, \dots, C]$}{
        $\x \leftarrow \x + \zeta s_{t-\updelta t}^\theta(\X) + \sqrt{2 \zeta} \epsilon$ with $\epsilon \sim \mathcal{N}(0, I_{nd})$
        
        \If{$u < \backwardrate_{t-\updelta t}^\theta(\X) \updelta t$~\textup{with~}$u \sim \mathcal{U}(0, 1)$}{
        Sample $\xadd, i \sim \autonet_{t-\updelta t}^\theta(\xadd, i | \X)$ \;
        $\X \leftarrow \text{ins}(\X, \xadd, i)$\;
        }
        \If{$u < \forwardrate_{t-\updelta t}(n) \updelta t$~\textup{with~}$u \sim \mathcal{U}(0, 1)$}{
            $\X \leftarrow \text{del}(\X, i)~\textup{with~} i \sim \delidxdist(i | n)$\; 
        }
    }
    $\X \leftarrow (n, \x)$, $t \leftarrow t - \updelta t$
    }
     \caption{Sampling the Generative Process with Corrector Steps}
     \label{alg:backwardsamplingWithCorrector}
\end{algorithm}

\subsubsection{Evaluation}
\paragraph{Unconditional}
For our unconditional sampling evaluation, we start adding corrector steps when $t<0.1T$ in the backward process and use $5$ corrector steps without the corrector steps on the number of components. We set $\updelta = 0.05$ for $ t > 0.5T$ and $\updelta = 0.001$ for $t<0.5T$ such that the total number of network evaluations is $1000$. We show the distribution of sizes of molecules generated by our model in Figure \ref{fig:uncond_dims} and show more unconditional samples in Figure \ref{fig:apdxUncondMolSamples}. We find our model consistently generates realistic molecules and achieves a size distribution similar to the training dataset even though this is not explicitly trained and arises from sampling our backward rate $\backwardrate_t^\theta$. Since we are numerically integrating a continuous time process and approximating the true time reversal rate $\backwardrate_t^*$, some approximation error is expected. For this experiment, sampling all of our models and ablations takes approximately $2$ GPU days on Nvidia $1080$Ti GPUs.

\paragraph{Conditional}
For evaluating applying conditional diffusion guidance to our model, we choose $10$ conditioning tasks that each result in a different distribution of target dimensions. The task is to produce molecules that include at least a certain number of target atom types. We then guide the first set of atoms generated by the model to have these desired atom types. The tasks chosen are given in Table \ref{tab:molecule_conditions}. Molecules in the training dataset that meet the conditions in each task have a different distribution of sizes. The tasks were chosen so that we have an approximately linearly increasing mean number of atoms for molecules that meet the condition. We also require that there are at least 100 examples of molecules that meet the condition within the training dataset.\\

For sampling when using conditional diffusion guidance, we use $3$ corrector steps throughout the backward process with $\updelta t = 0.001$. For these conditional tasks, we include the corrector steps on the number of components. We show the distribution of dimensions for each task from the training dataset and from our generated samples in Figure \ref{fig:apdx_CondDims}. Our metrics are calculated by first drawing 1000 samples for each conditioning task and then finding the Hellinger distance between the size distribution generated by our method (orange diagonal hashing in Figure \ref{fig:apdx_CondDims}) and the size distribution for molecules in the training dataset that match the conditions of the task (green no hashing in Figure \ref{fig:apdx_CondDims}). We find that indeed our model when guided by diffusion guidance can automatically produce a size distribution close to the ground truth size distribution found in the dataset for that conditioning value. We show samples generated by our conditionally guided model in Figure \ref{fig:CondSampleExamples}. We can see that our model can generate realistic molecules that include the required atom types and are of a suitable size. For this experiment, sampling all of our models and ablations takes approximately $13$ GPU days on Nvidia $1080$Ti GPUs.

\paragraph{Interpolations}
For our interpolations experiments, we follow the set up of \cite{hoogeboom2022equivariant} who train a new model conditioned on the polarizability of molecules in the dataset. We train a conditional version of our model which can be achieved by simply adding in the polarizability as an additional feature input to our backbone network and re-using all the same hyperparameters. We show more examples of interpolations in Figure \ref{fig:apdxMoreInterps}.

\begin{figure}
    \centering
    \includegraphics[width=8cm]{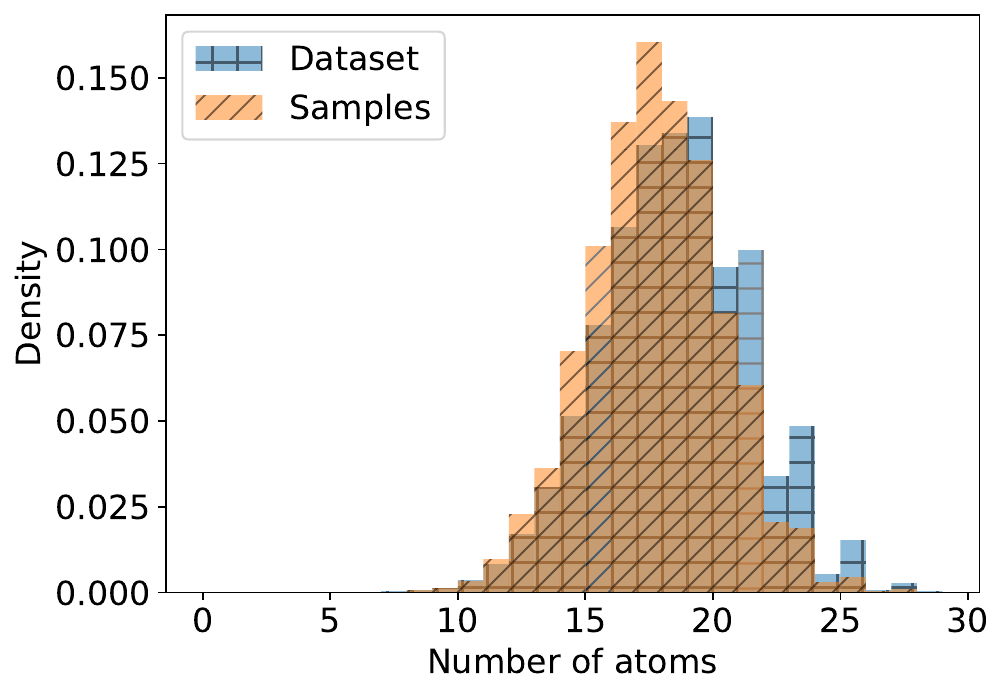}
    \caption{Distribution of the size of molecules in the QM9 dataset as measured through the number of atoms versus the distribution of the size of molecules generated by our unconditional model.}
    \label{fig:uncond_dims}
\end{figure}

\begin{figure}
    \centering
    \includegraphics[width=\textwidth]{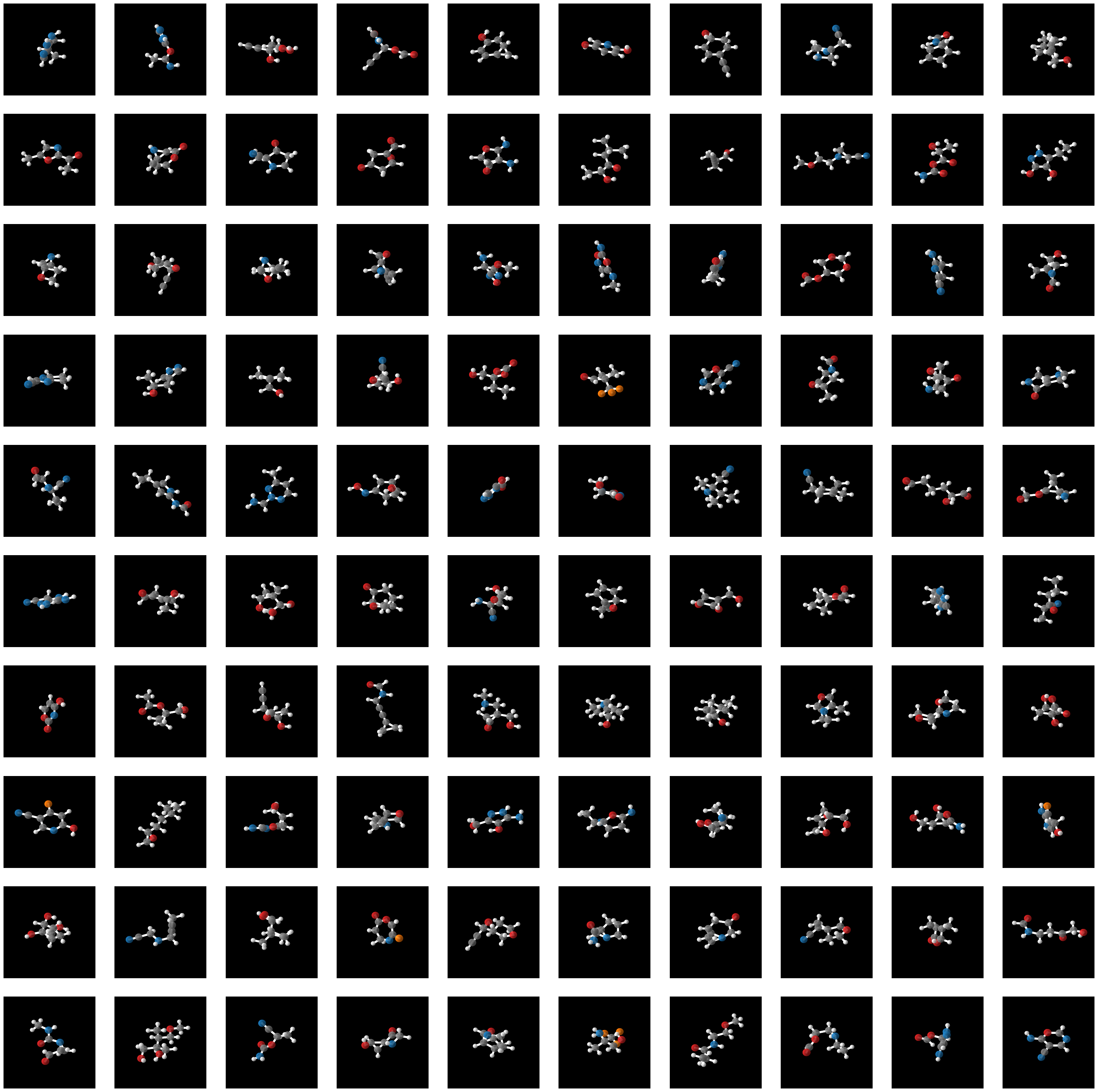}
    \caption{Unconditional samples from our model.}
    \label{fig:apdxUncondMolSamples}
\end{figure}

\begin{table}[h]
     \centering
   \caption{The 10 conditioning tasks used for evaluation. The number of each atom type required for the task is given in columns $2-5$ whilst the average number of atoms in molecules that meet this condition in the training dataset is given in the $6$th column.}
   \begin{tabular}{@{}lccccc@{}}
     \toprule
     Task & Carbon & Nitrogen & Oxygen & Fluorine & \shortstack{Mean Number \\ of Atoms} \\ \midrule
     1 & 4 & 1 & 2 & 1 & 11.9 \\
     2 & 4 & 3 & 1 & 1 & 13.0 \\
     3 & 5 & 2 & 1 & 1 & 13.9 \\
     4 & 6 & 0 & 1 & 1 & 14.6\\
     5 & 5 & 3 & 1 & 0 & 16.0\\
     6 & 6 & 3 & 0 & 0 & 17.2\\
     7 & 6 & 1 & 2 & 0 & 17.7\\
     8 & 7 & 1 & 1 & 0 & 19.1\\
     9 & 8 & 1 & 0 & 0 & 19.9\\
     10 & 8 & 0 & 1 & 0 & 21.0\\ \bottomrule
   \end{tabular}
   \label{tab:molecule_conditions}
\end{table}

\begin{figure}
    \centering
    \includegraphics[width=\textwidth]{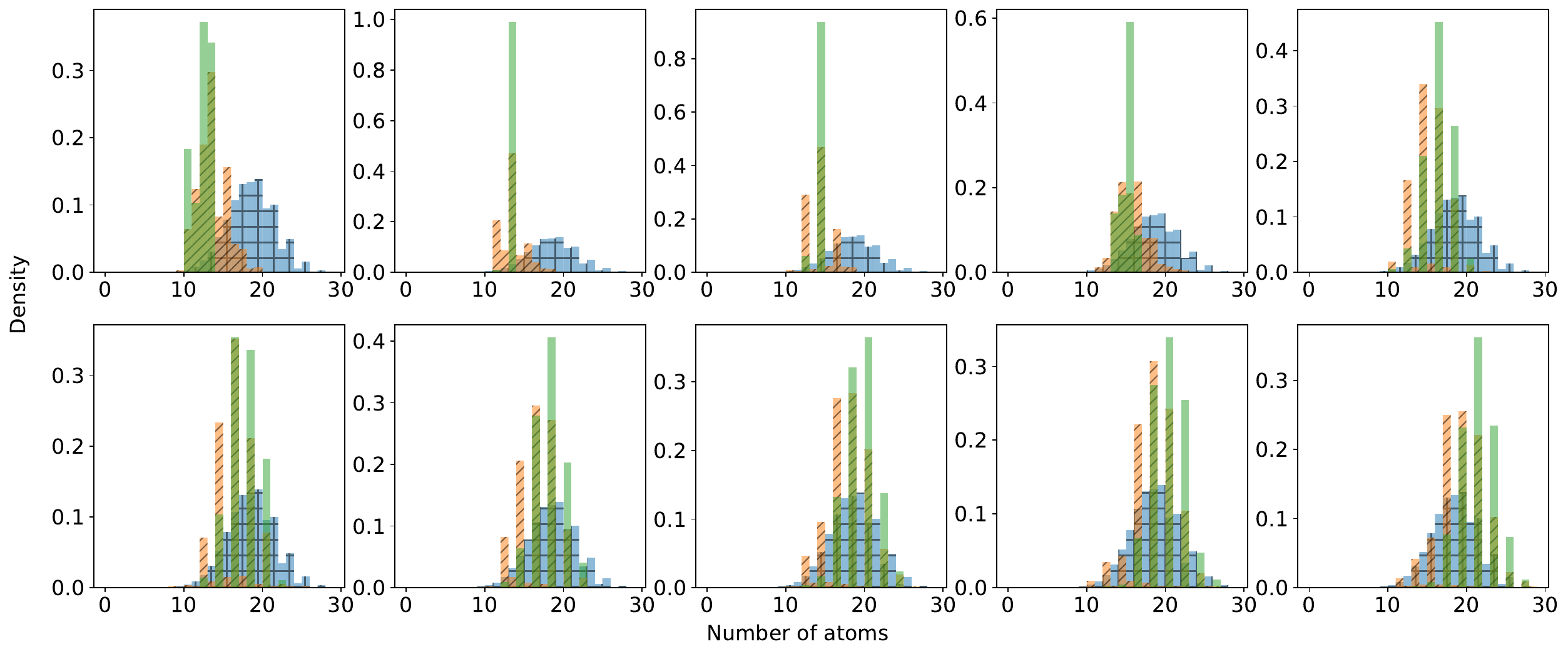}
    \caption{Distribution of molecule sizes for each conditioning task. Tasks $1-5$ are shown left to right in the top row and tasks $6-10$ are shown left to right in the bottom row. We show the unconditional size distribution from the dataset in blue vertical/horizontal hashing, the size distribution of our conditionally generated samples in orange diagonal hashing and finally the size distribution for molecules in the training dataset that match the conditions of each task (the ground truth size distribution) in green no hashing.}
    \label{fig:apdx_CondDims}
\end{figure}

\begin{figure}
    \centering
    \includegraphics[width=\textwidth]{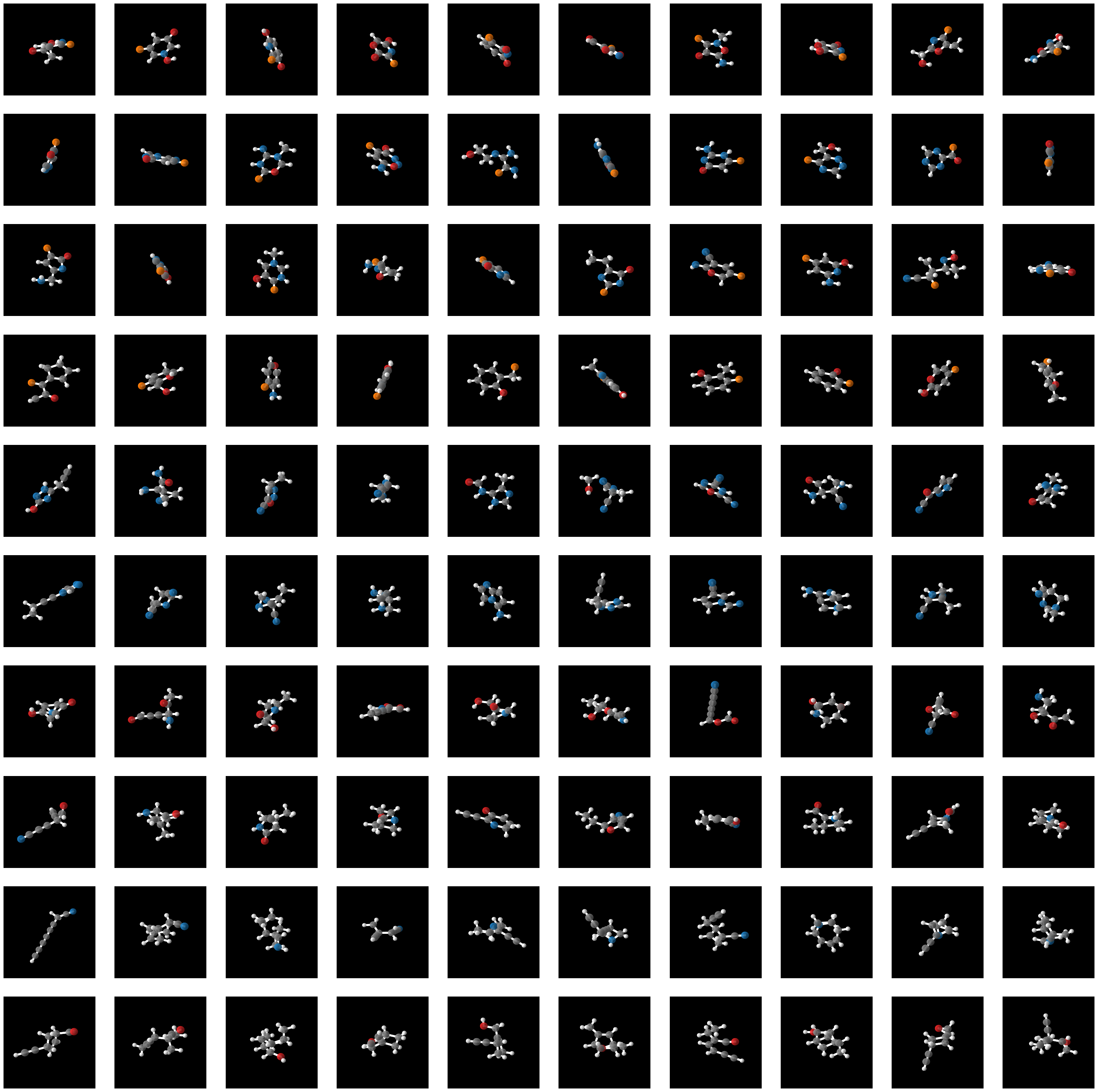}
    \caption{Samples generated by our model when conditional diffusion guidance is applied. Each row represents one task with task $1$ at the top, down to task $10$ at the bottom. For each task, $10$ samples are shown in each row.}
    \label{fig:CondSampleExamples}
\end{figure}

\begin{figure}
    \centering
    \includegraphics[width=\textwidth]{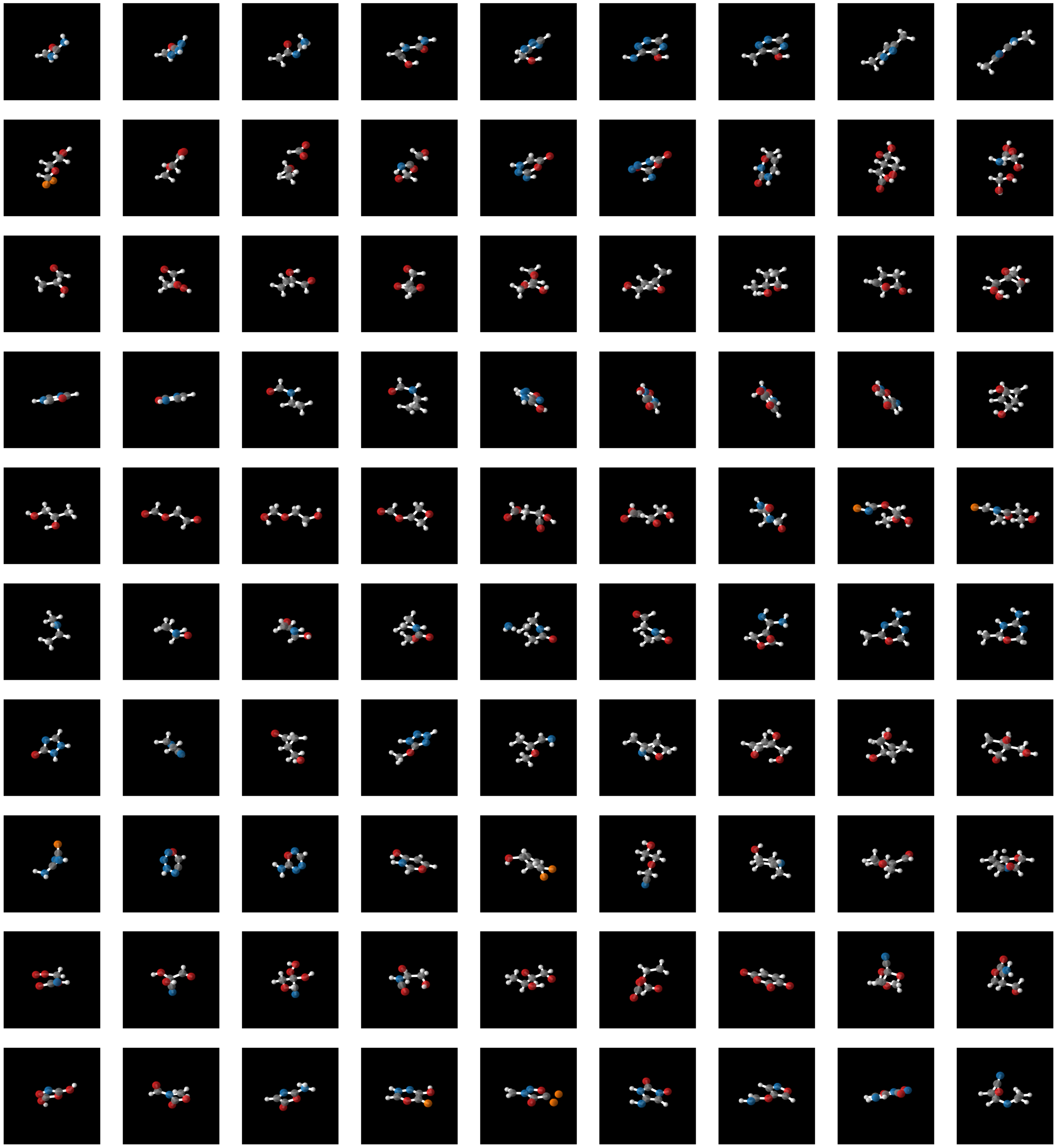}
    \caption{Interpolations showing a sequence of generations for linearly increasing polarizability from $39 \, \text{Bohr}^3$ to $66 \, \text{Bohr}^3$ with fixed random noise. Each row shows an individual interpolation with $\text{Bohr}^3$ increasing from left to right.}
    \label{fig:apdxMoreInterps}
\end{figure}

\subsubsection{Ablations}
For our main model, we set $\forwardrate_{t < 0.1T} = 0$ to ensure that all dimensions are added with enough generation time remaining for the diffusion process to finalize all state values. To verify this setting, we compare its performance with $\forwardrate_{t<0.03T} = 0$ and $\forwardrate_{t<0.3T} = 0$. We show our results in Table \ref{tab:lowTablation}. We find that the $\forwardrate_{t<0.03T}=0$ setting to generate reasonable sample quality but incur some extra dimension error due to the generative process sometimes observing a lack of dimensions near $t=0$ and adding too many dimensions. We observed the same effect in the paper for when setting $\forwardrate_t$ to be constant for all $t$ in Table \ref{tab:cond_mol}. Further, the setting $\forwardrate_{t<0.3T}=0$ also results in increased dimension error due to there being less opportunity for the guidance model to supervise the number of dimensions. We find that $\forwardrate_{t<0.1T}=0$ to be a reasonable trade-off between these effects.
\begin{table}[h]
\centering
\caption{Ablation of when to set the forward rate to $0$ on the conditional molecule generation task. We report dimension error as the average Hellinger distance between the generated and ground truth conditional dimension distributions as well as average sample quality metrics. Metrics are reported after 620k training iterations.}
\begin{tabular}{@{}lcccc@{}}
\toprule
Method & Dimension  Error & \% Atom stable & \% Molecule Stable & \% Valid \\ \midrule
$ \overrightarrow{\lambda}_{t < 0.03T} = 0$ & $0.227 {\scriptstyle \pm 0.16}$ & $91.5 {\scriptstyle \pm 3.7} $  & $56.5 {\scriptstyle \pm 9.8}$ & $72.0 {\scriptstyle \pm 11}$  \\
$ \overrightarrow{\lambda}_{t < 0.1T} = 0$ & $0.162 {\scriptstyle \pm 0.071}$ & $92.4 {\scriptstyle \pm 2.8}$ & $53.9 {\scriptstyle \pm 12}$ & $72.7 {\scriptstyle \pm 9.6}$ \\
$ \overrightarrow{\lambda}_{t < 0.3T} = 0$ & $0.266 {\scriptstyle \pm 0.11}$ & $92.0 {\scriptstyle \pm 3.2}$ & $53.5 {\scriptstyle \pm 13}$ & $66.6 {\scriptstyle \pm 12}$ \\
\bottomrule
\end{tabular}
\label{tab:lowTablation}
\end{table}

\subsubsection{Uniqueness and Novelty Metrics}
We here investigate sample diversity and novelty of our unconditional generative models. We measure uniqueness by computing the chemical graph corresponding to each generated sample and measure what proportion of the 10000 produced samples have a unique chemical graph amongst this set of 10000 as is done in \cite{hoogeboom2022equivariant}. We show our results in Table \ref{tab:uniqueness_and_novelty} and find our TDDM method to have slightly lower levels of uniqueness when compared to the fixed dimension diffusion model baseline. Measuring novelty on generative models trained on the QM9 dataset is challenging because the QM9 dataset contains an exhaustive enumeration of all molecules that satisfy certain predefined constraints \cite{vignac2021top}, \cite{hoogeboom2022equivariant}. Therefore, if a novel molecule is produced it means the generative model has failed to capture some of the physical properties of the dataset and indeed it is found in \cite{hoogeboom2022equivariant} that during training, as the model improved, novelty decreased. Novelty is therefore not typically included in evaluating molecular diffusion models. For completeness, we include the novelty scores in Table \ref{tab:uniqueness_and_novelty} as a comparison to the results presented in \cite{hoogeboom2022equivariant} Appendix C. We find that our samples are closer to the statistics of the training dataset whilst still producing ‘novel’ samples at a consistent rate.
\begin{table}[h]
     \centering
   \caption{Uniqueness and novelty metrics on unconditional molecule generation. We produce 10000 samples for each method and measure validity using RDKit. Uniquenss is judged as whether the chemical graph is unique amongst the 10000 produced samples. Amongst the valid and unique molecules, we then find the percentage that have a chemical graph not present in the training dataset.}
   \begin{tabular}{@{}lcccc@{}}
     \toprule
     Method & \% Valid  & \% Valid and Unique & \shortstack{Percentage of Valid and Unique \\ Molecules that are Novel } \\ \midrule
     FDDM [8] & $91.9$ & $90.7$ & $65.7$ \\ \midrule
     TDDM (ours) & $92.3$ & $89.9$ & $53.6$ \\
     TDDM, const $\smash{\overrightarrow{\lambda}_t}$ & $86.7$ & $84.4$ & $56.9$ \\
     TDDM, $\overrightarrow{\lambda}_{t<0.9T} = 0$ & $89.4$ & $86.1$ & $51.3$ \\
     TDDM w/o Prop. 3 & $87.1$ & $85.9$ & $63.3$ \\ \bottomrule
   \end{tabular}
   \label{tab:uniqueness_and_novelty}
\end{table}

\subsection{Video}
\subsubsection{Dataset}
We used the VP$^2$ benchmark, which consists of 35\,000 videos, each 35 frames long. The videos are evenly divided among seven tasks, namely: \texttt{push \{red, green, blue\} button, open \{slide, drawer\}, push \{upright block, flat block\} off table}. The 5000 videos for each task were collected using a scripted task-specific policy operating in the RoboDesk environment~\citep{kannan2021robodesk}. They sample an action vector at every step during data generation by adding i.i.d.\ Gaussian noise to each dimension of the action vector output by the scripted policy. For each task, they sample 2500 videos with noise standard deviation 0.1 and 2500 videos with standard deviation 0.2. We filter out the lower-quality trajectories sampled with noise standard deviation 0.2, and so use only the 17\,500 videos (2500) per task with noise standard deviation 0.1. We convert these videos to $32\times32$ resolution and then, so that the data we train on has varying lengths, we create each training example by sampling a length $l$ from a uniform distribution over $\{2,\ldots,35\}$ and then taking a random $l$-frame subset of the video.

\subsubsection{Forward Process}
The video domain differs from molecules in two important ways. The first is that videos cannot be reasonably treated as a permutation-invariant set. This is because the order of the frames matters. Secondly, generating a full new component for the molecules with a single pass autoregressive network is feasible, however, a component for the videos is a full frame which is challenging for a single pass autoregressive network to generate. We design our forward process to overcome these challenges.

We define our forward process to delete frames in a random order. This means that during generation, frames can be generated in any order in the reverse process, enabling more conditioning tasks since we can always ensure that whichever frames we want to condition on are added first. Further, we use a non-isotropic noise schedule by adding noise just to the frame that is about to be deleted. Once it is deleted, we then start noising the next randomly chosen frame. This is so that, in the backward direction, when a new frame is added, it is simply Gaussian noise. Then the score network will fully denoise that new frame before the next new frame is added. We now specify exactly how our forward process is constructed.

We enable random-order deletion by applying an initial shuffling operation occurring at time $t=0$. Before this operation, we represent the video $\x$ as an ordered sequence of frames, $\x_0=[\x_1,\x_2,\ldots,\x_{n_0}]$. During shuffling, we sample a random permutation $\pi$ of the integers $1,\ldots,n_0$. Then the frames are kept in the same order, but annotated with an index variable so that we have $\x_{0^+}=[(\x_{0^+}^{(1)}, \pi(1)), (\x_{0^+}^{(2)}, \pi(2)), \ldots, (\x^{(n_0)}_{0^+}, \pi(n_0))]$.

We will run the forward process from $t=0$ to $t = 100N$. We will set the forward rate such we delete down from $n_t$ to $n_t - 1$ at time $(N-n_t+1)100$. This is achieved heuristically by setting
\begin{equation}
    \forwardrate_t(n_t) = \begin{cases}
        0 & \text{for } t < (N-n_t+1)100, \\
        \infty & \text{for }  t \geq (N-n_t + 1) 100.
    \end{cases} 
\end{equation}
We can see that at time $t = (N - n_t + 1)100$ we will quickly delete down from $n_t$ to $n_t-1$ at which point $\forwardrate_t(n_t)$ will become $0$ thus stopping deletion until the process arrives at the next multiple of $100$ in time. When we hit a deletion event, we delete the frame from $\X_t$ that has the current highest index variable $\pi(n)$. In other words
\begin{equation}
    \delidxdist(i | \X_t) = \begin{cases}
        1 & \text{for } n_t = \x_{t}^{(i)}[2], \\
        0 & \text{otherwise}
    \end{cases}    
\end{equation}
where we use $\x_{t}^{(i)}[2]$ to refer to the shuffle index variable for the $i$th current frame in $\x_{t}$.

We now provide an example progression of the forward deletion process. Assume we have $n_0=4$, $N=5$ and sample a permutation such that $\pi(1)=3, \pi(2)=2, \pi(3)=4$, and $\pi(4)=1$. Initially the state is augmented to include the shuffle index. Then the forward process progresses from $t=0$ to $t=500$ with components being deleted in descending order of the shuffle index
\begin{align}
    \x_{0^+}&=[(\x^{(1)}_{t}, 3), (\x^{(2)}_{t}, 2), (\x^{(3)}_{t}, 4), (\x^{(4)}_{t}, 1)]\\
    \x_{100^+} &= [(\x^{(1)}_{t}, 3), (\x^{(2)}_{t}, 2), (\x^{(3)}_{t}, 4), (\x^{(4)}_{t}, 1)]\\
    \x_{200^+} &= [(\x^{(1)}_{t}, 3), (\x^{(2)}_{t}, 2), (\x^{(4)}_{t}, 1)]\\
    \x_{300^+} &= [(\x^{(2)}_{t}, 2), (\x^{(4)}_{t}, 1)]\\
    \x_{400^+} &= [(\x^{(4)}_{t}, 1)]\\
\end{align}
In this example, due to the random permutation sampled, the final video frame remained after all others had been deleted. Note that the order of frames is preserved as we delete frames in the forward process although the spacing between them can change as we delete frames in the middle.

Between jumps, we use a noising process to add noise to frames. The noising process is non-isotropic in that it adds noise to different frames at different rates such that the a frame is noised only in the time window immediately preceding its deletion. For component $i \in [1, \dots, n_t]$, we set the forward noising process such that $p_{t|0}(\x_t^{(i)} | \x_0^{(i)}, M_t) = \mathcal{N}(\x_t^{(i)}; \x_0^{(i)}, \sigma_t(\x_t^{(i)})^2)$ where $\x_0^{(i)}$ is the clean frame corresponding to $\x_t^{(i)}$ as given by the mask $M_t$ and $\sigma_t(\x_t^{(i)})$ follows
\begin{equation}
    \sigma_t(\x_t^{(i)}) = \begin{cases}
        0 & \text{for } t < (N - \x_t^{(i)}[2]) 100,\\
        100 & \text{for } t > (N - \x_t^{(i)}[2]) 100,\\
        t - (N - \x_t^{(i)}[2])100 & \text{for } (N - \x_t^{(i)}[2])100 \leq t \leq (N - \x_t^{(i)}[2]+1) 100
    \end{cases}
\end{equation}
where we again use $\x_t^{(i)}[2]$ for the shuffle index of component $i$. This is the VE-SDE from \cite{song2020score} applied to each frame in turn. We note that we only add noise to the state values on not the shuffle index itself. The SDE parameters that result in the VE-SDE are $\forwarddrift_t = 0$ and $\forwarddiffcoeff_t = \sqrt{2t - 2(N - \x_t^{(i)}[2])100}$.

\subsubsection{Sampling the Backward Process}

When $t$ is not at a multiple of 100, the forward process is purely adding Gaussian noise, and so the reverse process is also purely operating on the continuous dimensions. We use the Heun sampler proposed by \cite{karraselucidating2022} to update the continuous dimensions in this case, and also a variation of their discretisation of $t$ - specifically to update from e.g. $t=600$ to $t=500$, we use their discretization of $t$ as if the maximum value was 100 and then offset all values by $500$. 

To invert the dimension deletion process, we can use Proposition \ref{prop:backwardrateparam} to derive our reverse dimension generation process. We re-write our parameterized $\backwardrate_t^\theta$ using Proposition \ref{prop:backwardrateparam} as 
\begin{equation}
    \backwardrate_t^\theta(\X_t) = \forwardrate_t(n_t + 1) \mathbb{E}_{p_{0|t}^\theta(n_0 | \X_t)} \left[ \frac{p_{t|0}(n_t + 1 | n_0)}{p_{t|0}(n_t | n_0)}\right]
\end{equation}
At each time multiple of 100 in the backward process, we will have an opportunity to add a component. At this time point, we estimate the expectation with a single sample $n_0 \sim p_{0|t}^\theta(n_0 | \X_t)$. If $n_0 > n_t$ then $\backwardrate_t^\theta(\X_t) = \infty$. The new component will then be added at which point $\backwardrate_t^\theta(\X_t)$ becomes $0$ for the remainder of this block of time due to $n_t$ becoming $n_t+1$. If $n_0 = n_t$ then $\backwardrate_t^\theta(\X_t) = 0$ and no new component is added. $\backwardrate_t^\theta(\X_t)$ will continue to be $0$ for the remainder of the backward process once an opportunity to add a component is not used.

When a new frame is added, we use $\autonet_t^\theta(\yadd, i | \X_t)$ to decide where the frame is added and its initial value. Since when we delete a frame it is fully noised, $\autonet_t^\theta(\yadd, i | \X_t)$ can simply predict Gaussian noise for the new frame $\yadd$. However, $\autonet_t^\theta(\yadd, i | \X_t)$ will still learn to predict a suitable location $i$ to place the new frame such that backward process is the reversal of the forward.

We give an example simulation from the backward generative process in Figure \ref{fig:examplevideoreverse}.

\begin{figure}
    \centering
    \includegraphics[width=6cm]{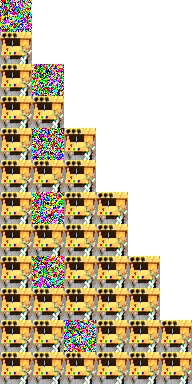}
    \caption{An example simulation of the backward generative process conditioned on the first and last frame. Note how the process first adds a new frame and then fully denoises it before adding the next frame. Since the first and last frame are very similar, the process produces a short video.}
    \label{fig:examplevideoreverse}
\end{figure}

\subsubsection{Network Architecture}
Our video diffusion network architecture is based on the U-net used by \cite{harvey2022flexible}, which takes as input the index of each frame within the video, and uses the differences between these indices to control the interactions between frames via an attention mechanism. Since, during generation, we do not know the final position of each frame within the $\x_0$, we instead pass in its position within the ordered sequence $\x_t$.

One further difference is that, since we are perform non-isotropic diffusion, the standard deviation of the added noise will differ between frames. We adapt to this by performing preconditioning, and inputting the timestep embedding, separately for each frame $\x_t^{(i)}$ based on $\sigma_t(\x_t^{(i)})$ instead of basing them on the global diffusion timestep $t$. Our timestep embedding and pre- and post-conditioning of network inputs/outputs are as suggested by \cite{karraselucidating2022}, other than being done on a per-frame basis. The architecture from \cite{harvey2022flexible} with these changes applied then gives us our score network $s_t^\theta$.

While it would be possible to train a single network that estimates the score and all quantities needed for modelling jumps, we chose to train two separate networks in order to factorize our exploration of the design space. These were the score network $s_t^\theta$, and the rate and index prediction network modeling $p_{0|t}^\theta(n_0 | \X_t)$ and $\autonet_t^\theta(i | \X_t)$. The rate and index prediction network is similar to the first half of the score network, in that it uses all U-net blocks up to and including the middle one. We then flatten the $512\times4\times4$ hidden state for each frame after this block such that, for an $n_t$ frame input, we obtain a $n_t \times 8192$ hidden state. These are fed through a 1D convolution with kernel size $2$ and zero-padding of size $1$ on each end, reducing the hidden state to $(n_t+1) \times 128$, which is in turn fed through a ReLU activation function. This hidden state is then fed into three separate heads. One head maps it to the parameters of $\autonet_t^\theta(i | \X_t)$ via a 1D convolution of kernel size 3. The output of size $(n_t+1)$ is fed through a softmax to provide the categorical distribution $\autonet_t^\theta(i | \X_t)$. The second head averages the hidden state over the ``frame'' dimension, producing a $128$-dimensional vector. This is fed through a single linear layer and a softmax to parameterize $p_{0|t}^\theta(n_0 | \X_t)$. Finally, the third head consists of a 1D convolution of kernel size 3 with 35 output channels. The $(n_t+1)\times35$ output is fed through a softmax to parameterize distributions over the number of frames that were deleted from $\X_0$ which came before the first in $\x_t$, the number of frames from $\X_0$ which were deleted between each pair of frames in $\x_t$, and the number deleted after the last frame in $\x_t$. We do not use this head at inference-time but found that including it improved the performance of the other heads by helping the network learn better representations. 

For a final performance improvement, we note that under our forward process there is only ever one ``noised'' frame in $\x_t$, while there are sometimes many clean frames. Since the cost of running our architecture scales with the number of frames, running it on many clean frames may significantly increase the cost while providing little improvement to performance. We therefore only feed into the architecture the ``noised'' frame, the two closest ``clean'' frames before it, and the two closest ``clean'' frames after it. See our released source code for the full implementation of this architecture.

\subsubsection{Training}
To sample $t$ during training, we adapt the log-normal distribution suggested by \cite{karraselucidating2022} in the context of isotropic diffusion over a single image. To apply it to our non-isotropic video diffusion, we first sample which frames have been deleted, which exist with no noise, and which have had noise added, by sampling the timestep from a uniform distribution and simulating our proposed forward process. We then simply change the noise standard deviation for the noisy frame, replacing it with a sample from the log-normal distribution. The normal distribution underlying our log-normal has mean $-0.6$ and standard deviation $1.8$. 
This can be interpreted as sampling the timestep from a mixture of log-normal distributions, $\frac{1}{N}\sum_{i=0}^{N-1} \mathcal{LN}(t-100i; -0.6, 1.8^2)$. Here, the mixture index $i$ can be interpreted as controlling the number of deleted frames.

We use the same loss weighting as \cite{karraselucidating2022} but, similarly to our use of preconditioning, compute the weighting separately for each frame $\x_t^{(i)}$ as a function of $\sigma_t(\x_t^{(i)})$ to account for the non-isotropic noise.

\subsubsection{Perceptual Quality Metrics}
We now verify that our reverse process does not have any degradation in quality during the generation as more dimensions are added. We generate 10000 videos and throw away the 278 that were sampled to have only two frames. We then compute the FID score for individual frames in each of the remaining 9722 videos. We group together the scores for all the first frames to be generated in the reverse process and then for the second frame to be generated and so on. We show our results in Table \ref{tab:fid-by-insertion-order}. We find that when a frame is inserted has no apparent effect on perceptual quality and conclude that there is no overall degradation in quality as our sampling process progresses. We note that the absolute value of these FID scores may not be meaningful due to the RoboDesk dataset being far out of distribution for the Inception network used to calculate FID scores. We can visually confirm good sample quality from Figure \ref{fig:video_example}.
\begin{table}[h]
\centering
\caption{FID for video frames grouped by when they were inserted during sampling.}
\begin{tabular}{p{1.5cm}p{1.5cm}p{1.5cm}|p{1.5cm}p{1.5cm}p{1.5cm}}
\toprule
1st & 2nd & 3rd & 3rd last & 2nd last & last \\
\midrule
34.2 & 34.9 & 34.7 & 34.2 & 34.1 & 34.4 \\
\bottomrule
\label{tab:fid-by-insertion-order}
\end{tabular}
\end{table}

\section{Broader Impacts}
\label{sec:BroaderImpacts}

In this work, we presented a general method for performing generative modeling on datasets of varying dimensionality. We have not focused on applications and instead present a generic method. Along with other generic methods for generative modeling, we must consider the potential negative social impacts that these models can cause when inappropriately used. As generative modeling capabilities increase, it becomes simpler to generate fake content which can be used to spread misinformation. In addition to this, generative models are becoming embedded into larger systems that then have real effects on society. There will be biases present within the generations created by the model which in turn can reinforce these biases when the model's outputs are used within wider systems. In order to mitigate these harms, applications of generative models to real world problems must be accompanied with studies into their biases and potential ways they can be misused. Further, public releases of models must be accompanied with model cards \cite{mitchell2019model} explaining the biases, limitations and intended uses of the model.


\end{document}